\newsavebox{\algleft}
\newsavebox{\algright}
\DeclareMathOperator*{\argmin}{arg\,min}
\newcommand{\E}[1]{\mathbb{E} \left[#1\right]}
\newtheorem{theorem}{Theorem}
\newtheorem{lemma}{Lemma}
\newtheorem{assumption}{Assumption}
\newtheorem{proposition}{Proposition}
\icmltitlerunning{Debiasing a First-order Heuristic for Approximate Bi-level Optimization}
\begin{document}

\twocolumn[
\icmltitle{Debiasing a First-order Heuristic for Approximate Bi-level Optimization}



\icmlsetsymbol{equal}{*}

\begin{icmlauthorlist}
\icmlauthor{Valerii Likhosherstov}{equal,cam}
\icmlauthor{Xingyou Song}{equal,goo}
\icmlauthor{Krzysztof Choromanski}{goo,col}
\icmlauthor{Jared Davis}{deep,stan}
\icmlauthor{Adrian Weller}{cam,turing}
\end{icmlauthorlist}

\icmlaffiliation{cam}{University of Cambridge}
\icmlaffiliation{goo}{Google Research, Brain Team}
\icmlaffiliation{col}{Columbia University}
\icmlaffiliation{stan}{Stanford University}
\icmlaffiliation{deep}{Deepmind}
\icmlaffiliation{turing}{The Alan Turing Institute}

\icmlcorrespondingauthor{Valerii Likhosherstov}{vl304@cam.ac.uk}

\icmlkeywords{Machine Learning, ICML}

\vskip 0.3in
]



\printAffiliationsAndNotice{\icmlEqualContribution} 

\begin{abstract}
Approximate bi-level optimization (ABLO) consists of (outer-level) optimization problems, involving numerical (inner-level) optimization loops. While ABLO has many applications across deep learning, it suffers from time and memory complexity proportional to the length $r$ of its inner optimization loop. To address this complexity, an earlier \textit{first-order method} (FOM) was proposed as a heuristic that omits second derivative terms, yielding significant speed gains and requiring only constant memory. Despite FOM's popularity, there is a lack of theoretical understanding of its convergence properties. We contribute by theoretically characterizing FOM's gradient bias under mild assumptions. We further demonstrate a rich family of examples where FOM-based SGD does not converge to a stationary point of the ABLO objective. We address this concern by proposing an unbiased FOM (UFOM) enjoying constant memory complexity as a function of $r$. We characterize the introduced time-variance tradeoff, demonstrate convergence bounds, and find an optimal UFOM for a given ABLO problem. Finally, we propose an efficient adaptive UFOM scheme.
\end{abstract}

\section{Introduction}

Bi-level optimization (BLO) is a popular technique, which defines an outer-level objective function using the result of another (inner-level) optimization. Approximate BLO (ABLO), where the inner-level optimization is approximated by an iterative solver, has multiple applications in various subtopics of deep learning: hyperparameter optimization \citep{hyperopt,hyperoptimpl,truncated,revopt}, neural architecture search \citep{darts}, adversarial robustness \citep{advrobbilevel,advrobgraphs} and model-agnostic meta-learning \citep{maml,truncated}.

In deep learning applications of ABLO, the outer-level optimization is conducted via stochastic gradient descent (SGD), where gradients are obtained by auto-differentiation \citep{autograd} through the $r$ steps of the inner gradient descent (GD). This leads to storing $r$ intermediate inner steps, which requires a prohibitive amount of memory.

To address this, several methods for ABLO have been proposed. Among them is truncated back-propagation \citep{truncated} which stores a fixed amount of intermediate steps. Another is implicit differentiation \citep{impl1,impl2,imaml,hyperoptimpl}, which takes advantage of the Implicit Function Theorem, but requires multiple restrictions on the problem \citep{imaml}.
Designing the optimizer's iteration as reversible dynamics \citep{revopt} allows a constant reduction in memory complexity, as only the bits lost in fixed-precision arithmetic operations are saved in memory. Forward differentiation \citep{fordiff} can be employed when outer-level gradients require a small number of parameters.
Checkpointing \citep{logckpt,ckpt} is a generic solution for memory reduction by a factor of $\sqrt{r}$.
Unbiased stochastic truncation \citep{randtelsums} can save time during stochastic optimization, but its maximum memory usage can still grow to $O(r)$.
Evolution Strategies, or ES \citep{esblo}, provides unbiased gradient estimates in $O(1)$ memory, but its variance depends linearly on the number of parameters, which can be prohibitively large in deep-learning applications.

One practical and simple heuristic is the \textit{first-order method} (FOM), which omits second-order computations, and does not store inner-loop rollout states in memory. FOM is widely used in the context of neural architecture search \citep{darts}, adversarial robustness \citep{advrobbilevel,advrobgraphs}, and meta-learning \citep{maml}. FOM is a limiting case of truncated back-propagation, where no states are cached in memory. We argue however, that there is a high 
cost to FOM's computational simplicity: FOM suffers from biased ABLO gradients and can fail to converge to a stationary point of the ABLO objective. Consequently, it would be highly desirable to modify FOM 
to achieve better convergence properties, without harming its $O(1)$ memory complexity.

We propose a solution: a FOM-related algorithm 
with unbiased gradient estimation. To achieve this, we first propose an algorithm to compute ABLO gradients in only $O(1)$ memory. Then we use this algorithm, computed ``rarely'' at random with a probability $q$, to debias FOM and yield convergent training. We call our approach \textit{Unbiased First-Order Method} (UFOM) and highlight the following benefits:

 
1. UFOM has $O(1)$ memory and $O(r + q r^2)$ expected time complexity per outer-loop iteration.


2. Under mild assumptions, we show that UFOM-based SGD converges to a stationary point of the ABLO objective (Th. \ref{th:conv}). Confirming the need for UFOM, we prove that there is a rich family of ABLO problems where \textbf{FOM ends up arbitrarily far away from a stationary point} (Th. \ref{th:counter}).
    
3. We analyze the convergence rate of UFOM as a function of wall-clock time. As a result, we derive a condition on the FOM's bias $\mathbb{D}^2$ and the exact estimator's variance $\mathbb{V}^2$ that allows the optimal $q$ to be less than one, meaning that under the constant memory restriction, \textbf{the exact ABLO has a slower convergence rate}. We then give an expression for the optimal $q^* < 1$.

4. Since the expression for the optimal $q^*$ can be intractable in real applications, we propose a theory-inspired heuristic for choosing $q^*$ adaptively during optimization.

We demonstrate the utility of UFOM in a synthetic experiment, data hypercleaning on MNIST \citep{mnist}, and few-shot-learning on CIFAR100 \citep{cifar100} as well as Omniglot \citep{omniglot}. Full proofs are provided in Appendix \ref{sec:proofs} in the Supplement. We provide additional related work discussions in Appendix \ref{sec:related}.

\section{Preliminaries}

We begin by formulating the bi-level optimization (BLO), approximate BLO problems and the first-order method.

\subsection{Bi-level Optimization}

We present the bi-level optimization (BLO) problem in a form which is compatible with prominent large-scale deep-learning applications. Namely, let $\Omega_\mathcal{T}$ be a non-empty set of \textit{tasks} $\mathcal{T} \in \Omega_\mathcal{T}$ and let $p(\mathcal{T})$ be a probabilistic distribution defined on $\Omega_\mathcal{T}$. In practice, the procedure of sampling a task $\mathcal{T} \sim p(\mathcal{T})$ is usually resource-cheap and consists of retrieving a task from disk using either a random index (for a dataset of a fixed size), a stream, or a simulator. Define two functions as the inner and outer loss respectively: $\mathcal{L}^{in}, \mathcal{L}^{out}: \mathbb{R}^s \times \mathbb{R}^p \times \Omega_\mathcal{T} \to \mathbb{R}$. In addition, define $U: \mathbb{R}^s \times \Omega_\mathcal{T} \to \mathbb{R}^p$ so that $U (\theta, \mathcal{T})$ is a global solution of the inner (task-level) $\mathcal{L}^{in} (\theta, \cdot, \mathcal{T})$ optimization problem. Namely, for all $\theta \in \mathbb{R}^s, \mathcal{T} \in \Omega_\mathcal{T}$: $U (\theta, \mathcal{T}) = \argmin_{\phi \in \mathbb{R}^p} \mathcal{L}^{in} (\theta, \phi, \mathcal{T})$. Then, the \textit{bi-level optimization} is defined as an optimization over $\theta$ to find the optimal average value of $\mathcal{L}^{out} (\theta, U (\theta, \mathcal{T}), \mathcal{T})$:
\begin{equation}
    \!\! \min_\theta  \biggl( \mathcal{M} (\theta) = \mathbb{E}_{p (\mathcal{T})} \left[ \mathcal{L}^{out} (\theta, U (\theta, \mathcal{T}), \mathcal{T}) \right] \biggr) . \label{eq:eopt}
\end{equation}

In practice, however, the global optimum $U (\theta, \mathcal{T})$ is usually intractable and needs to be approximated with a few gradient descent (GD) iterations:
\begin{gather}
    \forall r \in \mathbb{N}: U^{(r)} (\theta, \mathcal{T}) = \phi_r, \quad  \phi_0 = V (\theta, \mathcal{T}), \label{eq:maml1} \\
    \forall 1 \leq j \leq r: \phi_j = \phi_{j - 1} - \alpha_j \frac{\partial}{\partial \phi} \mathcal{L}^{in} (\theta, \phi_{j - 1}, \mathcal{T}), \label{eq:maml}
\end{gather}
where further in the text, $\frac{\partial}{\partial \square}$ denotes a partial derivative of a function with respect to the corresponding argument, while $\nabla_\square$ denotes a full gradient. $V: \mathbb{R}^s \times \Omega_\mathcal{T} \to \mathbb{R}^p$ defines  \textit{a starting point} of the GD, $r$ is a number of GD iterations and $\{ \alpha_j > 0 \}_{j = 1}^\infty$ is a sequence of step sizes. Then the \textit{approximate BLO (ABLO) with $r$ GD steps} is defined as finding an optimal average $\mathcal{L}^{out} (\theta, U^{(r)} (\theta, \mathcal{T}), \mathcal{T})$:
\begin{gather}
    \min_\theta \biggl( \mathcal{M}^{(r)} (\theta) = \mathbb{E}_{p (\mathcal{T})} \left[ \mathcal{L}^{out} (\theta, U^{(r)} (\theta, \mathcal{T}), \mathcal{T}) \right] \biggr) . \label{eq:opt}
\end{gather}

The formulations (\ref{eq:maml1}-\ref{eq:opt}) unify various application scenarios. We can think of $\phi$ as model parameters and (\ref{eq:maml}) as a training loop, while $\theta$ are hyperparameters optimized in the outer loop -- a scenario matching the hyperparameter optimization paradigm \citep{hyperopt,hyperoptimpl,truncated,revopt}. Alternatively, $\theta$ may act as parameters encoding a neural-network's topology which corresponds to differentiable neural architecture search \citep{darts}. Here $\mathcal{T}$ encodes a minibatch drawn from a dataset. The adversarial robustness problem fits into (\ref{eq:maml}-\ref{eq:opt}) by assuming that $\theta$ are learned model parameters and $\phi$ is an input perturbation optimized through (\ref{eq:maml}) to decrease the model's performance. \textit{Model-Agnostic Meta-Learning} (MAML) \citep{maml} is an approach to few-shot learning (adaptation to a few training examples) by learning a generalizable initialization $\phi_0 = V(\theta, \mathcal{T}) = \theta$, where $\mathcal{L}^{in}$ and $\mathcal{L}^{out}$ correspond to an empirical risk on a small set of train and validation shots respectively.

We consider stochastic gradient descent (SGD) \citep{bottou} as a solver for (\ref{eq:opt}) -- see Algorithm \ref{alg:sgd}, where $\mathcal{G} (\theta, \mathcal{T})$ is the exact gradient $\nabla_\theta \mathcal{L}^{out} (U^{(r)} (\theta, \mathcal{T}), \mathcal{T})$ or its approximation and $\{ \gamma_k > 0 \}_{k = 1}^\infty$ is a sequence of outer-loop step sizes satisfying
\begin{equation} \label{eq:step}
    \sum_{k = 1}^\infty \gamma_k = \infty, \quad \lim_{k \to \infty} \gamma_k = 0.
\end{equation}

Assuming that the procedure for sampling tasks $\mathcal{T} \sim p(\mathcal{T})$ takes negligible resources, the time and memory requirement of Algorithm \ref{alg:sgd} is dominated by the part spent for computing $\mathcal{G} (\theta, \mathcal{T})$. Therefore, in our subsequent derivations, we analyse the computational complexity of finding the gradient estimate $\mathcal{G} (\theta, \mathcal{T})$.

\begin{algorithm}[tb]
   \caption{Outer SGD.}
   \label{alg:sgd}
\begin{algorithmic}
   \STATE {\bfseries Input:} $\theta_0 \in \mathbb{R}^s$, \# iterations $\tau$.
   \FOR{$k = 1$ {\bfseries to} $\tau$}
   \STATE Draw $\mathcal{T} \sim p (\mathcal{T})$;
   \STATE Set $\theta_k := \theta_{k - 1} - \gamma_k \mathcal{G} (\theta_{k - 1}, \mathcal{T})$;
   \ENDFOR
\end{algorithmic}
\end{algorithm}

\subsection{Exact ABLO gradients} \label{sec:maml}

Outer SGD requires the computation or approximation of a gradient $\nabla_\theta \mathcal{L}^{out} (\theta, U^{(r)} (\theta, \mathcal{T}), \mathcal{T})$. We apply the full derivative rule with respect to $\theta$ to the inner GD (\ref{eq:maml}) and deduce that
\begin{gather}
    \nabla_\theta \mathcal{L}^{out} (\theta, \phi_r, \mathcal{T}) = \frac{\partial}{\partial \theta} \mathcal{L}^{out} (\theta, \phi_r, \mathcal{T}) + \sum_{j = 1}^r D_j \label{eq:gradfirst} \\
    + \biggl( \frac{\partial}{\partial \theta} V(\theta, \mathcal{T}) \biggr)^\top \nabla_{\phi_0} \mathcal{L}^{out} (\theta, \phi_r, \mathcal{T}), \nonumber \\
    D_j = \biggl( \frac{\partial}{\partial \theta} ( \phi_{j - 1} - \alpha_j \frac{\partial}{\partial \phi} \mathcal{L}^{in} (\theta, \phi_{j - 1}, \mathcal{T}) ) \biggr)^\top \nonumber \\
    \times \nabla_{\phi_j} \mathcal{L}^{out} (\theta, \phi_r, \mathcal{T}) \nonumber \\
    = - \alpha_j \biggl( \frac{\partial^2}{\partial \theta \partial \phi} \mathcal{L}^{in} (\theta, \phi_{j - 1}, \! \mathcal{T}) \! \biggr)^\top \!\! \nabla_{\phi_j} \mathcal{L}^{out} (\theta, \phi_r, \! \mathcal{T}), \label{eq:gradpremid}
\end{gather}
where $\frac{\partial^2}{\partial \theta \partial \phi} \mathcal{L}^{in} (\theta, \phi_{j - 1}, \mathcal{T}) \in \mathbb{R}^{s \times p}$ is a second-order partial derivative matrix.
Next, we observe that
\begin{equation*}
    \nabla_{\phi_r} \mathcal{L}^{out} (\theta, \phi_r, \mathcal{T}) = \frac{\partial}{\partial \phi} \mathcal{L}^{out} (\theta, \phi_r, \mathcal{T})
\end{equation*}
and apply the chain rule to a sequence of states $\phi_1, \dots, \phi_r$ to deduce that for all $1 \leq j \leq r$,
\begin{gather}
    \nabla_{\phi_{j - 1}} \mathcal{L}^{out} (\theta, \phi_r, \mathcal{T}) = \biggl( \frac{\partial}{\partial \phi_{j - 1}}(\phi_{j - 1}  \label{eq:gradmid} \\
    - \alpha_j \frac{\partial}{\partial \phi} \mathcal{L}^{in} (\theta, \phi_{j - 1}, \! \mathcal{T})) \biggr)^\top \nabla_{\phi_j} \mathcal{L}^{out} (\theta, \phi_r, \! \mathcal{T}) \nonumber \\
    \!\!\! = \! \biggl( \! I \! - \! \alpha_j \frac{\partial^2}{\partial \phi^2} \mathcal{L}^{in} (\theta, \phi_{j - 1}, \! \mathcal{T}) \! \biggr)^\top \nabla_{\phi_j} \mathcal{L}^{out} (\theta, \phi_r, \! \mathcal{T}) , \label{eq:gradlast}
\end{gather}
where $\frac{\partial^2}{\partial \phi^2} \mathcal{L}^{in} (\theta, \phi_{r - 1}, \mathcal{T}) \in \mathbb{R}^{p \times p}$ is a second-order partial derivative matrix. Based on (\ref{eq:gradfirst}-\ref{eq:gradlast}), Algorithm \ref{alg:maml} illustrates inner loop for the exact ABLO gradient, which, together with Algorithm \ref{alg:sgd}, outlines the training procedure.

Hessian-vector products $( \frac{\partial^2}{\partial \theta \partial \phi} \mathcal{L}^{in} (\theta, \phi, \mathcal{T}) )^\top b_2$ and $( \frac{\partial^2}{\partial \phi^2} \mathcal{L}^{in} (\theta, \phi, \mathcal{T}) )^\top b_2$ can be computed efficiently at once using the reverse accumulation technique \citep{hessian} without explicitly constructing Hessian matrices $\frac{\partial^2}{\partial \theta \partial \phi} \mathcal{L}^{in} (\theta, \phi, \mathcal{T})$ and $\frac{\partial^2}{\partial \phi^2} \mathcal{L}^{in} (\theta, \phi, \mathcal{T})$. This technique consists of evaluation and automatic differentiation \citep{autograd} of a functional $H (\theta, \phi): \mathbb{R}^s \times \mathbb{R}^p \to \mathbb{R}$, $H(\theta, \phi) = \frac{\partial}{\partial \phi} \mathcal{L}^{in} (\theta, \phi, \mathcal{T})^\top b_2$, where $\frac{\partial}{\partial \phi} \mathcal{L}^{in} (\theta, \phi, \mathcal{T})$ is produced by another automatic differentiation. The result of differentiation is precisely $(\frac{\partial^2}{\partial \theta \partial \phi} \mathcal{L}^{in} (\theta, \phi, \mathcal{T}) )^\top b_2$ and $( \frac{\partial^2}{\partial \phi^2} \mathcal{L}^{in} (\theta, \phi, \mathcal{T}) )^\top b_2$.
Similarly, the Jacobian-vector product $(\frac{\partial}{\partial \theta} V (\theta, \mathcal{T}))^\top b_2$ can be computed by auto-differentiating through $V (\theta, \mathcal{T})^\top b_2$. According to \textit{the Cheap Gradient Principle} \citep{autograd}, the time complexities of the Hessian-vector product and the Jacobian-vector product are the same as the time complexities respectively, of computations $\frac{\partial}{\partial \phi} \mathcal{L}^{in} (\theta, \phi, \mathcal{T})$ and $V (\theta, \mathcal{T})$.

\begin{algorithm}[tb]
   \caption{Inner GD (exact).}
   \label{alg:maml}
\begin{algorithmic}
   \STATE {\bfseries Input:} $\theta \in \mathbb{R}^s$, $r \in \mathbb{N}$, $\mathcal{T}$.
   \STATE {\bfseries Output:} $\mathcal{G} (\theta, \mathcal{T}) = \nabla_\theta \mathcal{L}^{out} (\theta, U^{(r)} (\theta, \mathcal{T}), \mathcal{T})$.
   \STATE New array $\textrm{Arr}[1..r]$;
   \STATE Set $\phi := V(\theta, \mathcal{T})$;
   \FOR{$j = 1$ {\bfseries to} $r$}
   \STATE Set $\textrm{Arr} [j] := \phi$;
   \STATE $\phi := \phi - \alpha_j \frac{\partial}{\partial \phi} \mathcal{L}^{in} (\theta, \phi, \mathcal{T})$;
   \ENDFOR
   \STATE Set $b_1 := \frac{\partial}{\partial \theta} \mathcal{L}^{out} (\theta, \phi, \mathcal{T}), b_2 := \frac{\partial}{\partial \phi} \mathcal{L}^{out} (\theta, \phi, \mathcal{T})$;
   \FOR{$j = r$ {\bfseries to} $1$}
   \STATE Set $\phi := \textrm{Arr} [j]$;
   \STATE Set $b_1 := b_1 - \alpha_j ( \frac{\partial^2}{\partial \theta \partial \phi} \mathcal{L}^{in} (\theta, \phi, \mathcal{T}) )^\top b_2$;
   \STATE Set $b_2 := b_2 - \alpha_j ( \frac{\partial^2}{\partial \phi^2} \mathcal{L}^{in} (\theta, \phi, \mathcal{T}) )^\top b_2$;
   \ENDFOR
   \STATE {\bfseries return} $b_1 + (\frac{\partial}{\partial \theta} V (\theta, \mathcal{T}))^\top b_2$;
\end{algorithmic}
\end{algorithm}

\begin{algorithm}[tb]
   \caption{Inner GD (FOM).}
   \label{alg:fomaml}
\begin{algorithmic}
   \STATE {\bfseries Input:} $\theta \in \mathbb{R}^s$, $r \in \mathbb{N}$, $\mathcal{T}$.
   \STATE {\bfseries Output:} $\mathcal{G}_{FO} (\theta, \mathcal{T})$.
   \STATE Set $\phi := V(\theta, \mathcal{T})$;
   \FOR{$j = 1$ {\bfseries to} $r$}
   \STATE $\phi := \phi - \alpha_j \frac{\partial}{\partial \phi} \mathcal{L}^{in} (\theta, \phi, \mathcal{T})$;
   \ENDFOR
   \STATE Set $b_1 := \frac{\partial}{\partial \theta} \mathcal{L}^{out} (\theta, \phi, \mathcal{T}), b_2 := \frac{\partial}{\partial \phi} \mathcal{L}^{out} (\theta, \phi, \mathcal{T})$;

   \STATE {\bfseries return} $b_1 + (\frac{\partial}{\partial \theta} V (\theta, \mathcal{T}))^\top b_2$;
\end{algorithmic}
\end{algorithm}

\subsection{First-order heuristic for ABLO}

One limitation, which can significantly complicate application of ABLO in real-life scenarios when both the number of parameters $p$ and the number of gradient descent iterations $r$ are large, is the requirement to allocate an array $\mathrm{Arr}[1..r]$ for all intermediate states $\phi_1, \dots, \phi_r \in \mathbb{R}^p$ (Algorithm \ref{alg:maml}).

A simple improvement which sometimes performs well in practice is the \textit{first-order method} (FOM) \citep{maml,darts,advrobbilevel,advrobgraphs}; this proposes to approximate $\nabla_\theta \mathcal{L}^{out} (\theta, U^{(r)} (\theta, \mathcal{T}), \mathcal{T})$ with
\begin{gather}
    \mathcal{G}_{FO} (\theta, \mathcal{T}) = \frac{\partial}{\partial \theta} \mathcal{L}^{out} (\theta, \! \phi_r, \! \mathcal{T}) + \left(\frac{\partial}{\partial \theta} V (\theta, \mathcal{T})\right)^\top \label{eq:fo1} \\
    \times \frac{\partial}{\partial \phi} \mathcal{L}^{out} (\theta, \phi_r, \mathcal{T}), \quad \phi_r = U^{(r)} (\theta, \mathcal{T}), \label{eq:fo2}
\end{gather}
which corresponds to zeroing out Hessians $( \frac{\partial^2}{\partial \theta \partial \phi} \mathcal{L}^{in} (\theta, \phi_{j - 1}, \mathcal{T}) )^\top b_2$ and $( \frac{\partial^2}{\partial \phi^2} \mathcal{L}^{in} (\theta, \phi_{j - 1}, \mathcal{T}) )^\top b_2$ in Equations (\ref{eq:gradfirst}-\ref{eq:gradlast}). FOM (Algorithm \ref{alg:fomaml}) does not store any intermediate states, giving $O(1)$ memory complexity and $O(r)$ time complexity. Although FOM enjoys better memory complexity, it can fail to converge to a stationary point of the ABLO objective (\ref{eq:opt}), see 
Section \ref{sec:theory}. In the next section we propose a solution -- modified FOM with the desired convergence properties due to unbiased gradients.

\section{Unbiased First-Order Method (UFOM)} \label{sec:prop}

An alternative, memory-efficient method for finding $\nabla_\theta \mathcal{L}^{out} (\theta, U^{(r)} (\theta, \mathcal{T}), \mathcal{T})$ is to recompute each $\phi_j$ from scratch instead of retrieving it from the array $\mathrm{Arr}$ during the backward loop of Algorithm \ref{alg:maml}. However, this would result in $O(r^2)$ complexity, as re-computation requires an $O(r)$-long nested loop inside each $j = r, \dots, 1$ iteration. Therefore, we propose an unbiased stochastic combination of this slow but memory-efficient exact algorithm, and a fast and cheap, but biased FOM-based estimator.

Let $\xi \in \{ 0, 1 \}$ be a Bernoulli random variable with $\mathbb{P} (\xi = 1) = q$, which we write as $\xi \sim \mathrm{Bernoulli} (q)$, where $q \in (0, 1]$. Recall that $\mathcal{G}_{FO}$ is the first-order gradient from Algorithm \ref{alg:fomaml}. We consider the following stochastic approximation to $\nabla_\theta \mathcal{L}^{out} (\theta, U^{(r)} (\theta, \mathcal{T}), \mathcal{T})$:
\begin{gather}
    \mathcal{G}_{UFO} (\theta, \mathcal{T}) = \mathcal{G}_{FO} (\theta, \mathcal{T}) \label{eq:unb1} \\
    + \frac{\xi}{q} (\nabla_\theta \mathcal{L}^{out} (\theta, U^{(r)} (\theta, \mathcal{T}), \mathcal{T}) - \mathcal{G}_{FO} (\theta, \mathcal{T})) . \label{eq:unb2}
\end{gather}
In fact, (\ref{eq:unb1}-\ref{eq:unb2}) is an unbiased estimate of $\nabla_\theta \mathcal{L}^{out} (\theta, U^{(r)} (\theta, \mathcal{T}), \mathcal{T})$. Indeed, since $\mathbb{E}_\xi \left[\xi\right] = q$:
\begin{gather*}
    \mathbb{E}_\xi \left[\mathcal{G}_{UFO} (\theta, \mathcal{T}) \right] = (1 - \frac{q}{q}) \mathcal{G}_{FO} (\theta, \mathcal{T}) \\
    + \frac{q}{q} \nabla_\theta \mathcal{L}^{out} (\theta, U^{(r)} (\theta, \! \mathcal{T}), \! \mathcal{T}) \! = \! \nabla_\theta \mathcal{L}^{out} (\theta, U^{(r)} (\theta, \! \mathcal{T}), \! \mathcal{T}) .
\end{gather*}
For this reason we call the estimate (\ref{eq:unb1}-\ref{eq:unb2}) an \textit{unbiased first-order method} (UFOM). Algorithm \ref{alg:unbiased} illustrates randomized computation of UFOM. It combines FOM and memory-efficient computation of the exact gradient. Hence, UFOM has constant memory complexity and $O(r + q r^2)$ expected time complexity:
\begin{proposition} \label{th:ufom}
Algorithm \ref{alg:unbiased} requires $r + 1 + q r (r - 1) / 2$ expected evaluations of the gradient $\frac{\partial}{\partial \phi} \mathcal{L}^\square (\theta, \phi, \mathcal{T})$, $\square \in \{ in, out \}$ and $q r$ expected evaluations of the Hessian-vector product $\left[ ( \frac{\partial^2}{\partial \theta \partial \phi} \mathcal{L}^{in} (\theta, \phi, \mathcal{T}) )^\top b_2, ( \frac{\partial^2}{\partial \phi^2} \mathcal{L}^{in} (\theta, \phi, \mathcal{T}) )^\top b_2 \right]$.
\end{proposition}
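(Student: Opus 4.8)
The plan is a direct operation-counting argument over Algorithm~\ref{alg:unbiased}, exploiting the fact that it \emph{shares} the forward inner-GD rollout between the FOM branch and the rare exact-correction branch, so that only the backward correction loop is gated by the Bernoulli variable $\xi$. Throughout I only tally the two quantities in the statement: evaluations of $\frac{\partial}{\partial \phi} \mathcal{L}^{\square}(\theta, \phi, \mathcal{T})$ for $\square \in \{in, out\}$, and evaluations of the combined Hessian-vector product $[(\frac{\partial^2}{\partial \theta \partial \phi} \mathcal{L}^{in})^\top b_2,\ (\frac{\partial^2}{\partial \phi^2} \mathcal{L}^{in})^\top b_2]$; all other operations (evaluating $V(\theta, \mathcal{T})$, $\frac{\partial}{\partial \theta} \mathcal{L}^{out}$, and the Jacobian-vector product with $\frac{\partial}{\partial \theta} V$) are of a different type and are not counted.

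First I would account for the part that is always executed, namely the FOM computation of Algorithm~\ref{alg:fomaml}: the forward loop performs one evaluation of $\frac{\partial}{\partial \phi} \mathcal{L}^{in}(\theta, \phi, \mathcal{T})$ per iteration $j = 1, \dots, r$, for a total of $r$; then the initialization of $b_2$ contributes a single evaluation of $\frac{\partial}{\partial \phi} \mathcal{L}^{out}(\theta, \phi_r, \mathcal{T})$. Hence this branch costs exactly $r + 1$ evaluations of $\frac{\partial}{\partial \phi} \mathcal{L}^{\square}$ and zero Hessian-vector products, deterministically.

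Next I would count the exact-correction branch, which is run iff $\xi = 1$ (probability $q$). To assemble $\nabla_\theta \mathcal{L}^{out}(\theta, U^{(r)}(\theta, \mathcal{T}), \mathcal{T}) - \mathcal{G}_{FO}(\theta, \mathcal{T})$ in $O(1)$ memory, the backward loop $j = r, \dots, 1$ replays the inner dynamics (\ref{eq:maml1}-\ref{eq:maml}): at backward step $j$ the state $\phi_{j-1}$ is recomputed from scratch starting from $\phi_0 = V(\theta, \mathcal{T})$, which costs $j - 1$ evaluations of $\frac{\partial}{\partial \phi} \mathcal{L}^{in}$; summing over $j$ gives $\sum_{j=1}^{r} (j - 1) = r(r-1)/2$. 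Each of the $r$ backward iterations then performs exactly one combined Hessian-vector product, as in the updates of $b_1, b_2$ in Algorithm~\ref{alg:maml}, and reuses the $b_1, b_2$ already produced by the shared initialization, so no additional $\frac{\partial}{\partial \phi} \mathcal{L}^{out}$ evaluation is incurred in this branch.

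Finally I would take expectations over $\xi \sim \mathrm{Bernoulli}(q)$ using $\mathbb{E}[\xi] = q$ and linearity: the $\frac{\partial}{\partial \phi} \mathcal{L}^{\square}$ count becomes $r + 1 + q\, r(r-1)/2$ and the Hessian-vector-product count becomes $q\, r$, which is exactly the claim. The argument is essentially bookkeeping; the only point that needs care — and where one must appeal to the precise construction of Algorithm~\ref{alg:unbiased} — is that the $O(1)$-memory correction recomputes each $\phi_{j-1}$ \emph{afresh from $\phi_0$} rather than caching partial rollouts (this is what produces the $r(r-1)/2$ term, as opposed to a smaller constant), and that the forward pass is executed once and not duplicated when $\xi = 1$.
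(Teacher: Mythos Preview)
Your counting argument is correct and is exactly the natural approach: the paper does not supply an explicit proof of Proposition~\ref{th:ufom}, treating it as immediate from inspecting Algorithm~\ref{alg:unbiased}. Your decomposition into the deterministic forward pass ($r+1$ gradient calls) and the $\xi$-gated backward loop (contributing $\sum_{j=1}^r (j-1)=r(r-1)/2$ gradient calls and $r$ Hessian-vector products, each with probability $q$) matches the algorithm line by line, so there is nothing to add.
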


Observe that, when $q = 1$, Algorithm \ref{alg:unbiased} is reduced to the \textit{memory-efficient exact gradient estimator}, discussed above.


\begin{algorithm}[tb]
   \caption{Inner GD (UFOM).}
   \label{alg:unbiased}
\begin{algorithmic}
   \STATE {\bfseries Input:} $\theta \in \mathbb{R}^s$, $r \in \mathbb{N}$, $\mathcal{T}$, $q \in (0, 1]$.
   \STATE {\bfseries Output:} $\mathcal{G}_{UFO} (\theta, \mathcal{T})$.
   \STATE Set $\phi := V(\theta, \mathcal{T})$;
   \FOR{$j = 1$ {\bfseries to} $r$}
   \STATE $\phi := \phi - \alpha_j \frac{\partial}{\partial \phi} \mathcal{L}^{in} (\theta, \phi, \mathcal{T})$;
   \ENDFOR
   \STATE Set $b_1 := \frac{\partial}{\partial \theta} \mathcal{L}^{out} (\theta, \phi, \mathcal{T}), b_2 := \frac{\partial}{\partial \phi} \mathcal{L}^{out} (\theta, \phi, \mathcal{T})$;
   \STATE Set $b_{FO} := b_1 + (\frac{\partial}{\partial \theta} V (\theta, \mathcal{T}))^\top b_2$;
   \STATE Draw $\xi \sim \mathrm{Bernoulli} (q)$;
   \STATE \algorithmicif \, $\xi = 0$ \algorithmicthen \, {\bfseries return} $b_{FO}$ \algorithmicend\,\algorithmicif

   \FOR{$j = r$ {\bfseries to} $1$}
   \STATE Set $\phi = V(\theta, \mathcal{T})$;
   \FOR{$j_1 = 1$ {\bfseries to} $j - 1$}
   \STATE Set $\phi := \phi - \alpha_j \frac{\partial}{\partial \phi} \mathcal{L}^{in} (\theta, \phi, \mathcal{T})$;
   \ENDFOR
   \STATE Set $b_1 := b_1 - \alpha_j ( \frac{\partial^2}{\partial \theta \partial \phi} \mathcal{L}^{in} (\theta, \phi, \mathcal{T}) )^\top b_2$;
   \STATE  Set $b_2 := b_2 - \alpha_j ( \frac{\partial^2}{\partial \phi^2} \mathcal{L}^{in} (\theta, \phi, \mathcal{T}) )^\top b_2$;
   \ENDFOR
   \STATE Set $b_{Exact} = b_1 + (\frac{\partial}{\partial \theta} V (\theta, \mathcal{T}))^\top b_2$;
   \STATE {\bfseries return} $b_{FO} + \frac{1}{q} \left( b_{Exact} - b_{FO} \right)$;
\end{algorithmic}
\end{algorithm}

\section{Theoretical Results} \label{sec:theory}

We perform a theoretical analysis of the FOM and UFOM. First, we establish assumptions on the problem. Then, we characterize bias of the FOM approximation and variance of the UFOM. Next, we prove the convergence rate bounds for UFOM and show divergence of the FOM. Finally, we use these obtained bounds to select the optimal value of $q$ and propose a heuristical adaptive scheme.

\subsection{Assumptions}


We first formulate Assumptions \ref{as:liphes} and \ref{as:mreg} used for proofs. The first assumption requires uniformly bounded, uniformly Lipschitz-continuous gradients and Hessians of $\mathcal{L}^{in}$, $\mathcal{L}^{out}$.
\begin{assumption} \label{as:liphes}
    For any $\mathcal{T} \in \Omega_\mathcal{T}$, $\mathcal{L}^{in} (\theta, \phi, \mathcal{T})$, $\mathcal{L}^{out} (\theta, \phi, \mathcal{T})$ are twice differentiable as functions of $\theta, \phi$ and $V (\theta, \mathcal{T})$ is differentiable as a function of $\theta$. There exist $M_1, M_2, L_1, L_2, L_3 > 0$ such that for any $\mathcal{T} \in \Omega_\mathcal{T}, \theta', \theta'' \in \mathbb{R}^s, \phi', \phi'' \in \mathbb{R}^p, \square \in \{ in, out \}$
    \begin{gather*}
        \| \frac{\partial}{\partial \theta} V (\theta', \mathcal{T}) \|_2 \leq M_1, \\
        \| \frac{\partial}{\partial \theta} V (\theta', \mathcal{T}) - \frac{\partial}{\partial \theta} V (\theta'', \mathcal{T}) \|_2 \leq M_2 \| \theta' - \theta'' \|_2, \\
        \max ( \| \frac{\partial}{\partial \theta} \mathcal{L}^\square (\theta', \phi', \mathcal{T}) \|_2, \| \frac{\partial}{\partial \phi} \mathcal{L}^\square (\theta', \phi', \mathcal{T}) \|_2 ) \leq L_1, \\
        \max ( \| \frac{\partial}{\partial \theta} \mathcal{L}^\square (\theta', \phi', \mathcal{T}) - \frac{\partial}{\partial \theta} \mathcal{L}^\square (\theta'', \phi'', \mathcal{T}) \|_2, \\
        \| \frac{\partial}{\partial \phi} \mathcal{L}^\square (\theta', \phi', \mathcal{T}) - \frac{\partial}{\partial \phi} \mathcal{L}^\square (\theta'', \phi'', \mathcal{T}) \|_2 ) \\
        \leq L_2 \| \theta' - \theta'' \|_2 + L_2 \| \phi' - \phi'' \|_2, \\
        \max ( \| \frac{\partial^2}{\partial \theta \partial \phi} \mathcal{L}^{in} (\theta', \phi', \mathcal{T}) - \frac{\partial^2}{\partial \theta \partial \phi} \mathcal{L}^{in} (\theta'', \phi'', \mathcal{T}) \|_2, \\
        \| \frac{\partial^2}{\partial \phi^2} \mathcal{L}^{in} (\theta', \phi', \mathcal{T}) - \frac{\partial^2}{\partial \phi^2} \mathcal{L}^{in} (\theta'', \phi'', \mathcal{T}) \|_2 ) \\
        \leq L_3 \| \theta' - \theta'' \|_2 + L_3 \| \phi' - \phi'' \|_2 .
    \end{gather*}
\end{assumption}

The next assumption is satisfied in particular when $p (\mathcal{T})$ is defined on a finite set of tasks $\mathcal{T}$ and $\mathcal{L}^{out}$ is lower-bounded.
\begin{assumption}[Regularity of $\mathcal{M}^{(r)}$] \label{as:mreg}
    For each $\theta \in \mathbb{R}^s$, the terms $\mathcal{M}^{(r)} (\theta)$, $\frac{\partial}{\partial \theta} \mathcal{M}^{(r)} (\theta)$, $\mathbb{E}_{p (\mathcal{T})} \left[\nabla_\theta \mathcal{L}^{out} (\theta, U^{(r)} (\theta, \mathcal{T}), \mathcal{T}) \right]$ are well-defined and $\frac{\partial}{\partial \theta} \mathcal{M}^{(r)} (\theta) = \mathbb{E}_{p (\mathcal{T})} \left[\nabla_\theta \mathcal{L}^{out} (\theta, U^{(r)} (\theta, \mathcal{T}), \mathcal{T})\right]$. Let $\mathcal{M}^{(r)}_* = \inf_{\theta \in \mathbb{R}^s} \mathcal{M}^{(r)} (\theta)$, then $\mathcal{M}^{(r)}_* > - \infty$.
\end{assumption}

\subsection{Bias of FOM and Variance of UFOM} \label{sec:biasvar}

FOM provides a cheap but biased estimate of the exact gradient $\nabla_\theta \mathcal{L}^{out}$. 
To characterize this bias, and its effect on the convergence 
of UFOM, we define 
$\mathbb{D}^2$ as: 
\begin{gather}
    \!\! \sup_{\theta \in \mathbb{R}^s} \mathbb{E}_{\mathcal{T}} [ \| \mathcal{G}_{FO} (\theta, \mathcal{T}) \! -  \! \nabla_\theta \mathcal{L}^{out} (\theta, \! U^{(r)} (\theta, \! \mathcal{T}), \mathcal{T}) \|_2^2 ] .\label{eq:dtruedef}
\end{gather}
Intuitively, $\mathbb{D}^2$ is the maximal average bias over all $\theta$ values. Under Assumptions \ref{as:liphes} and \ref{as:mreg}, Lemma \ref{lemma:unbbnd} in Appendix \ref{sec:proofs} establishes the bound $\mathbb{D}^2 \leq \mathbb{D}_{bound}^2$, where
\begin{equation}
    \mathbb{D}_{bound} = (1 + M_1) L_1 L_2 \sum_{j = 1}^r \alpha_j \prod_{j' = j}^r (1 + \alpha_{j'} L_2) \label{eq:ddef} .
\end{equation}
The ability to derive the bound (\ref{eq:ddef}) and memory-efficiency \textbf{justify the use of FOM gradient proxies in the debiasing scheme} (Algorithm \ref{alg:unbiased}), as opposed to any other gradient approximations. Intuitively, when $\mathbb{D}^2$ is small -- or sufficiently, its tractable proxy $\mathbb{D}_{bound}^2$ is small -- the variance of the UFOM-based gradient estimate is reduced. This agrees with our theoretical findings: in Lemma \ref{lemma:unbbnd} we show that
\begin{equation*}
    \sup_{\theta \in \mathbb{R}^s} \mathbb{E}_{p(\mathcal{T})} \left[\| \mathcal{G}_{UFO} (\theta, \mathcal{T}) \|_2^2 \right] \leq (\frac{1}{q} - 1) \mathbb{D}^2 + \mathbb{V}^2 ,
\end{equation*}
where the left-hand side characterizes the variance of UFOM and $\mathbb{V}^2$ is the maximal variance of the exact ABLO gradient:
\begin{equation}
    \mathbb{V}^2 \! = \! \sup_{\theta \in \mathbb{R}^s} \mathbb{E}_{p (\mathcal{T})} \! \left[ \| \nabla_\theta \mathcal{L}^{out} (\theta, U^{(r)} (\theta, \mathcal{T}), \mathcal{T}) \|_2^2 \right]. \label{eq:vtruedef}
\end{equation}
The upper bound on the variance of UFOM approaches $\mathbb{V}^2$ if either $\mathbb{D}^2$ approaches $0$, or $q$ approaches $1$. 

\subsection{Convergence of UFOM and Divergence of FOM}

We analyze FOM and UFOM as algorithms which seek a stationary point of the ABLO objective (\ref{eq:opt}), i.e. a point $\theta^* \in \mathbb{R}^s$ such that $\frac{\partial}{\partial \theta} \mathcal{M}^{(r)} (\theta^*) = \mathbf{0}_s$ (a vector of $s$ zeros). Motivated by searching for $\theta^*$, we prove a standard result for stochastic optimization of nonconvex functions \citep[Section 4.3]{bottou}:
\begin{equation} \label{eq:liminf}
    \liminf_{k \to \infty} \E{ \| \frac{\partial}{\partial \theta} \mathcal{M}^{(r)} (\theta_k) \|_2^2} = 0 ,
\end{equation}
where $\theta_k$ are iterates of SGD with UFOM gradient estimation. Intuitively, equation (\ref{eq:liminf}) implies that there exist iterates of UFOM which approach some stationary point $\theta^*$ up to any level of proximity. Proofs are in Appendix \ref{sec:proofs}.
\begin{theorem}[Convergence of UFOM] \label{th:conv}
Let $p, r, s \in \mathbb{N}$, $\{ \alpha_j > 0 \}_{j = 1}^\infty$ and $\{ \gamma_k > 0 \}_{k = 1}^\infty$ be any sequences, $q \in (0, 1], \theta_0 \in \mathbb{R}^s$, $p(\mathcal{T})$ be a distribution on a nonempty set $\Omega_\mathcal{T}$, $V: \mathbb{R}^s \times \Omega_\mathcal{T} \to \mathbb{R}^p, \mathcal{L}^{in}, \mathcal{L}^{out} : \mathbb{R}^s \times \mathbb{R}^p \times \Omega_\mathcal{T} \to \mathbb{R}$ be functions satisfying Assumption \ref{as:liphes}, and let $U^{(r)}: \mathbb{R}^s \times \Omega_\mathcal{T} \to \mathbb{R}^p$ be defined according to (\ref{eq:maml1}-\ref{eq:maml}), $\mathcal{M}^{(r)} : \mathbb{R}^p \to \mathbb{R}$ be defined according to (\ref{eq:opt}) and satisfy Assumption \ref{as:mreg}. Define $\mathcal{G}_{FO}: \mathbb{R}^s \times \Omega_\mathcal{T} \to \mathbb{R}^s$ as in (\ref{eq:fo1}-\ref{eq:fo2}) and $\mathcal{G}: \mathbb{R}^s \times \Omega_\mathcal{T} \times \{ 0, 1 \} \to \mathbb{R}^s$ as
\begin{gather*}
    \mathcal{G} (\theta, \mathcal{T}, x) = \mathcal{G}_{FO} (\theta, \mathcal{T}) + \frac{x}{q} (\nabla_\theta \mathcal{L}^{out} (\theta,  U^{(r)} (\theta, \mathcal{T}), \mathcal{T}) \\
    - \mathcal{G}_{FO} (\theta, \mathcal{T})) .
\end{gather*}
Let $\{ \mathcal{T}_k \}_{k = 1}^\infty, \{ \xi_k \}_{k = 1}^\infty$ be sequences of i.i.d. samples from $p(\mathcal{T})$ and $\mathrm{Bernoulli} (q)$ respectively, such that $\sigma$-algebras populated by both sequences are independent.
Let $\{ \theta_k \in \mathbb{R}^s \}_{k = 0}^\infty$ be a sequence where for all $k \in \mathbb{N}$ $\theta_k = \theta_{k - 1} - \gamma_k \mathcal{G} (\theta_{k - 1}, \mathcal{T}_k, \xi_k)$. Then it holds that

1) if $\{ \gamma_k \}_{k = 1}^\infty$ satisfies (\ref{eq:step}) and $\sum_{k = 1}^\infty \gamma_k^2 < \infty$, then $\liminf_{k \to \infty} \E{\| \frac{\partial}{\partial \theta} \mathcal{M}^{(r)} (\theta_k) \|_2^2} = 0$;

2) if $\forall k \in \mathbb{N}: \gamma_k = k^{-0.5}$, then
\begin{gather}
    \min_{0 \leq u < k} \E{ \| \frac{\partial}{\partial \theta} \mathcal{M}^{(r)} (\theta_u) \|_2^2 }   \nonumber \\
    = \Big( (1 / q - 1) \mathbb{D}^2 + \mathbb{V}^2 \Big) \cdot O (k^{-0.5 + \epsilon}), \label{eq:conv}
\end{gather}
where $\epsilon$ is any positive number and the constant, hidden in $O(k^{-0.5 + \epsilon})$, does not depend on $q$. $\mathbb{D}^2, \mathbb{V}^2$ are defined by (\ref{eq:dtruedef}), (\ref{eq:vtruedef}) respectively.
\end{theorem}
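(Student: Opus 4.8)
The plan is to reduce both claims to the standard analysis of SGD on an $\mathcal{L}_\mathcal{M}$-smooth nonconvex objective driven by an unbiased, second-moment-bounded gradient oracle \citep[Section 4.3]{bottou}, after supplying two problem-specific ingredients. The first ingredient I would establish is \emph{smoothness of $\mathcal{M}^{(r)}$}: Assumption \ref{as:liphes} forces $\nabla_\theta \mathcal{M}^{(r)}$ to be Lipschitz with a finite constant $\mathcal{L}_\mathcal{M}$ depending only on $M_1, M_2, L_1, L_2, L_3, r, \{\alpha_j\}_{j=1}^r$ (in particular not on $q$ or $\theta$). Since Assumption \ref{as:mreg} identifies $\nabla_\theta\mathcal{M}^{(r)}(\theta)$ with $\mathbb{E}_{p(\mathcal{T})}[\nabla_\theta\mathcal{L}^{out}(\theta, U^{(r)}(\theta,\mathcal{T}),\mathcal{T})]$, it suffices (by Jensen) to show a uniform-in-$\mathcal{T}$ Lipschitz bound on the integrand. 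That follows by induction on $j$: one bounds, uniformly in $\mathcal{T}$, the quantities $\|\phi_j(\theta')-\phi_j(\theta'')\|_2$, $\|\tfrac{\partial}{\partial\theta}\phi_j\|_2$ and $\|\tfrac{\partial}{\partial\theta}\phi_j(\theta')-\tfrac{\partial}{\partial\theta}\phi_j(\theta'')\|_2$ by finite factors $c_j,c_j',c_j''$ (growing in $r$) using that $\tfrac{\partial}{\partial\phi}\mathcal{L}^{in}$ is bounded and $L_2$-Lipschitz, that the inner Hessians satisfy $\|\tfrac{\partial^2}{\partial\phi^2}\mathcal{L}^{in}\|_2\le L_2$, $\|\tfrac{\partial^2}{\partial\theta\partial\phi}\mathcal{L}^{in}\|_2\le L_2$ and are $L_3$-Lipschitz, and that $\tfrac{\partial}{\partial\theta}V$ is bounded by $M_1$ and $M_2$-Lipschitz; propagating the same three quantities through the backward recursion (\ref{eq:gradfirst})--(\ref{eq:gradlast}), whose stages multiply and add bounded-and-$\theta$-Lipschitz matrices built from the inner Hessians evaluated at $\phi_{j-1}(\theta)$, and closing with $\tfrac{\partial}{\partial\theta}\mathcal{L}^{out},\tfrac{\partial}{\partial\phi}\mathcal{L}^{out}$ and $\tfrac{\partial}{\partial\theta}V$, yields the claim.

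The second ingredient is the \emph{oracle moments}: unbiasedness $\mathbb{E}_{\mathcal{T},\xi}[\mathcal{G}(\theta,\mathcal{T},\xi)]=\nabla_\theta\mathcal{M}^{(r)}(\theta)$ follows from the computation in Section \ref{sec:prop} together with Assumption \ref{as:mreg}, and Lemma \ref{lemma:unbbnd} gives $\mathbb{E}_{\mathcal{T},\xi}[\|\mathcal{G}(\theta,\mathcal{T},\xi)\|_2^2]\le B:=(1/q-1)\mathbb{D}^2+\mathbb{V}^2<\infty$ for every $\theta$, with $\mathbb{D}^2,\mathbb{V}^2$ finite. With the filtration $\mathcal{F}_{k-1}:=\sigma(\mathcal{T}_1,\xi_1,\dots,\mathcal{T}_{k-1},\xi_{k-1})$, the iterate $\theta_{k-1}$ is $\mathcal{F}_{k-1}$-measurable and $(\mathcal{T}_k,\xi_k)$ is independent of $\mathcal{F}_{k-1}$, so $\mathbb{E}[\mathcal{G}(\theta_{k-1},\mathcal{T}_k,\xi_k)\mid\mathcal{F}_{k-1}]=\nabla_\theta\mathcal{M}^{(r)}(\theta_{k-1})$ and $\mathbb{E}[\|\mathcal{G}(\theta_{k-1},\mathcal{T}_k,\xi_k)\|_2^2\mid\mathcal{F}_{k-1}]\le B$. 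Applying the descent lemma for $\mathcal{L}_\mathcal{M}$-smooth functions to $\theta_k=\theta_{k-1}-\gamma_k\mathcal{G}(\theta_{k-1},\mathcal{T}_k,\xi_k)$, taking $\mathbb{E}[\,\cdot\mid\mathcal{F}_{k-1}]$ and then total expectation, gives $\mathbb{E}[\mathcal{M}^{(r)}(\theta_k)]\le\mathbb{E}[\mathcal{M}^{(r)}(\theta_{k-1})]-\gamma_k\mathbb{E}[\|\nabla_\theta\mathcal{M}^{(r)}(\theta_{k-1})\|_2^2]+\tfrac{\mathcal{L}_\mathcal{M}}{2}\gamma_k^2B$; telescoping over $k=1,\dots,K$ and using $\mathcal{M}^{(r)}\ge\mathcal{M}^{(r)}_*>-\infty$ (Assumption \ref{as:mreg}) yields $\sum_{k=1}^K\gamma_k\mathbb{E}[\|\nabla_\theta\mathcal{M}^{(r)}(\theta_{k-1})\|_2^2]\le A+\tfrac{\mathcal{L}_\mathcal{M}B}{2}\sum_{k=1}^K\gamma_k^2$ with $A:=\mathcal{M}^{(r)}(\theta_0)-\mathcal{M}^{(r)}_*$.

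For part 1), letting $K\to\infty$ under $\sum_k\gamma_k^2<\infty$ bounds the left-hand series, so $\sum_k\gamma_k\mathbb{E}[\|\nabla_\theta\mathcal{M}^{(r)}(\theta_{k-1})\|_2^2]<\infty$; combined with $\sum_k\gamma_k=\infty$, if the $\liminf$ were some $c>0$ the tail of that series would diverge, a contradiction, hence it equals $0$. For part 2), with $\gamma_k=k^{-1/2}$ we have $\sum_{k=1}^K\gamma_k^2\le 1+\ln K$ and $\sum_{k=1}^K\gamma_k\ge 2(\sqrt{K+1}-1)$, and since $\min_{0\le u<K}\mathbb{E}[\|\nabla_\theta\mathcal{M}^{(r)}(\theta_u)\|_2^2]\,\sum_{k=1}^K\gamma_k\le\sum_{k=1}^K\gamma_k\mathbb{E}[\|\nabla_\theta\mathcal{M}^{(r)}(\theta_{k-1})\|_2^2]$, we obtain $\min_{0\le u<K}\mathbb{E}[\|\nabla_\theta\mathcal{M}^{(r)}(\theta_u)\|_2^2]\le (A+\tfrac{\mathcal{L}_\mathcal{M}B}{2}(1+\ln K))/(2(\sqrt{K+1}-1))$. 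To factor $B$ out with a $q$-independent constant: if $\mathbb{V}^2=0$ then by Assumption \ref{as:mreg} and (\ref{eq:vtruedef}) $\nabla_\theta\mathcal{M}^{(r)}\equiv\mathbf{0}_s$, hence $A=0$ and both sides of (\ref{eq:conv}) vanish; otherwise $B\ge\mathbb{V}^2>0$, so $A\le(A/\mathbb{V}^2)B$ and the numerator is at most $B(A/\mathbb{V}^2+\tfrac{\mathcal{L}_\mathcal{M}}{2}(1+\ln K))$, whence, using $1+\ln K=O(K^\epsilon)$ for every $\epsilon>0$ and $\sqrt{K+1}-1\ge\tfrac12\sqrt K$ for large $K$, the bound is $B\cdot O(K^{-1/2+\epsilon})$ with constant depending on $A,\mathbb{V}^2,\mathcal{L}_\mathcal{M},\epsilon$ but not on $q$; substituting $B=(1/q-1)\mathbb{D}^2+\mathbb{V}^2$ gives (\ref{eq:conv}).

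The genuinely laborious step is the smoothness ingredient: producing a finite, $q$- and $\theta$-independent Lipschitz constant for $\nabla_\theta\mathcal{M}^{(r)}$ requires simultaneously propagating the value, the first $\theta$-derivative, and the $\theta$-modulus of continuity of that derivative through both the forward rollout $\phi_0,\dots,\phi_r$ and the backward adjoint recursion, with constants that inevitably grow with $r$ and the step sizes but must be verified finite. Everything downstream is the textbook nonconvex-SGD argument; the only additional care is the $\mathbb{V}^2=0$ degenerate case, isolated above precisely so that the constant hidden in (\ref{eq:conv}) is manifestly independent of $q$.
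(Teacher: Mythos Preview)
Your proposal is correct and follows essentially the same route as the paper: establish Lipschitz smoothness of $\nabla_\theta\mathcal{M}^{(r)}$ from Assumption~\ref{as:liphes} (the paper's Lemma~\ref{lemma:mlipsch}, with explicit constant $\mathcal{C}$), combine unbiasedness and the second-moment bound from Lemma~\ref{lemma:unbbnd}, apply the descent lemma and telescope (the paper's Lemma~\ref{th:convexp}), and finish both parts with the standard step-size asymptotics. Your explicit treatment of the degenerate case $\mathbb{V}^2=0$ to secure $q$-independence of the hidden constant is in fact more careful than the paper's own proof, which simply writes the two-term bound and invokes $\sum_{u\le k}\gamma_u=\Omega(k^{0.5})$, $\sum_{u\le k}\gamma_u^2=O(k^\epsilon)$ without isolating how the $A$-term is absorbed.
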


Our second contribution is a proof that Equation (\ref{eq:liminf}) does not hold for FOM under the same assumptions. 
More specifically, we show that for any $D > 0$, there exists a problem of type (\ref{eq:opt}) such that $\liminf_{k \to \infty} \E{\| \nabla_{\theta_k} \mathcal{M}^{(r)} (\theta_k) \|_2^2} > D$ where $\{ \theta_k \}$ are iterates of FOM. The intuition behind this result is that \textbf{FOM cannot find a solution with gradient norm lower than $D$}.
\begin{theorem} [Divergence of FOM] \label{th:counter}
Let $p = s, r \in \mathbb{N}, \alpha > 0, \theta_0 \in \mathbb{R}^s$, $\{ \gamma_k > 0 \}_{k = 1}^\infty$ be any sequence satisfying (\ref{eq:step}), and $D$ be any positive number. Then there exists a set $\Omega_\mathcal{T}$ with a distribution $p(\mathcal{T})$ on it and functions $V : \mathbb{R}^s \times \Omega_\mathcal{T} \to \mathbb{R}^p, \mathcal{L}^{in}, \mathcal{L}^{out} : \mathbb{R}^s \times \mathbb{R}^p \times \Omega_\mathcal{D} \to \mathbb{R}$ satisfying Assumption \ref{as:liphes}, such that for $U^{(r)}: \mathbb{R}^p \times \mathcal{T} \to \mathbb{R}^p$ defined according to (\ref{eq:maml1}-\ref{eq:maml}), where $\forall j : \alpha_j = \alpha$, $\mathcal{M}^{(r)} : \mathbb{R}^p \to \mathbb{R}$ defined according to (\ref{eq:opt}) and satisfying Assumption \ref{as:mreg}, the following holds: define $\mathcal{G}_{FO}: \mathbb{R}^s \times \Omega_\mathcal{T} \to \mathbb{R}^s$ as in (\ref{eq:fo1}-\ref{eq:fo2}). Let $\{ \mathcal{T}_k \}_{k = 1}^\infty$ be a sequence of i.i.d. samples from $p(\mathcal{T})$. Let $\{ \theta_k \in \mathbb{R}^p \}_{k = 0}^\infty$ be a sequence where for all $k \in \mathbb{N}$, $\theta_k = \theta_{k - 1} - \gamma_k \mathcal{G}_{FO} (\theta_{k - 1}, \mathcal{T}_k)$. Then $\liminf_{k \to \infty} \E {\| \frac{\partial}{\partial \theta} \mathcal{M}^{(r)} (\theta_k) \|_2^2} > D$.
\end{theorem}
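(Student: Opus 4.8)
The plan is to produce, for the given $D$, a problem instance on which the FOM iteration is \emph{frozen}: if we can arrange $\mathcal{G}_{FO}(\theta,\mathcal{T}) = \mathbf{0}_s$ for every $\theta$ and $\mathcal{T}$, then whatever the admissible step sizes $\{\gamma_k\}$, the recursion $\theta_k = \theta_{k-1} - \gamma_k\mathcal{G}_{FO}(\theta_{k-1},\mathcal{T}_k)$ gives $\theta_k = \theta_0$ deterministically for all $k$, so $\E{\|\frac{\partial}{\partial\theta}\mathcal{M}^{(r)}(\theta_k)\|_2^2} = \|\frac{\partial}{\partial\theta}\mathcal{M}^{(r)}(\theta_0)\|_2^2$, and it suffices to make the \emph{exact} ABLO gradient at the single point $\theta_0$ have squared norm exceeding $D$. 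Reading off (\ref{eq:fo1}-\ref{eq:fo2}), $\mathcal{G}_{FO}$ vanishes identically as soon as (i) $\mathcal{L}^{out}$ does not depend on $\theta$, i.e.\ $\mathcal{L}^{out}(\theta,\phi,\mathcal{T}) = \ell(\phi)$, and (ii) $V$ does not depend on $\theta$, e.g.\ $V \equiv \mathbf{0}_p$. Under (i)--(ii), however, the correction terms $D_j$ in (\ref{eq:gradpremid}) and the adjoint recursion (\ref{eq:gradlast}) for $\nabla_{\phi_j}\mathcal{L}^{out}$ need not vanish, and this is exactly the discrepancy that FOM discards.

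Concretely, I would take $\Omega_\mathcal{T}$ a singleton (so expectations over $p(\mathcal{T})$ are trivial and, once $\ell$ is bounded below, Assumption \ref{as:mreg} holds immediately), $V \equiv \mathbf{0}_p$, and use only the first coordinate, with all losses constant in the remaining coordinates, which is fine even though $p = s$ may be large. Let $e_1$ denote the first standard basis vector. Define $\mathcal{L}^{in}$ to coincide, on a bounded box containing the portion of $\mathbb{R}^s \times \mathbb{R}^p$ the inner loop visits, with the bilinear form $\mathcal{L}^{in}(\theta,\phi,\mathcal{T}) = -B (e_1^\top\theta)(e_1^\top\phi)$ for a large constant $B > 0$. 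With $\phi_0 = V = \mathbf{0}_p$ and $\alpha_j \equiv \alpha$, the inner GD (\ref{eq:maml1}-\ref{eq:maml}) then produces $\phi_j = j\alpha B (e_1^\top\theta)\,e_1$, so at the points $\phi_0(\theta_0),\dots,\phi_{r-1}(\theta_0)$ (which lie in the box) we have $\frac{\partial^2}{\partial\phi^2}\mathcal{L}^{in} = 0$ and $\frac{\partial^2}{\partial\theta\partial\phi}\mathcal{L}^{in} = -B\,e_1 e_1^\top$. Take $\ell$ to agree near $\phi_r(\theta_0) = r\alpha B(e_1^\top\theta_0)\,e_1$ with an affine function whose gradient is the unit vector $v = e_1$, plus a positive constant chosen so $\ell \ge 0$ everywhere. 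Then in (\ref{eq:gradfirst}) at $\theta = \theta_0$ the terms $\frac{\partial}{\partial\theta}\mathcal{L}^{out}$ and $(\frac{\partial}{\partial\theta}V)^\top\nabla_{\phi_0}\mathcal{L}^{out}$ vanish, leaving $\sum_{j=1}^r D_j$; by (\ref{eq:gradlast}) and $\frac{\partial^2}{\partial\phi^2}\mathcal{L}^{in} = 0$ we get $\nabla_{\phi_j}\mathcal{L}^{out} = v$ for every $j$, hence each $D_j$ is parallel to $v$ with $\|D_j\|_2 = \alpha B$, so $\|\frac{\partial}{\partial\theta}\mathcal{M}^{(r)}(\theta_0)\|_2 = r\alpha B$. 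Choosing $B > \sqrt{D}/(r\alpha)$ makes $\|\frac{\partial}{\partial\theta}\mathcal{M}^{(r)}(\theta_0)\|_2^2 = (r\alpha B)^2 > D$, and with the first paragraph this gives $\liminf_{k\to\infty}\E{\|\frac{\partial}{\partial\theta}\mathcal{M}^{(r)}(\theta_k)\|_2^2} = (r\alpha B)^2 > D$.

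The only real difficulty is technical: the bilinear $\mathcal{L}^{in}$ and the affine $\ell$ above are not globally bounded or Lipschitz, whereas Assumption \ref{as:liphes} demands globally bounded and $L_2$-Lipschitz gradients of $\mathcal{L}^{in}, \mathcal{L}^{out}$ and globally $L_3$-Lipschitz second derivatives of $\mathcal{L}^{in}$. I would repair this by smooth ($C^3$) truncation: replace $e_1^\top\theta$ and $e_1^\top\phi$ inside $\mathcal{L}^{in}$ by smoothed clamps equal to the identity on a bounded box and saturating outside it, and likewise bend $\ell$ to a bounded nonnegative function away from $\phi_r(\theta_0)$. Two checks make this harmless: (a) because $\|\frac{\partial}{\partial\phi}\mathcal{L}^{in}\|_2 \le L_1$ and $V \equiv \mathbf{0}_p$, every $\theta$ yields $\|\phi_r(\theta)\|_2 \le r\alpha L_1$, so $\mathcal{M}^{(r)}$ is bounded below and Assumption \ref{as:mreg} is met; and (b) for all $\theta$ in a fixed neighbourhood of $\theta_0$ the whole trajectory $\phi_0(\theta),\dots,\phi_r(\theta)$ stays inside the unclamped box, so the exact trajectory, its $\theta$-Jacobian, and the evaluation of $\frac{\partial}{\partial\theta}\mathcal{M}^{(r)}(\theta_0)$ carried out above are untouched by the truncation. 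The resulting constants $M_1, M_2, L_1, L_2, L_3$ are finite but grow with $B$, which is unproblematic since Assumption \ref{as:liphes} only asks that \emph{some} finite constants exist. I expect writing out these clamped functions and bookkeeping the constants to be the bulk of the work, but it is routine mollification/cutoff analysis with no conceptual obstacle.
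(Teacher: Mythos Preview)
Your argument is correct and in fact considerably simpler than the paper's own proof, because it exploits a freedom the theorem statement grants but the paper does not use: the choice of $V$. By taking $V\equiv \mathbf{0}_p$ and $\mathcal{L}^{out}(\theta,\phi,\mathcal{T})=\ell(\phi)$, you force $\mathcal{G}_{FO}\equiv 0$ and freeze the outer iterates at $\theta_0$, so the whole problem reduces to making $\|\nabla_\theta\mathcal{M}^{(r)}(\theta_0)\|_2^2>D$ via the $D_j$ terms, which your bilinear $\mathcal{L}^{in}$ achieves cleanly. The smooth clamping to recover Assumption~\ref{as:liphes} is routine, and Assumption~\ref{as:mreg} is immediate for a singleton $\Omega_\mathcal{T}$ with bounded $\ell$; your remark that the inner trajectory at $\theta_0$ stays in the unclamped box (so the computation of $\nabla_\theta\mathcal{M}^{(r)}(\theta_0)$ is unaffected) is the only point that needs care, and you have it.

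The paper's construction is substantially different: it insists on the MAML choice $V(\theta,\mathcal{T})=\theta$, so $\frac{\partial V}{\partial\theta}=I$ and $\mathcal{G}_{FO}$ is genuinely nonzero. With two equiprobable tasks and convex piecewise-polynomial $f_i$, the FOM iterates actually \emph{move} and are shown (via a case analysis to trap them in a quadratic region, followed by a stochastic contraction argument) to converge in mean square to a point $x^*$ determined by the \emph{first-order} weights $(1-\alpha a_i)^r$, whereas the true ABLO stationary point is determined by the \emph{second-order} weights $(1-\alpha a_i)^{2r}$. What the paper buys with this extra work is a counterexample in the practically relevant regime $V(\theta)=\theta$, demonstrating that FOM's failure is not an artifact of a degenerate initialization map; what your approach buys is a one-page proof of the theorem exactly as stated.
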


\subsection{Optimal Choice of $q$} \label{sec:optqth}

Note that as a special case of Theorem \ref{th:conv}, we obtain a convergence proof for ABLO with exact gradients ($q = 1$). The case $q \to 0$, on the other hand, corresponds to FOM without corrections. We analyze the bound (\ref{eq:conv}) in order to develop an intuition about the choice of $q$, leading to the fastest convergence as a function of wall-clock time. Let $C_1, C_2$ denote the time required to compute $\frac{\partial}{\partial \phi} \mathcal{L}^\square (\theta, \phi, \mathcal{T})$, $\square \in \{ in, out \}$ and $\left[ ( \frac{\partial^2}{\partial \theta \partial \phi} \mathcal{L}^{in} (\theta, \phi, \mathcal{T}) )^\top b_2, ( \frac{\partial^2}{\partial \phi^2} \mathcal{L}^{in} (\theta, \phi, \mathcal{T}) )^\top b_2 \right]$ respectively.
Fix $0 < \epsilon$. The smaller is $\epsilon$, the tighter is the bound (\ref{eq:conv}), so in addition we assume that $\epsilon < 0.5$. Let $\mathcal{H} (\epsilon)$ be a hidden constant in $O$-notation of (\ref{eq:conv}). Then for $\delta$ small enough and , it holds that one needs
\begin{equation*}
     \left( \mathcal{H} (\epsilon)^{-1} \cdot \delta \cdot ((1/q - 1) \mathbb{D}^2 + \mathbb{V}^2)^{-1} \right)^{\frac{1}{- 0.5 + \epsilon}}
\end{equation*}
iterations to achieve $\min_{0 \leq u < k} \E{ \| \frac{\partial}{\partial \theta} \mathcal{M}^{(r)} (\theta_u) \|_2^2 } \leq \delta$. According to Proposition \ref{th:ufom}, the expected amount of time to get such precision is proportional to
\begin{equation}
    \left((1/q - 1) \mathbb{D}^2 + \mathbb{V}^2 \right)^{\frac{2}{1 - 2 \epsilon}} (C_{det} + C_{rnd} \cdot q) , \label{eq:time}
\end{equation}
where $C_{det} = C_1 (r + 1), C_{rnd} = ( C_1 (r - 1) / 2 + C_2) r$, and we ignore the negligible time needed for the vector sums and scalar products in  
Algorithm \ref{alg:unbiased}. The convergence estimate as a function of time is tighter for UFOM than for the exact memory-efficient gradient, in the case when the $q^*$, minimizing (\ref{eq:time}), is smaller than $1$. In Appendix \ref{sec:qchoice}, we show that 
\textbf{this is equivalent to the condition}
\begin{equation}
    \mathbb{D}^2 < \frac{C_{rnd}}{ \frac{2}{1 - 2 \epsilon} (C_{det} + C_{rnd})} \mathbb{V}^2 . \label{eq:condd}
\end{equation}
We also prove that in case of (\ref{eq:condd}), the optimum $q^*$ lies in $(0, 1)$ as the root of the quadratic equation
\begin{gather}
    C_{rnd} (\mathbb{V}^2 - \mathbb{D}^2) q^2 + \frac{2 \epsilon + 1}{2 \epsilon - 1} \mathbb{D}^2 C_{rnd} q \nonumber \\
    + \frac{2}{2 \epsilon - 1} \mathbb{D}^2 C_{det} = 0 . \label{eq:closeform}
\end{gather}


\subsection{Adaptive UFOM}

For simplicity, in Assumption \ref{as:liphes} we require global bounds $M_1, M_2, L_1, L_2, L_3$ on norms and Lipschitz constants of $V, \mathcal{L}^{in}, \mathcal{L}^{out}$ gradients. However, it can be deduced from the proof of Theorem \ref{th:conv}, that this assumption can be relaxed by only assuming bounded and Lipschitz-continuous gradients in the open set containing the trajectory $[\theta_0, \theta_1] \cup [\theta_1, \theta_2] \cup \dots$ of SGD. Similarly, (\ref{eq:dtruedef},\ref{eq:vtruedef}) can be relaxed by taking supremum over this open set rather than the whole $\mathbb{R}^{s}$. With this observation in mind, we propose a heuristic algorithm for choosing $q$ adaptively during training (\textit{Adaptive UFOM}). According to this algorithm, we maintain and update approximate estimates $\overline{\mathbb{D}^2_k}, \overline{\mathbb{V}^2_k}$ of $\mathbb{D}^2, \mathbb{V}^2$ respectively, and, during each iteration of SGD, compute $q^*$ as described in Section \ref{sec:optqth} by substituting these approximations. $C_1, C_2, \epsilon$ are assumed to be known and set by the user. We define
\begin{equation}
    \overline{\mathbb{D}^2_k} \! = \mathbb{D}^2_{sm,k} / (1 - \beta^{k_{upd}}), \overline{\mathbb{V}^2_k} \! = \mathbb{V}^2_{sm,k} / (1 - \beta^{k_{upd}}), \label{eq:ovdvdef}
\end{equation}
where $0 < \beta < 1$ is a user-defined exponential smoothing constant and $\mathbb{D}^2_{sm,k}, \mathbb{V}^2_{sm,k}$ are exponentially smoothed $\| b_{FO} - b_{Exact} \|^2_2, \| b_{Exact} \|_2^2$ from Algorithm \ref{alg:unbiased}. That is, initially $\mathbb{D}^2_{sm,0} = \mathbb{V}^2_{sm,0} = 0$. During the $k$th step of SGD, if $\xi = 0$ in Algorithm \ref{alg:unbiased}, then $\mathbb{D}^2_{sm,k} = \mathbb{D}^2_{sm,k - 1}, \mathbb{V}^2_{sm,k} = \mathbb{V}^2_{sm,k - 1}$. If $\xi = 1$, then we use the update rule
\begin{gather}
    \mathbb{D}^2_{sm,k} = \beta \mathbb{D}^2_{sm,k - 1} + (1 - \beta) \| b_{FO} - b_{Exact} \|^2_2 , \label{eq:adapt1} \\
    \mathbb{V}^2_{sm,k} = \beta \mathbb{V}^2_{sm,k - 1} + (1 - \beta) \| b_{Exact} \|^2_2 . \label{eq:adapt2}
\end{gather}
$k_{upd}$ is the number of updates (\ref{eq:adapt1}-\ref{eq:adapt2}) performed before the $k$th iteration. This way, $\overline{\mathbb{D}^2_k}, \overline{\mathbb{V}^2_k}$ resemble ``local'' estimates of (\ref{eq:dtruedef},\ref{eq:vtruedef}) at the current region of SGD trajectory.

To perform updates (\ref{eq:adapt1}-\ref{eq:adapt2}) frequently enough, we set $q = \max (q^*, q_{min})$, where $q_{min}$ is a small constant to guarantee that $\xi$ is nonzero at least sometimes when running Algorithm \ref{alg:unbiased}. The proposed Adaptive UFOM doesn't result in any additional evaluations of $\mathcal{L}^{in}, \mathcal{L}^{out}$ or its first or second derivatives. As for the additional memory, Adaptive UFOM only requires to store and update two values $\mathbb{D}^2_{sm,k}, \mathbb{V}^2_{sm,k}$.

\section{Experiments} \label{sec:exp}

We illustrate our theoretical findings on a synthetic experiment and then evaluate Adaptive UFOM on data hypercleaning and few-shot learning. As baselines, we compare with the memory-efficient exact estimator (equivalent to UFOM with $q = 1$) and FOM (equivalent to $q \to 0$). We demonstrate that, under the same memory constraints, UFOM can significantly speed up convergence of the exact estimator. As a measure of algorithm speed, we use the total number of gradient and Hessian-vector product calls, which we refer to as \textit{function calls}. This is equivalent to the ``wall-clock time'' from Section \ref{sec:qchoice}, if we set $C_1 = C_2 = 1$. Since $\epsilon$ is any small positive number, we further set it to $0$ for simplicity.

\subsection{Synthetic Experiment}

Theorem \ref{th:counter} is proven by explicitly constructing the following counterexample: $\Omega_\mathcal{T} = \{ \mathcal{T}^{(1)}, \mathcal{T}^{(2)} \}$ where both tasks $\mathcal{T}^{(1)}, \mathcal{T}^{(2)}$ are equally likely under $p(\mathcal{T})$, $V(\theta, \mathcal{T}) = \theta$ and functions $\mathcal{L}^{in} (\theta, \phi, \mathcal{T}^{(i)}) = \mathcal{L}^{out} (\theta, \phi, \mathcal{T}^{(i)})$ are independent of $\theta$ and are convex piecewise-polynomials of $\phi$ for (see Appendix \ref{sec:proofs} for details). Inner-GD step sizes are the same ($\forall j : \alpha_j = \alpha$).  As a demonstration of our theoretical findings in a simple experimental setup, we simulate via exact gradients, both FOM and (non-adaptive) UFOM for this synthetic ABLO formulation. We opt for a single parameter ($s = p = 1$) and inner-GD length of $r = 10$.

Figures \ref{fig:sinth1}, \ref{fig:sinth2}, \ref{fig:sinth3} demonstrate the result of Theorems \ref{th:conv} and \ref{th:counter}, illustrating an example problem where UFOM ($q = 0.1$) converges to a stationary point, while FOM does not. In Figure \ref{fig:sinth4}, we vary the $\alpha$ parameter and illustrate how approximate values of $\mathbb{V}^2, \mathbb{D}^2$ change. On Figure \ref{fig:sinth5}, we compare a theoretical estimate of $q^*$ obtained by solving (\ref{eq:closeform}), and an empirical estimate of $q^*$ obtained by grid search. We observe that our theoretical findings roughly agree with the experiment. Finally, on Figure \ref{fig:sinth6}, we illustrate the case where the theoretically optimal $q^*$ is less than $1$. We show that, indeed, optimization is faster when $q = q^* \approx 0.08$ than when $q = 1$ (exact memory-efficient gradient computation). More details can be found in Appendix \ref{sec:synth}.

\begin{figure}[h!]
  \centering
    \begin{subfigure}[b]{0.49\linewidth}
        \centering
        \includegraphics[width=\linewidth]{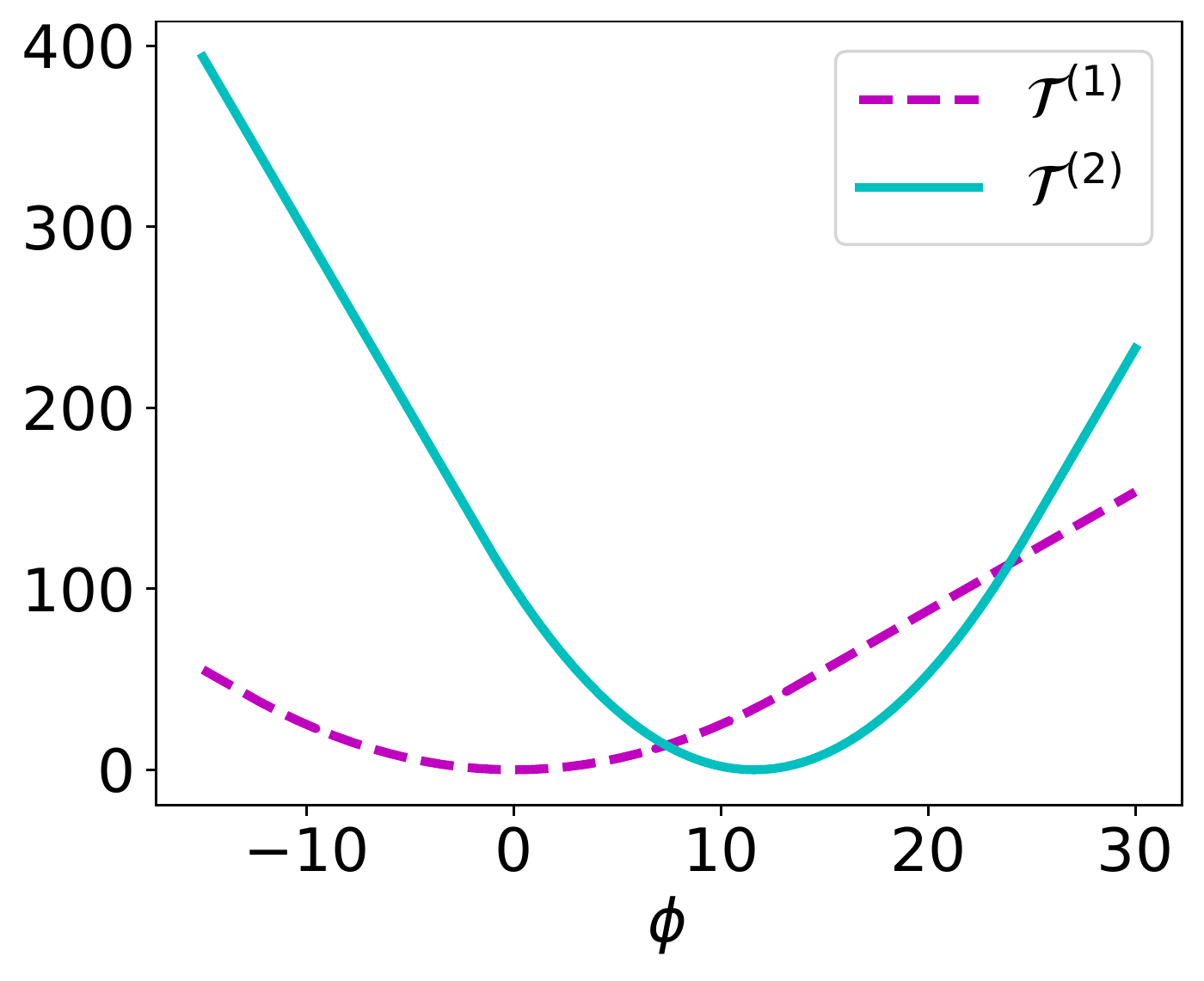}
        \caption[]%
        {{\small \,}}
        \label{fig:sinth1}
    \end{subfigure}
    \begin{subfigure}[b]{0.49\linewidth}  
        \centering 
        \includegraphics[width=\linewidth]{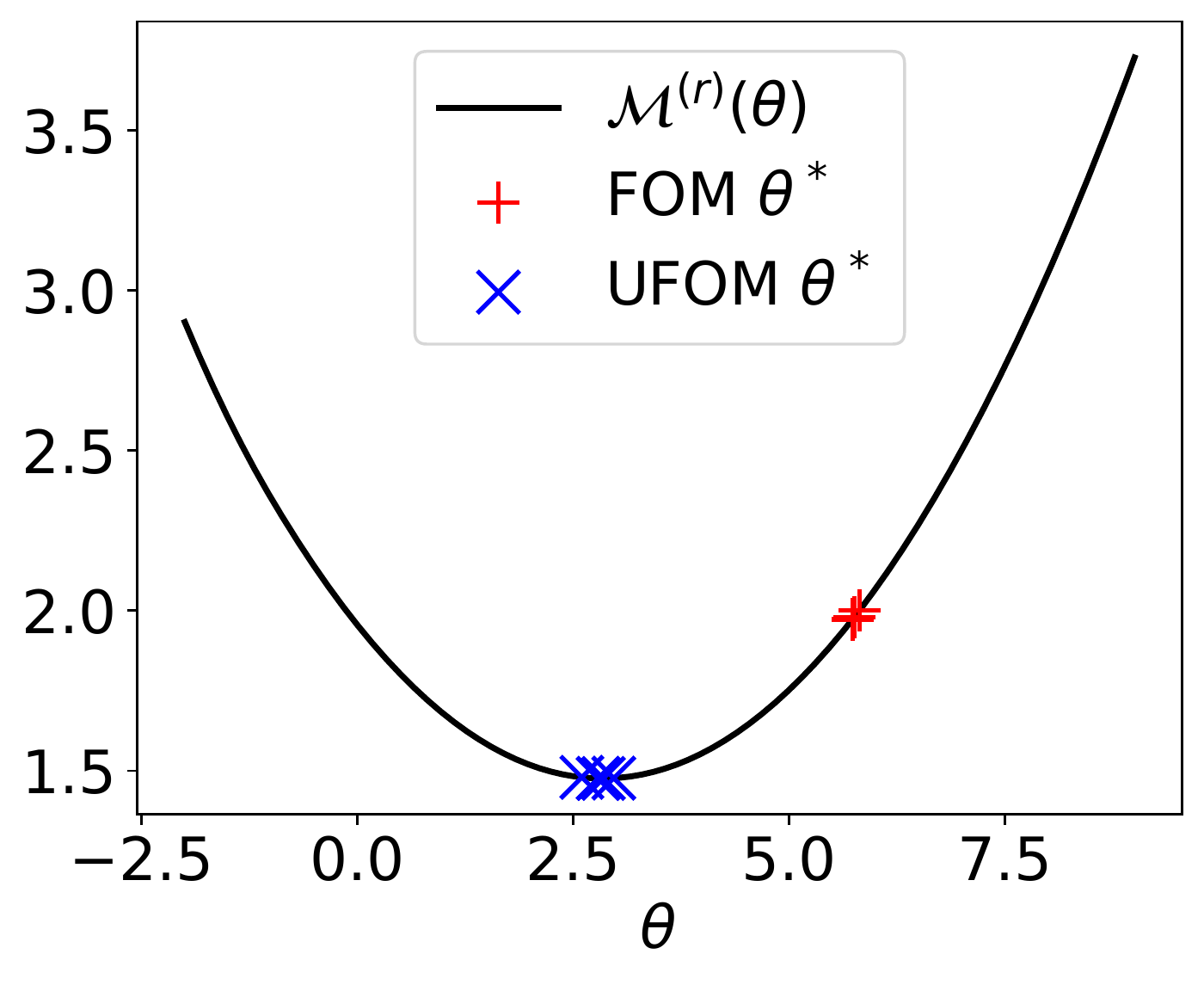}
        \caption[]%
        {{\small \,}}    
        \label{fig:sinth2}
    \end{subfigure}
    \vskip\baselineskip\vspace{-5pt}
    \begin{subfigure}[b]{0.49\linewidth}  
        \centering 
        \includegraphics[width=\linewidth]{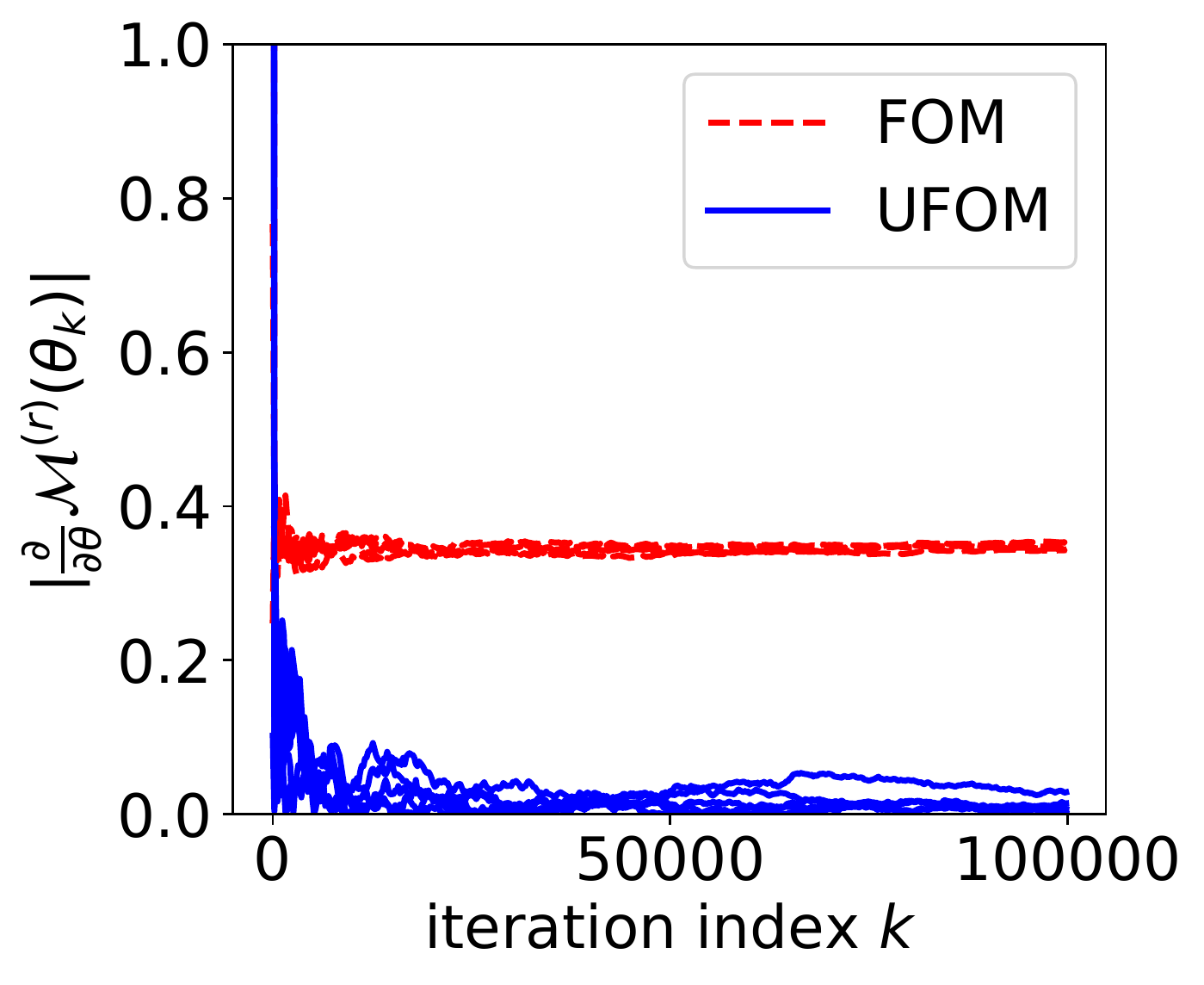}
        \caption[]%
        {{\small \,}}    
        \label{fig:sinth3}
    \end{subfigure}
    \hfill
    \begin{subfigure}[b]{0.49\linewidth}   
        \centering 
        \includegraphics[width=\linewidth]{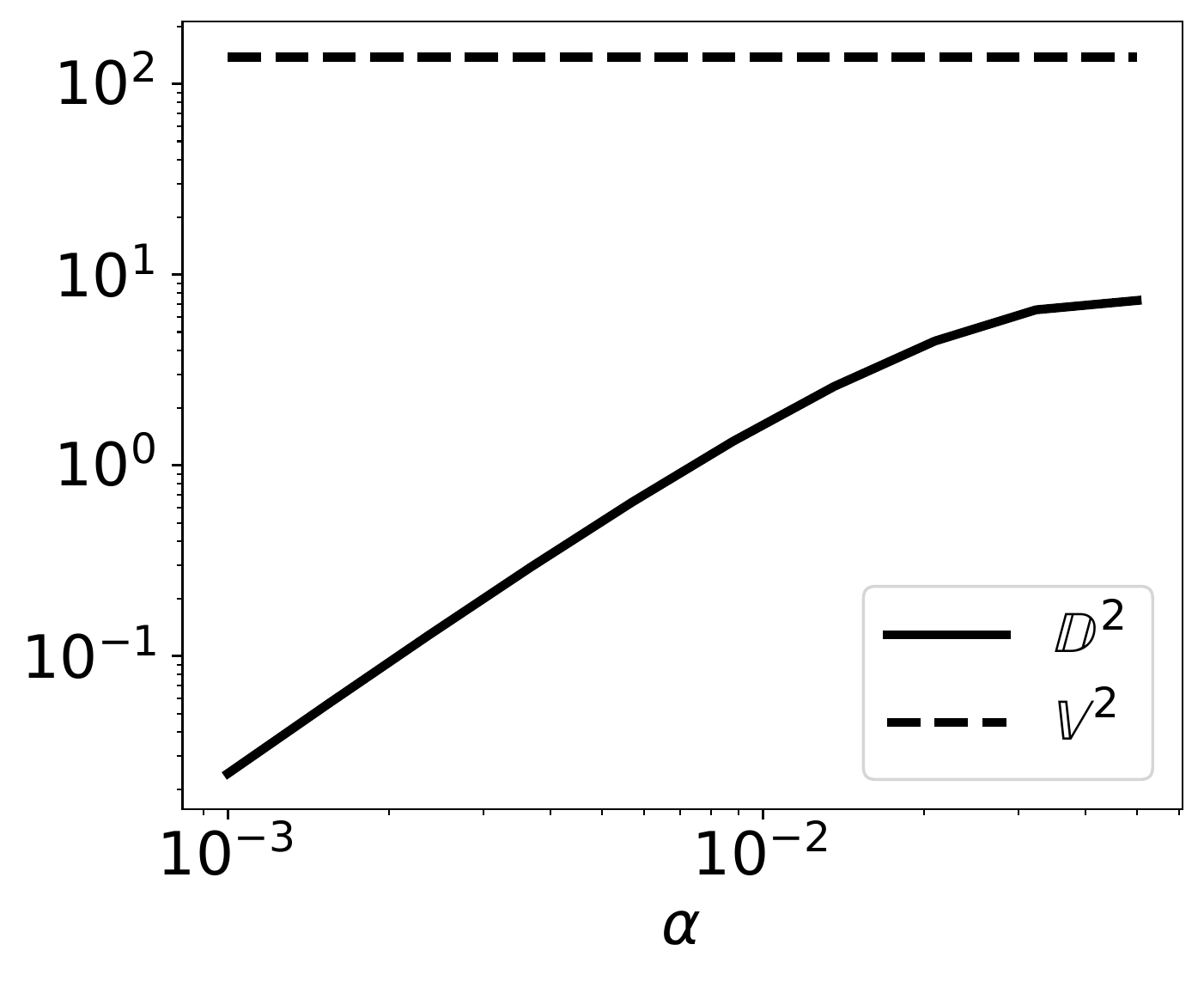}
        \caption[]%
        {{\small \,}}    
        \label{fig:sinth4}
    \end{subfigure}
    \vskip\baselineskip\vspace{-5pt}
    \begin{subfigure}[b]{0.49\linewidth}   
        \centering 
        \includegraphics[width=\linewidth]{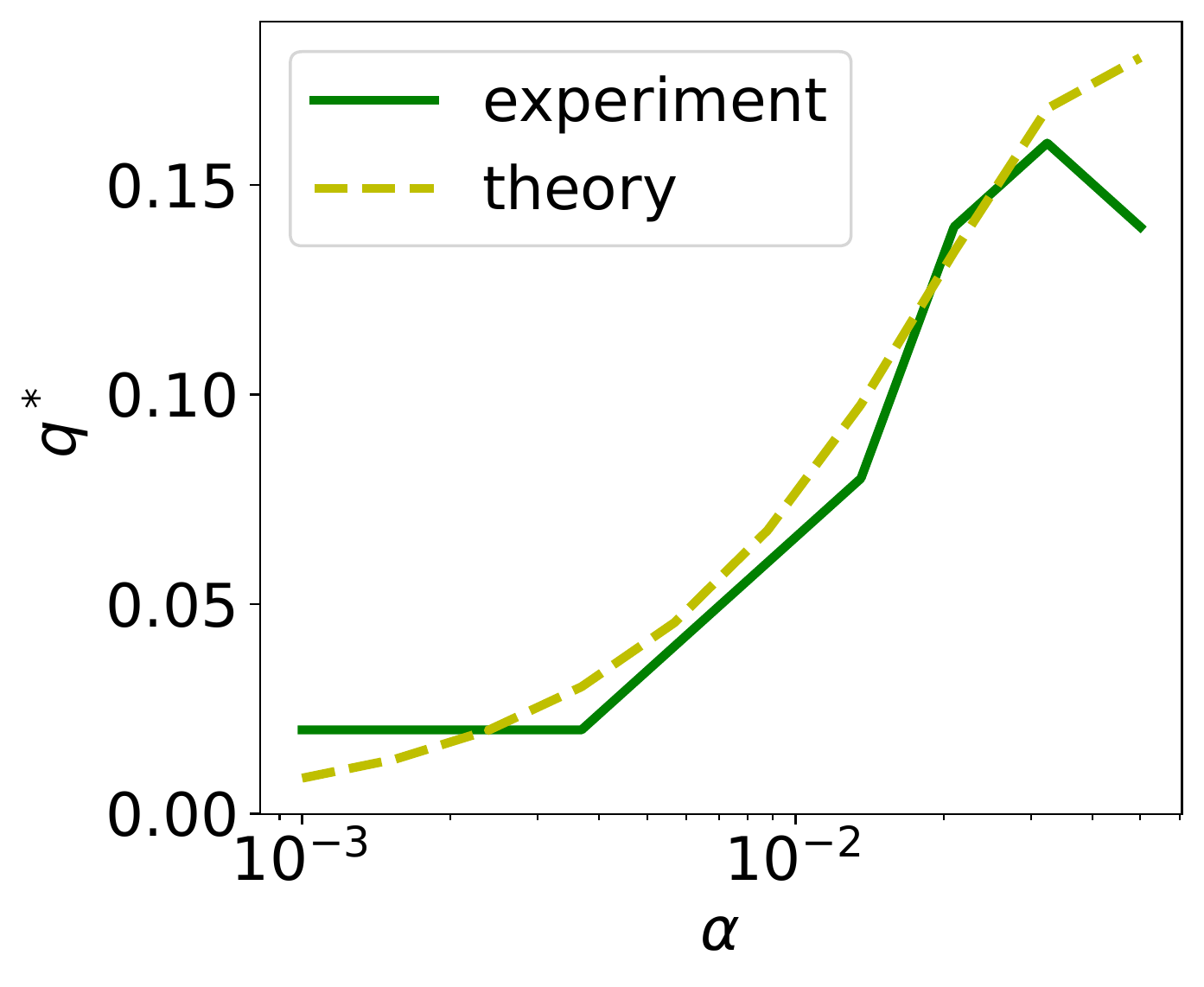}
        \caption[]%
        {{\small \,}}    
        \label{fig:sinth5}
    \end{subfigure}
    \hfill
    \begin{subfigure}[b]{0.49\linewidth}  
        \centering 
        \includegraphics[width=\linewidth]{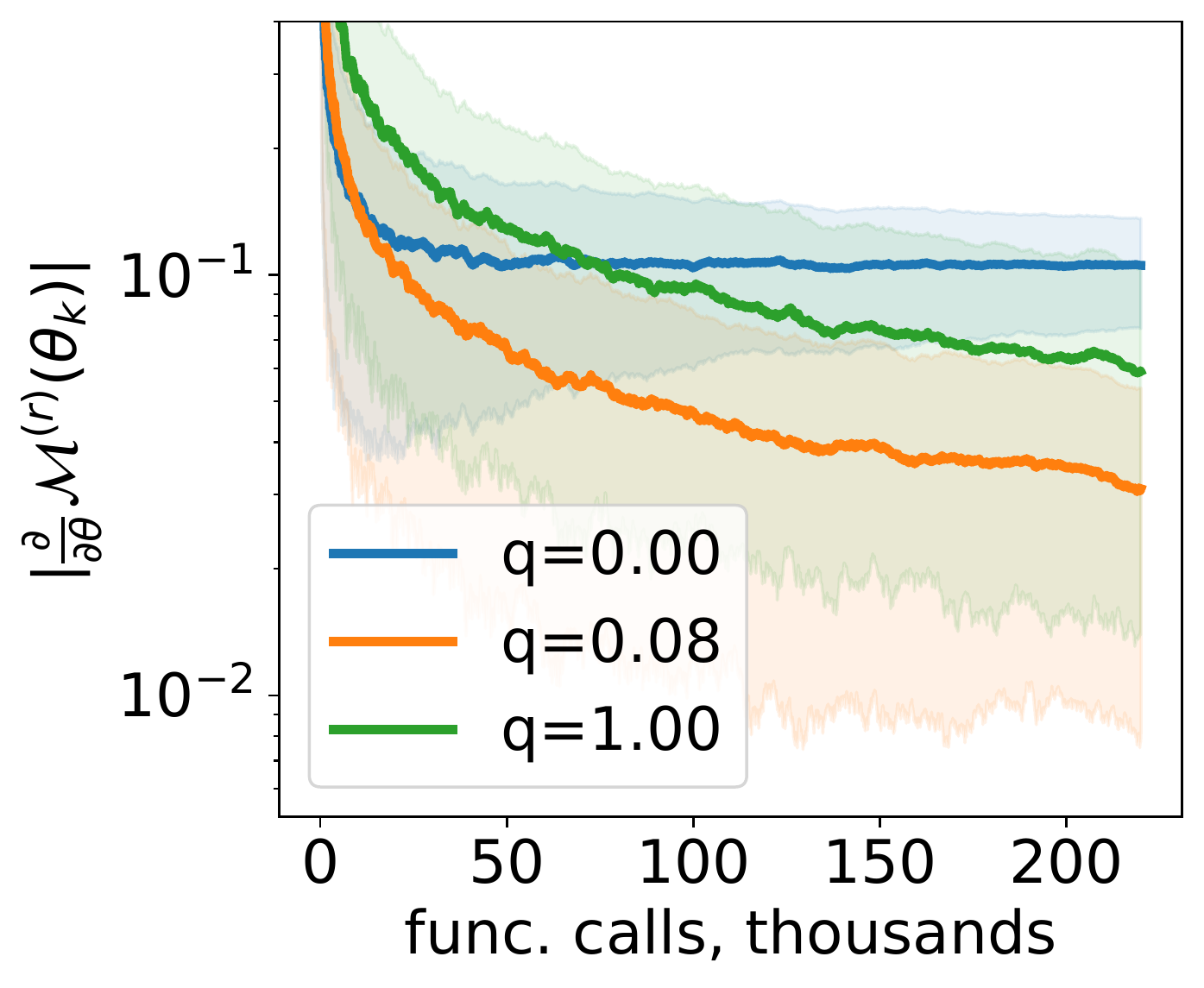}
        \caption[]%
        {{\small \,}}    
        \label{fig:sinth6}
    \end{subfigure}
  \caption{\textbf{(a)} Two tasks are sampled with equal probability from $p(\mathcal{T})$. The goal of task $i$'s inner GD is to optimize convex piecewise-polynomial $f_i (\phi)$, $\phi \in \mathbb{R}$. \textbf{(b)} Resulting ABLO objective $\mathcal{M}^{(r)} (\theta)$ where $\theta$ is a starting $\phi$ value for $(r = 10)$-step inner GD. Markers indicate results of outer SGD from a random starting parameter $\theta_0$ using FOM and UFOM ($q = 0.1$). \textbf{(c)} Convergence of $| \frac{\partial}{\partial \theta} \mathcal{M}^{(r)} (\theta_k) |$, $k$ is an outer-SGD iteration. UFOM is approaching zero gradient norm, which is not true for FOM. \textbf{(d)} We vary inner-GD learning rate $\alpha$ in a range of $[10^{-3}, 5 \cdot 10^{-2}]$ and output numerically approximated $\mathbb{V}^2$ and $\mathbb{D}^2$. \textbf{(e)} Using obtained $\mathbb{V}^2$ and $\mathbb{D}^2$ estimates, we compute theoretically optimal $q^*$ as proposed in Section \ref{sec:optqth} (``theory'' curve). Also, we find optimal $q$ empirically by grid search (``experiment'' curve). \textbf{(f)} Using the setup from Figures (d), (e), we fix $\alpha = 10^{-2}$ and plot optimization curves for $q = 0$ (FOM), $q = 1$ (exact gradient) and $q = q^* \approx 0.08$, where $q^*$ is taken from the ``theory'' plot on Figure (e). We observe the fastest convergence when $q = q^*$.}
  \label{fig:synth}
\end{figure}

\begin{figure*}[t]
    \centering
    \includegraphics[width=\textwidth]{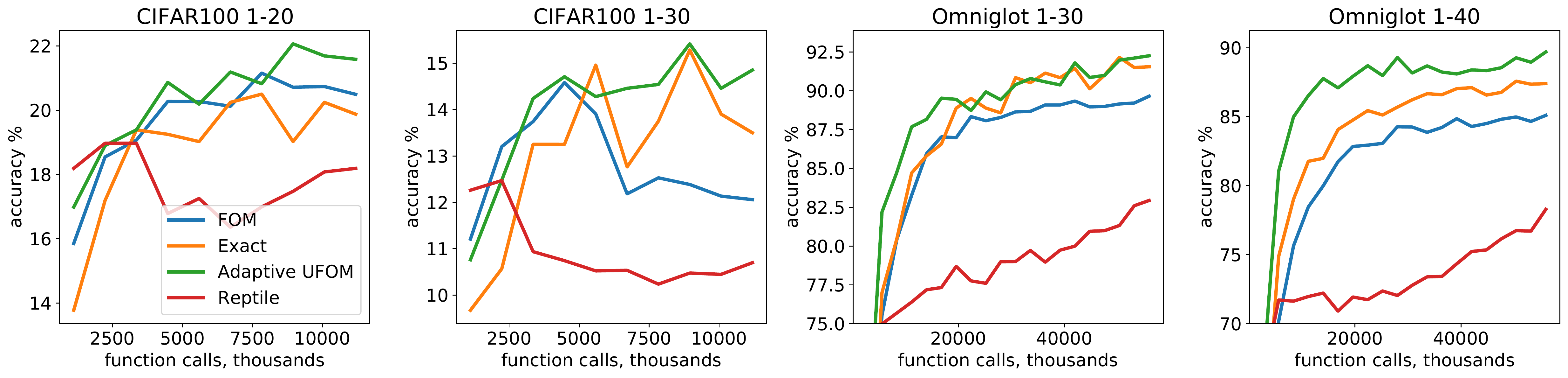}
    \caption{Test-set accuracy curves. $K$-$m$ in the title corresponds to $K$-shot $m$-way setup.}
    \label{fig:maml}
\end{figure*}

\begin{figure}[t]
    \centering
    \includegraphics[width=\linewidth]{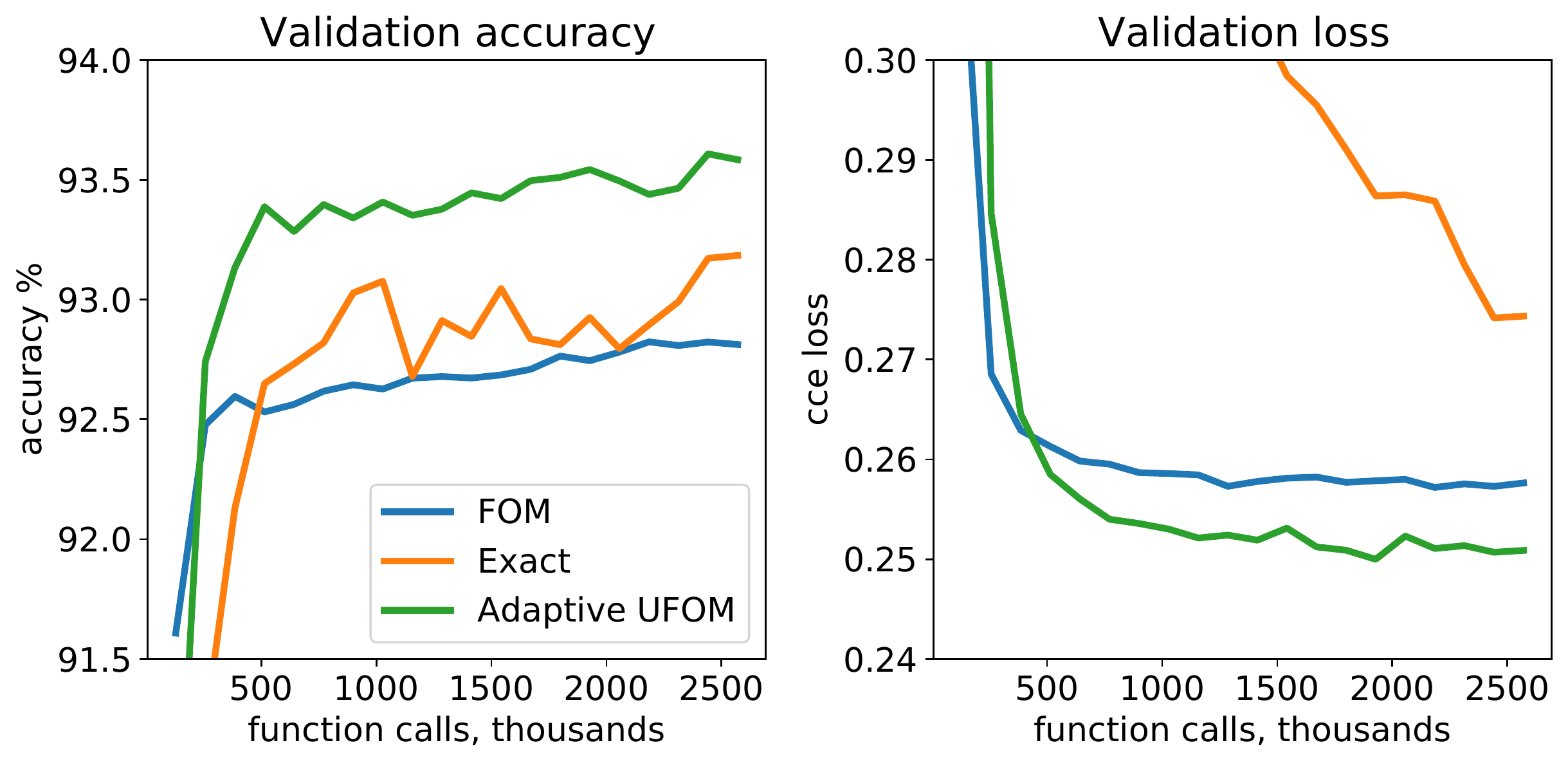}
    \caption{Data hypercleaning, training curves.}
    \label{fig:hc}
\end{figure}

\subsection{Data Hypercleaning}

We evaluate Adaptive UFOM in a hyperparameter optimization problem on MNIST, based on the setup from \citep{truncated,fordiff}. The task is to train a classifier $g (\phi, X_i) \in \mathbb{R}^{10}, X_i \in \mathbb{R}^{784}$, parametrized by $\phi$, on a subset of $5000$ labeled images, where $2500$ labels $Y_i$ have been corrupted. For that, we define $\theta \in \mathbb{R}^{5000}$, $| \Omega_\mathcal{T} | = 1$ and the inner loss $\mathcal{L}^{in}$ has the form $\mathcal{L}^{in} (\theta, \phi, \mathcal{T}) = \sum_{i = 1}^{5000} \sigma( \theta^{(i)} ) l_{CCE} (g (\phi, X_i), Y_i)$, where $\sigma ( \cdot )$ is a sigmoid function, $\theta^{(i)}$ is the $i$th element of $\theta$ and $l_{CCE} (\cdot, Y)$ is a categorical cross entropy (CCE) with respect to a label $Y_i \in \{ 0, \dots, 9 \}$. $\mathcal{L}^{out}$ is defined as a cross entropy on the validation set. As in \citep{truncated}, we set $r = 100$ and $\alpha = 1$. This way, the inner loop trains classifier $g(\phi, X)$, while the outer loop is optimizing weights of each training object. Presumably, $\sigma (\theta^{(i)}) $ should assign the bigger weight to uncorrupted examples.  We modify a setup of \citep{truncated} by using a two- instead of one-layer feedforward network, with ReLU nonlinearity, to obtain a nonconvex optimization problem. For Adaptive UFOM, on a validation score comparison we find that $q_{min} = 0.05, \beta = 0.99$ performs reasonably well. Further, we empirically find that Adaptive UFOM works best when (\ref{eq:ovdvdef}) is modified so that $\overline{\mathbb{D}_k^2} = 0.1 \cdot \mathbb{D}^2_{sm,k} / (1 - \beta^{k_{upd}})$ (dividing initial $\overline{\mathbb{D}_k^2}$ by $10$). We optimize all compared methods for the same amount of function calls -- see more details, including the evolution of adaptive probabilities, in Appendix \ref{sec:hcdet}. Table \ref{tab:testacc_cleaning} and Figure~\ref{fig:hc} demonstate optimization results: as compared to exact gradient and FOM, Adaptive UFOM results in a faster optimization and better generalization. We keep the same value of $q_{min}, \beta$ and modification of (\ref{eq:ovdvdef}) in our subsequent few-shot learning experiments.

\subsection{Few-shot Image Classification} \label{sec:fewshot}

Few-shot image classification \citep{maml} addresses adaptation to a new task when supplied with a small amount of training data. Each task $\mathcal{T}$ is then a pair $\mathcal{T} = ( \mathcal{D}^{tr}_\mathcal{T}, \mathcal{D}^{test}_\mathcal{T} )$, where
\begin{gather*}
    \mathcal{D}^{tr}_\mathcal{T} = ( ( X^{tr}_i, Y^{tr}_i ) )_{i = 1}^t, \quad \mathcal{D}^{test}_\mathcal{T} = ( ( X^{test}_i, Y^{test}_i ) )_{i = 1}^{t'},
\end{gather*}
where $X$ are classified images, $Y \in \{ 1, \dots, m \}$ are labels, $\mathcal{D}^{tr}_\mathcal{T}$ is a training set of a small size $t$, and $\mathcal{D}^{test}_\mathcal{T}$ is a test set of size $t'$. Let $g (\phi, X) \in \mathbb{R}^m$, be a classifier with parameters $\phi$ and input $X$.
Define $\mathcal{L}_\mathrm{CCE} (\phi, \mathcal{D}) = \frac{1}{| \mathcal{D} |} \sum_{(X, Y) \in \mathcal{D}} l_\mathrm{CCE} (g(\phi, X), Y)$. \textit{Model-Agnostic Meta-Learning} (MAML) \citep{maml} states the problem of few-shot classification as ABLO (\ref{eq:maml1}-\ref{eq:opt}) where $V(\theta, \mathcal{T}) \equiv \theta \in \mathbb{R}^p$, $\mathcal{L}^{in} (\theta, \phi, \mathcal{T}) \equiv \mathcal{L}_\mathrm{CCE} (\phi, \mathcal{D}_\mathcal{T}^{tr})$ and $\mathcal{L}^{out} (\theta, \phi, \mathcal{T}) \equiv \mathcal{L}_\mathrm{CCE} (\phi, \mathcal{D}_\mathcal{T}^{test})$. This way, inner GD corresponds to fitting $g (\phi, \cdot)$ to a training set $\mathcal{D}_\mathcal{T}^{tr}$ of a small size, while the outer SGD is searching for an initialization $\theta = \phi_0$ maximizing generalization on the unseen data $\mathcal{D}_\mathcal{T}^{test}$.

We evaluate Adaptive UFOM on CIFAR100 and Omniglot. Both datasets consist of many classes with a few images for each class.
To sample from $p(\mathcal{T})$ in the $K$-shot $m$-way setting, $m$ classes are chosen randomly and $K + 1$ examples are drawn from each class: $K$ examples for training and $1$ for testing, i.e. $s = mK, t = m$. We reuse convolutional architectures for $g(\phi, X)$ from \citep{maml} and set inner-loop length to $r = 10$, as in \citep{reptile}. We also compare with Reptile \citep{reptile} -- another modification of MAML which does not store intermediate states in memory. We run all methods for the same amount of function calls (Figure \ref{fig:maml} and Table \ref{tab:testacc_maml}). Adaptive UFOM shows the best performance in all setups. More details, including adaptive $q$ plots, can be found in Appendix \ref{sec:expdet}.

\begin{table}[t]
\caption{Data hypercleaning, accuracy (\%) and CCE. First row shows accuracy on the uncorrupted part of the train set.}
\label{tab:testacc_cleaning}
\begin{center}
\begin{small}
\begin{sc}
\begin{tabular}{lccc}
\toprule
 & Train & Test & Test CCE \\
\midrule
Exact & 94.07 & 90.82 & 0.4795 \\
FOM & 92.04 & 90.83 & 0.3187 \\
Adaptive UFOM & \textbf{95.27} & \textbf{92.24} & \textbf{0.2811} \\
\bottomrule
\end{tabular}
\end{sc}
\end{small}
\end{center}

\end{table}

\begin{table}[t]
\caption{Test accuracy (\%) in a range of $(K = 1)$-shot setups, with varying $m$-ways. Abbreviations: ``C-100" = CIFAR100 and ``OMNI." = Omniglot.}
\label{tab:testacc_maml}
\begin{center}
\begin{small}
\begin{sc}
\begin{tabular}{lccccc}
\toprule
& $m$ & Exact & Reptile & FOM & Ad.UFOM \\
\midrule
C-100 & 20 & 19.9 & 18.5 & 21.3 & \textbf{22.1} \\
C-100 & 30 & 13.8 & 11.0 & 12.6 & \textbf{14.7} \\
Omni. & 30 & 91.6 & 83.1 & 89.3 & \textbf{92.0} \\
Omni. & 40 & 87.5 & 78.2 & 84.88 & \textbf{89.1} \\
\bottomrule
\end{tabular}
\end{sc}
\end{small}
\end{center}
\end{table}

\section{Conclusion}
\label{sec:conclusion}

We proposed an unbiased first-order method (UFOM), providing a memory-efficient unbiased gradient estimator for approximate bi-level optimization (ABLO). We show that UFOM-based SGD converges to a stationary point of the ABLO problem, and derive an expression for the optimal form of UFOM, which results in faster convergence than the exact gradient and first-order methods. Finally, we propose a method for choosing the parameter of UFOM adaptively.

\section{Acknowledgements}

We thank John Bronskill for helpful feedback on an early version of the manuscript. We further thank anonymous reviewers for their valuable feedback.

Valerii Likhosherstov acknowledges support from the
Cambridge Trust and DeepMind. Adrian Weller acknowledges support from a Turing AI Fellowship under grant EP/V025379/1, The Alan Turing Institute under EPSRC grant EP/N510129/1 and TU/B/000074, and the Leverhulme Trust via CFI.

\bibliography{references}
\bibliographystyle{icml2021}

\newpage

\onecolumn
\LARGE
\textsc{Debiasing a First-order Heuristic for Approximate Bi-level Optimization:
Supplementary Materials}

\normalsize

\appendix

\section{Related work}
\label{sec:related}

\textbf{Unbiased gradient estimation.} Stochastic gradient descent (SGD) \citep{bottou} is an essential component of large-scale machine learning. Unbiased gradient estimation, as a part of SGD, guarantees convergence to a stationary point of the optimization objective. For this reason, many algorithms were proposed to perform unbiased gradient estimation in various applications, e.g. REINFORCE \citep{reinforce} and its low-variance modifications \citep{rebar,muprop} with applications in reinforcement learning and evolution strategies \citep{nes}. The variational autoencoder \citep{autoencoder} and variational dropout \citep{vardrop} are based on a reparametrization trick for unbiased back-propagation through continuous or, involving a relaxation \citep{gumbel,concrete}, discrete random variables. 

\textbf{Theory of meta-learning.} Our proof technique fits into the realm of theoretical understanding for meta-learning, which has been explored in \citep{convergence_theory, gcsmaml} for nonconvex functions (see also \cite{seff} for convergence analysis in certain bi-level optimization setups), as well as \citep{online_metalearning, provable_guarantees} for convex functions and their extensions, such as online convex optimization \citep{oco_book}. While \citep{convergence_theory} provides a brief counterexample for which $(r = 1)$-step FOM does not converge, we establish a rigorous non-convergence counterexample proof for FOM with any number of steps $r$ when using \textit{stochastic} gradient descent. Our proof is based on arguments using expectations and probabilities, providing new insights into stochastic optimization during meta-learning. Furthermore, while \citep{gcsmaml} touches on the \textit{zero-order} case found in \citep{esmaml}, which is mainly focused on reinforcement learning, our work studies the case where exact gradients are available, which is suited for supervised learning.

\section{Synthetic Experiment Details} \label{sec:synth}

For the synthetic experiments shown in Figures \ref{fig:sinth1}, \ref{fig:sinth2}, and \ref{fig:sinth3}, we set the following parameters from Theorem \ref{th:conv} and proof of Theorem \ref{th:counter}:
\begin{gather*}
    r = 10, \quad \alpha = 0.1, \quad q = 0.1 \; \text{(UFOM)}, \quad \forall k \in \mathbb{N}: \gamma_k = \frac{10}{k}, \\
    a_1 = 0.5, \quad a_2 = 1.5, \quad b_1 = 0, \quad b_2 = 17.39, \quad A = 12.59
\end{gather*}
($b_2$ and $A$ values are obtained by setting $D = 0.06$ in the Theorem \ref{th:counter} proof).
We do 5 simulations for FOM and UFOM, where we sample $\theta_0$ from a uniform distribution on a segment $[-10, 30]$.

To demonstrate a wider range of $q^*$ values, for Figures \ref{fig:sinth4}, \ref{fig:sinth5}, and \ref{fig:sinth6}, we opt for a slightly different set of parameters:
\begin{equation*}
    r = 10, \quad \alpha = 0.1, \quad \forall k \in \mathbb{N}: \gamma_k = \frac{10}{k}, \quad a_1 = 0.5, \quad a_2 = 1.5, \quad b_2 = 10, \quad A = 10.
\end{equation*}

To approximate $\mathbb{V}^2, \mathbb{D}^2$ on Figure \ref{fig:sinth4}, we find a maximal value of the corresponding expectation (computed precisely for two tasks with equal probability) on a grid of 10000 $\theta$ values on $[-50, 50]$.

To output the ``experiment'' curve on Figure \ref{fig:sinth5}, for each value of $\alpha$, we search for $q$ on a grid of $20$ elements between $0.02$ and $0.4$. For each $q$ on a grid we simulate $10000$ SGD loops ($100$ iterations each) from a starting point drawn uniformly on $[-50, 50]$. Then we compute the average of the curves corresponding to $| \frac{\partial}{\partial \theta} \mathcal{M}^{(p)} |$ for these $10000$ simulations. Given the best $q$ for each $\alpha$, we choose the one which achieves the minimal average value of $| \frac{\partial}{\partial \theta} \mathcal{M}^{(p)} |$ in the fastest time, computed for $q = 0.02$.

For Figure \ref{fig:sinth6}, we report mean and standard error over $1000$ curves starting from a point drawn uniformly on $[-50, 50]$.

\section{Data Hypercleaning Details} \label{sec:hcdet}

Validation loss and $\phi_0$ do not depend on $\theta$, and thus to use UFOM, we include the last inner optimization step into the definition of $\mathcal{L}^{out}$. This implies that $\mathcal{L}^{out}$ depends on $\theta$. We partition the original MNIST train set into sets of size 5000 for the hyperclearning task's train and validation sets. We use the MNIST test set for testing. We corrupt half of training examples by drawing labels uniformly from ${0, \dots, 9}$. For the classifier, we use a 2-layer feedforward network with dimensions $784 \to 256 \to 10$, with ReLU nonlinearities. We use Adam \citep{adam} as an outer-loop optimizer with a learning rate of $0.1$. We run all methods for a number of function calls equivalent to $500$ outer-loop iterations for the memory-efficient exact gradient.

Figure \ref{fig:hc_probs} demonstrates adaptively chosen $q$ during optimization using Adaptive UFOM. We observe that $q$ stabilizes soon in the beginning of optimization and doesn't change much during training. This could mean that statistics $\overline{\mathbb{D}^2}, \overline{\mathbb{V}^2}$ are roughly the same along the whole optimization trajectory.

\begin{figure}
    \centering
    \includegraphics[width=0.25\textwidth]{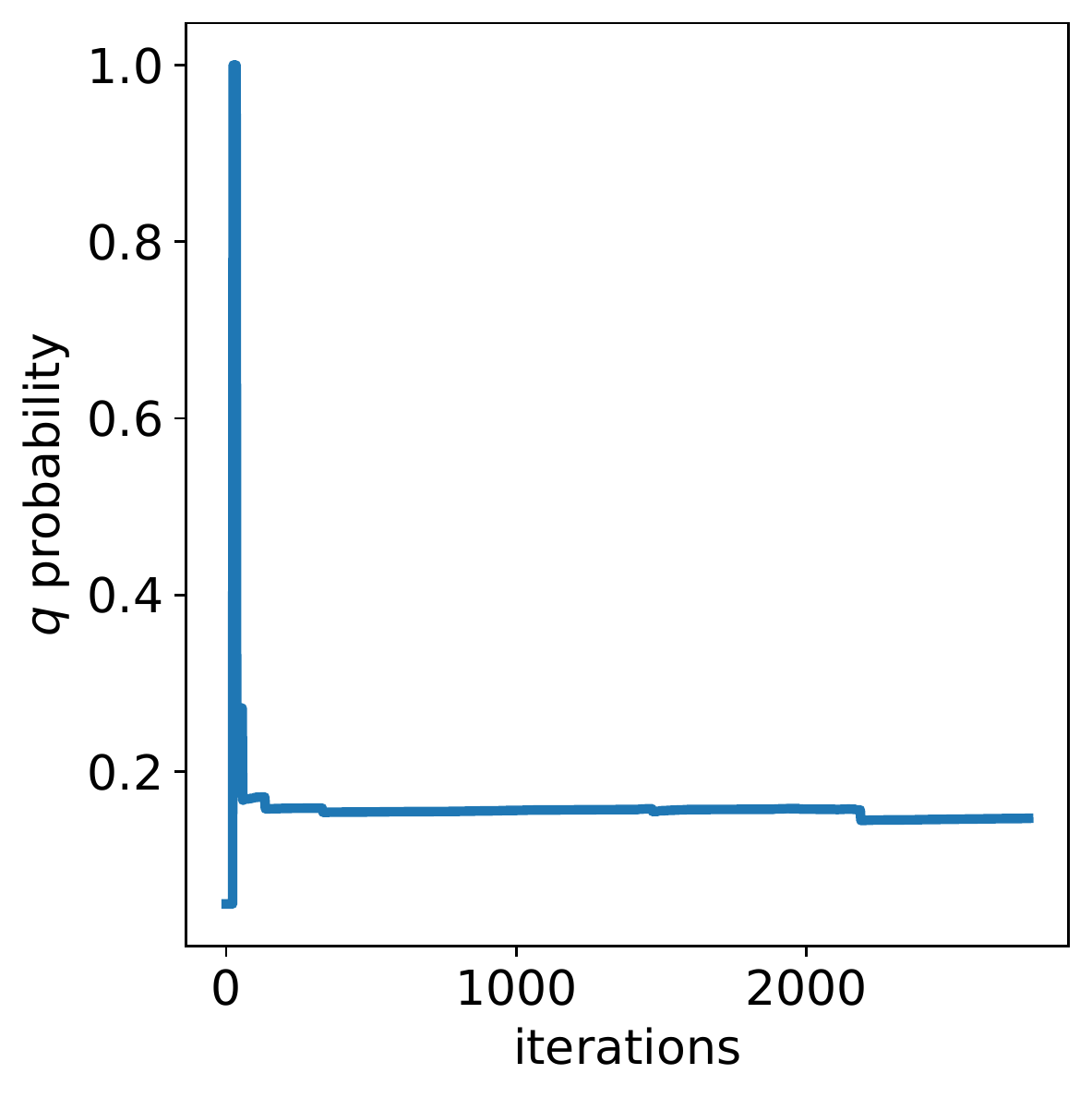}
    \caption{Adaptive $q$ probabilities during training for the hypercleaning setup.}
    \label{fig:hc_probs}
\end{figure}

\section{Few-Shot Learning Details} \label{sec:expdet}

All results are reported in a transductive setting \citep{reptile}. In all setups for Reptile, we reuse the code from \citep{reptile}. For the exact ABLO and Adaptive UFOM, we clip each entry of the gradient to be in $[-0.1, 0.1]$. We use the following hyperparameters for the two datasets:
\begin{itemize}
    \item \textbf{Omniglot}. We run all methods for the number of function calls equivalent to $\tau = 200000$ outer iterations of the memory-efficient exact ABLO. For exact ABLO/FOM/Adaptive UFOM, we set $\forall k: \gamma_k = 0.1$, meta-batch size of 5, $\alpha = 0.005$. In all setups for Reptile, we set hyperparameter values to be equal to the ones found in the 1-shot 20-way case \citep{reptile}. This is because Reptile underperforms if its hyperparameters are set to the values used for exact ABLO/FOM/Adaptive UFOM. We take train and test splits as in \citep{maml,reptile}.

    \item \textbf{CIFAR100}. We use the same hyperparameters as in Omniglot setup, but run all methods for the number of function calls equivalent to $\tau = 40000$ outer iterations of the memory-efficient exact ABLO. For a train-test split, we combine CIFAR100's train and test sets and randomly split classes into 80 train and 20 test classes.
\end{itemize}

Figure \ref{fig:probs} demonstrates adaptively chosen $q$ during optimization using Adaptive UFOM. Again, we observe that $q$ stabilizes and doesn't change much during training, most probably meaning that statistics $\overline{\mathbb{D}^2}, \overline{\mathbb{V}^2}$ are roughly the same along the optimization trajectory.

\begin{figure}
    \centering
    \includegraphics[width=\textwidth]{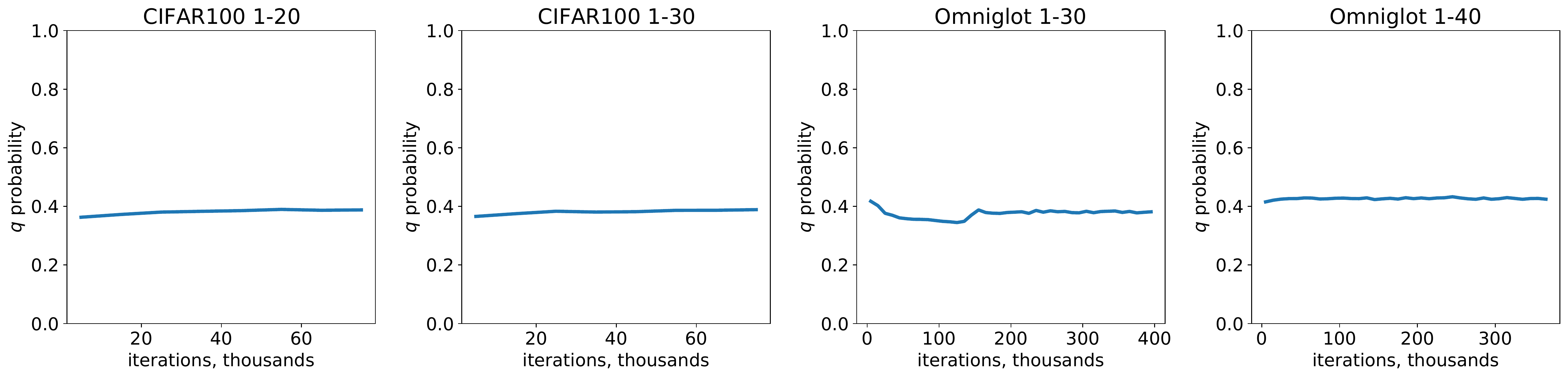}
    \caption{Adaptive $q$ probabilities during training for the few-shot learning setup.}
    \label{fig:probs}
\end{figure}

\section{Proofs}
\label{sec:proofs}

In this section, we provide proofs for Theorems \ref{th:conv} and \ref{th:counter} from the main body of the paper.

\subsection{Theorem \ref{th:conv}}

We start by formulating and proving three helpful lemmas. In proofs we use the fact that, as a direct consequence of Assumption \ref{as:liphes}, for all $\theta \in \mathbb{R}^s, \phi \in \mathbb{R}^p, \mathcal{T} \in \Omega_\mathcal{T}$
\begin{equation*}
    \max ( \| \frac{\partial^2}{\partial \theta \partial \phi} \mathcal{L}^{in} (\theta, \phi, \mathcal{T}) \|_2, \| \frac{\partial^2}{\partial \phi^2} \mathcal{L}^{in} (\theta, \phi, \mathcal{T}) \|_2 ) \leq L_2 .
\end{equation*}
\begin{lemma} \label{lemma:unbbnd}
Let $p, r, s \in \mathbb{N}$, $\{ \alpha_j > 0 \}_{j = 1}^\infty$ be any sequence, $q \in (0, 1]$, $p(\mathcal{T})$ be a distribution on a nonempty set $\Omega_\mathcal{T}$, $\xi \sim \mathrm{Bernoulli} (q)$ be independent of $p(\mathcal{T})$, $V : \mathbb{R}^s \times \Omega_\mathcal{T} \to \mathbb{R}^p, \mathcal{L}^{in}, \mathcal{L}^{out} : \mathbb{R}^s \times \mathbb{R}^p \times \Omega_\mathcal{T} \to \mathbb{R}$ be functions satisfying Assumption \ref{as:liphes}, and let $U^{(r)}: \mathbb{R}^s \times \Omega_\mathcal{T} \to \mathbb{R}^p$ be defined according to (\ref{eq:maml1}-\ref{eq:maml}), $\mathcal{M}^{(r)} : \mathbb{R}^s \to \mathbb{R}$ be defined according to (\ref{eq:opt}) and satisfy Assumption \ref{as:mreg}. Define $\mathcal{G}_{FO}: \mathbb{R}^s \times \Omega_\mathcal{T} \to \mathbb{R}^s$ and $\mathcal{G}: \mathbb{R}^s \times \Omega_\mathcal{T} \times \{ 0, 1 \} \to \mathbb{R}^s$ as
\begin{gather*}
    \mathcal{G}_{FO} (\theta, \mathcal{T}) = \frac{\partial}{\partial \theta} \mathcal{L}^{out} (\theta, \phi_r, \mathcal{T}) + (\frac{\partial}{\partial \theta} V (\theta, \mathcal{T}))^\top \frac{\partial}{\partial \phi} \mathcal{L}^{out} (\theta, \phi_r, \mathcal{T}), \quad \phi_r = U^{(r)} (\theta, \mathcal{T}), \\
    \mathcal{G} (\theta, \mathcal{T}, x) = \mathcal{G}_{FO} (\theta, \mathcal{T}) + \frac{x}{q} (\nabla_\theta \mathcal{L}^{out} (\theta, \phi_r, \mathcal{T}) - \mathcal{G}_{FO} (\theta, \mathcal{T})) .
\end{gather*}
Then
\begin{gather}
    \mathbb{D}^2 \leq \mathbb{D}^2_{bound}, \label{eq:fodiff} \\
    \mathbb{V}^2 \leq \mathbb{V}_{bound}^2, \quad \mathbb{V}_{bound} = L_1 + M_1 L_1 \prod_{j = 1}^r (1 + \alpha_j L_2) + L_1 L_2 \sum_{j = 1}^r \alpha_j \prod_{j' = j}^r (1 + \alpha_{j'} L_2), \label{eq:varub}
\end{gather}
where $\mathbb{D}, \mathbb{V}, \mathbb{D}_{bound}$ are defined in (\ref{eq:dtruedef}), (\ref{eq:vtruedef}), (\ref{eq:ddef}) respectively. Further, for all $\theta \in \mathbb{R}^s$
\begin{gather}
    \mathbb{E}_{\xi, p(\mathcal{T})} \left[\mathcal{G} (\theta, \mathcal{T}, \xi) \right]= \frac{\partial}{\partial \theta} \mathcal{M}^{(r)} (\theta), \label{eq:unbgrad} \\
    \mathbb{E}_{\xi, p(\mathcal{T})} \left[\| \mathcal{G} (\theta, \mathcal{T}, \xi) \|_2^2 \right] \leq (\frac{1}{q} - 1) \mathbb{D}^2 + \mathbb{V}^2 . \label{eq:bndvar}
\end{gather}
\end{lemma}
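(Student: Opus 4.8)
The plan is to handle the two deterministic estimates (\ref{eq:fodiff})--(\ref{eq:varub}) first and then the two probabilistic claims (\ref{eq:unbgrad})--(\ref{eq:bndvar}), since they rely on different ideas. For the deterministic part I would start from the consequence of Assumption \ref{as:liphes} recorded just above the lemma, $\|\frac{\partial^2}{\partial\theta\partial\phi}\mathcal{L}^{in}\|_2\le L_2$ and $\|\frac{\partial^2}{\partial\phi^2}\mathcal{L}^{in}\|_2\le L_2$ uniformly, and bound the backward adjoints $\nabla_{\phi_j}\mathcal{L}^{out}$ that appear in (\ref{eq:gradfirst})--(\ref{eq:gradlast}). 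From $\nabla_{\phi_r}\mathcal{L}^{out}=\frac{\partial}{\partial\phi}\mathcal{L}^{out}(\theta,\phi_r,\mathcal{T})$ we get $\|\nabla_{\phi_r}\mathcal{L}^{out}\|_2\le L_1$, and the recursion (\ref{eq:gradlast}) gives $\|\nabla_{\phi_{j-1}}\mathcal{L}^{out}\|_2\le(1+\alpha_jL_2)\|\nabla_{\phi_j}\mathcal{L}^{out}\|_2$, so a backward induction yields $\|\nabla_{\phi_j}\mathcal{L}^{out}\|_2\le L_1\prod_{j'=j+1}^{r}(1+\alpha_{j'}L_2)$. Feeding this into (\ref{eq:gradpremid}) gives $\|D_j\|_2\le\alpha_jL_1L_2\prod_{j'=j+1}^{r}(1+\alpha_{j'}L_2)$, and since $\nabla_{\phi_{j-1}}\mathcal{L}^{out}-\nabla_{\phi_j}\mathcal{L}^{out}=-\alpha_j(\frac{\partial^2}{\partial\phi^2}\mathcal{L}^{in})^\top\nabla_{\phi_j}\mathcal{L}^{out}$, a telescoping sum gives $\|\nabla_{\phi_0}\mathcal{L}^{out}-\frac{\partial}{\partial\phi}\mathcal{L}^{out}(\theta,\phi_r,\mathcal{T})\|_2\le L_1L_2\sum_{j=1}^{r}\alpha_j\prod_{j'=j+1}^{r}(1+\alpha_{j'}L_2)$.

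Next I would assemble these estimates. Subtracting $\mathcal{G}_{FO}$ from the exact gradient (\ref{eq:gradfirst}) cancels the $\frac{\partial}{\partial\theta}\mathcal{L}^{out}$ term and leaves $(\frac{\partial}{\partial\theta}V)^\top(\frac{\partial}{\partial\phi}\mathcal{L}^{out}(\theta,\phi_r,\mathcal{T})-\nabla_{\phi_0}\mathcal{L}^{out})-\sum_{j=1}^rD_j$; bounding with $\|\frac{\partial}{\partial\theta}V\|_2\le M_1$ and the two estimates above, then enlarging each product range from $[j+1,r]$ to $[j,r]$ (harmless since each factor is $\ge 1$), gives the pointwise bound $\|\mathcal{G}_{FO}(\theta,\mathcal{T})-\nabla_\theta\mathcal{L}^{out}(\theta,\phi_r,\mathcal{T})\|_2\le\mathbb{D}_{bound}$ for all $\theta,\mathcal{T}$; squaring, taking $\mathbb{E}_{p(\mathcal{T})}$ and then $\sup_\theta$ yields (\ref{eq:fodiff}). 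For (\ref{eq:varub}) I would instead bound the three summands of (\ref{eq:gradfirst}) directly: $\|\frac{\partial}{\partial\theta}\mathcal{L}^{out}\|_2\le L_1$, $\sum_j\|D_j\|_2\le L_1L_2\sum_{j=1}^r\alpha_j\prod_{j'=j}^r(1+\alpha_{j'}L_2)$, and $\|(\frac{\partial}{\partial\theta}V)^\top\nabla_{\phi_0}\mathcal{L}^{out}\|_2\le M_1L_1\prod_{j=1}^r(1+\alpha_jL_2)$, which sum to $\mathbb{V}_{bound}$; the same squaring/$\mathbb{E}_{p(\mathcal{T})}$/$\sup_\theta$ step gives (\ref{eq:varub}).

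The probabilistic claims follow by conditioning on $\mathcal{T}$. Since $\mathbb{E}_\xi[\xi]=q$ and $\xi$ is independent of $\mathcal{T}$, $\mathbb{E}_\xi[\mathcal{G}(\theta,\mathcal{T},\xi)]=\mathcal{G}_{FO}(\theta,\mathcal{T})+(\nabla_\theta\mathcal{L}^{out}(\theta,U^{(r)}(\theta,\mathcal{T}),\mathcal{T})-\mathcal{G}_{FO}(\theta,\mathcal{T}))=\nabla_\theta\mathcal{L}^{out}(\theta,U^{(r)}(\theta,\mathcal{T}),\mathcal{T})$, and taking $\mathbb{E}_{p(\mathcal{T})}$ and using Assumption \ref{as:mreg} to exchange the outer gradient with the expectation gives (\ref{eq:unbgrad}). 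For (\ref{eq:bndvar}), write $g=\mathcal{G}_{FO}(\theta,\mathcal{T})$ and $G=\nabla_\theta\mathcal{L}^{out}(\theta,U^{(r)}(\theta,\mathcal{T}),\mathcal{T})$: $\mathcal{G}(\theta,\mathcal{T},\xi)$ equals $g+\tfrac1q(G-g)$ with probability $q$ and $g$ with probability $1-q$, so expanding squared norms and using $\|g\|_2^2+2\langle g,G-g\rangle+\|G-g\|_2^2=\|G\|_2^2$ gives $\mathbb{E}_\xi[\|\mathcal{G}(\theta,\mathcal{T},\xi)\|_2^2]=\|G\|_2^2+(\tfrac1q-1)\|G-g\|_2^2$; taking $\mathbb{E}_{p(\mathcal{T})}$, noting $\tfrac1q-1\ge0$ as $q\in(0,1]$, and using the definitions (\ref{eq:dtruedef}),(\ref{eq:vtruedef}) (suprema over $\theta$) bounds this by $(\tfrac1q-1)\mathbb{D}^2+\mathbb{V}^2$.

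The only genuinely delicate point is the bookkeeping in the backward induction: keeping the index ranges of the products $\prod(1+\alpha_{j'}L_2)$ consistent across the adjoint bound, the $D_j$ bound, and the telescoping difference, and checking that the final constants collapse to exactly $\mathbb{D}_{bound}$ and $\mathbb{V}_{bound}$ after the product ranges are widened. Everything else — the cancellation of the $\frac{\partial}{\partial\theta}\mathcal{L}^{out}$ term, the norm/submultiplicativity estimates, and the two-point expectation over $\xi$ — is routine.
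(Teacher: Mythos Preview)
Your proposal is correct and follows essentially the same route as the paper's proof: the same backward induction on the adjoints $\nabla_{\phi_j}\mathcal{L}^{out}$, the same telescoping estimate for $\nabla_{\phi_0}\mathcal{L}^{out}-\frac{\partial}{\partial\phi}\mathcal{L}^{out}$, and the same assembly of the three pieces of (\ref{eq:gradfirst}) to reach $\mathbb{D}_{bound}$ and $\mathbb{V}_{bound}$. The one small difference is in (\ref{eq:bndvar}): the paper first applies a triangle inequality $\|\mathcal{G}\|_2\le\|\mathcal{G}-\nabla_\theta\mathcal{L}^{out}\|_2+\|\nabla_\theta\mathcal{L}^{out}\|_2$ and then squares, whereas you expand the two-point Bernoulli expectation directly and obtain the exact identity $\mathbb{E}_\xi[\|\mathcal{G}\|_2^2]=\|G\|_2^2+(\tfrac1q-1)\|G-g\|_2^2$; your version is slightly cleaner and sidesteps the sign bookkeeping on the factor $1-\xi/q$, but both arrive at the same bound after taking $\mathbb{E}_{p(\mathcal{T})}$ and $\sup_\theta$.
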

\begin{proof}
First, we show (\ref{eq:fodiff}). Let $\phi_0, \dots, \phi_r$ be inner-GD rollouts (\ref{eq:maml1}-\ref{eq:maml}) corresponding to $\theta$ and $\mathcal{T}$. Observe that by Assumption \ref{as:liphes}
\begin{equation*}
    \| \frac{\partial}{\partial \phi} \mathcal{L}^{out} (\theta, \phi_r, \mathcal{T}) \|_2 = \| \nabla_{\phi_r} \mathcal{L}^{out} (\theta, \phi_r, \mathcal{T}) \|_2 \leq L_1
\end{equation*}
and according to (\ref{eq:gradmid}-\ref{eq:gradlast}) for each $1 \leq j \leq r$
\begin{align}
    \| \nabla_{\phi_{j - 1}} \mathcal{L}^{out} (\theta, \phi_r, \mathcal{T}) \|_2 &= \| \nabla_{\phi_j} \mathcal{L}^{out} (\theta, \phi_r, \mathcal{T}) - \alpha_j \biggl( \frac{\partial^2}{\partial \phi^2} \mathcal{L}^{in} (\theta, \phi_{j - 1}, \mathcal{T}) \biggr)^\top \nabla_{\phi_j} \mathcal{L}^{out} (\theta, \phi_r, \mathcal{T}) \|_2 \nonumber \\
    &= \| \nabla_{\phi_j} \mathcal{L}^{out} (\theta, \phi_r, \mathcal{T}) \|_2 + \alpha_j \| \frac{\partial^2}{\partial \phi^2} \mathcal{L}^{in} (\theta, \phi_{j - 1}, \mathcal{T}) \|_2 \| \nabla_{\phi_j} \mathcal{L}^{out} (\theta, \phi_r, \mathcal{T}) \|_2 \nonumber \\
    &\leq (1 + \alpha_j L_2) \| \nabla_{\phi_j} \mathcal{L}^{out} (\theta, \phi_r, \mathcal{T}) \|_2 \leq \dots \leq \| \nabla_{\phi_r} \mathcal{L}^{out} (\theta, \phi_r, \mathcal{T}) \|_2 \prod_{j' = j}^r (1 + \alpha_{j'} L_2) \nonumber \\
    &\leq L_1 \prod_{j' = j}^r (1 + \alpha_{j'} L_2). \label{eq:egradub}
\end{align}

In addition, we deduce that
\begin{align*}
    &\| \nabla_{\phi_0} \mathcal{L}^{out} (\theta, \phi_r, \mathcal{T}) - \frac{\partial}{\partial \phi} \mathcal{L}^{out} (\theta, \phi_r, \mathcal{T}) \|_2 = \| \nabla_{\phi_1} \mathcal{L}^{out} (\theta, \phi_r, \mathcal{T}) - \alpha_1 \biggl( \frac{\partial^2}{\partial \phi^2} \mathcal{L}^{in} (\theta, \phi_0, \mathcal{T}) \biggr)^\top \nabla_{\phi_1} \mathcal{L}^{out} (\theta, \phi_r, \mathcal{T}) \\
    &- \frac{\partial}{\partial \phi} \mathcal{L}^{out} (\theta, \phi_r, \mathcal{T}) \|_2 \\
    &\leq \| \nabla_{\phi_1} \mathcal{L}^{out} (\theta, \phi_r, \mathcal{T}) - \frac{\partial}{\partial \phi} \mathcal{L}^{out} (\theta, \phi_r, \mathcal{T}) \|_2 + \alpha_1 \| \frac{\partial^2}{\partial \phi^2} \mathcal{L}^{in} (\theta, \phi_0, \mathcal{T}) \|_2 \| \nabla_{\phi_1} \mathcal{L}^{out} (\theta, \phi_r, \mathcal{T}) \|_2 \\
    &\leq \| \nabla_{\phi_1} \mathcal{L}^{out} (\theta, \phi_r, \mathcal{T}) - \frac{\partial}{\partial \phi} \mathcal{L}^{out} (\theta, \phi_r, \mathcal{T}) \|_2 + \alpha_1 L_1 L_2 \prod_{j' = 1}^r (1 + \alpha_{j'} L_2) \leq \dots \\
    &\leq \| \nabla_{\phi_r} \mathcal{L}^{out} (\theta, \phi_r, \mathcal{T}) - \frac{\partial}{\partial \phi} \mathcal{L}^{out} (\theta, \phi_r, \mathcal{T}) \|_2 + L_1 L_2 \sum_{j = 1}^r \alpha_j \prod_{j' = j}^r (1 + \alpha_{j'} L_2) = L_1 L_2 \sum_{j = 1}^r \alpha_j \prod_{j' = j}^r (1 + \alpha_{j'} L_2) .
\end{align*}

Then:
\begin{align*}
    &\| \mathcal{G}_{FO} (\theta, \mathcal{T}) - \nabla_\theta \mathcal{L}^{out} (\theta, U^{(r)} (\theta, \mathcal{T}), \mathcal{T}) \|_2 = \| \biggl( \frac{\partial}{\partial \theta} V(\theta, \mathcal{T}) \biggr)^\top (\nabla_{\phi_0} \mathcal{L}^{out} (\theta, \phi_r, \mathcal{T}) - \frac{\partial}{\partial \phi} \mathcal{L}^{out} (\theta, \phi_r, \mathcal{T})) \\
    &- \sum_{j = 1}^r \alpha_j \biggl( \frac{\partial^2}{\partial \theta \partial \phi} \mathcal{L}^{in} (\theta, \phi_{j - 1}, \mathcal{T}) \biggr)^\top \nabla_{\phi_j} \mathcal{L}^{out} (\theta, \phi_r, \mathcal{T}) \|_2 \\
    &\leq \| \frac{\partial}{\partial \theta} V(\theta, \mathcal{T}) \|_2 \| \nabla_{\phi_0} \mathcal{L}^{out} (\theta, \phi_r, \mathcal{T}) - \frac{\partial}{\partial \phi} \mathcal{L}^{out} (\theta, \phi_r, \mathcal{T}) \|_2 + \sum_{j = 1}^r \alpha_j \| \frac{\partial^2}{\partial \theta \partial \phi} \mathcal{L}^{in} (\theta, \phi_{j - 1}, \mathcal{T}) \|_2 \\
    &\cdot \| \nabla_{\phi_j} \mathcal{L}^{out} (\theta, \phi_r, \mathcal{T}) \|_2 \\
    &\leq M_1 L_1 L_2 \sum_{j = 1}^r \alpha_j \prod_{j' = j}^r (1 + \alpha_{j'} L_2) + \sum_{j = 1}^r \alpha_j L_2 L_1 \prod_{j' = j}^r (1 + \alpha_{j'} L_2) = \mathbb{D}_{bound} .
\end{align*}
Hence, for each $\theta \in \mathbb{R}^s$ $\mathbb{E}_{p(\mathcal{T})} \left[ \| \mathcal{G}_{FO} (\theta, \mathcal{T}) - \nabla_\theta \mathcal{L}^{out} (\theta, U^{(r)} (\theta, \mathcal{T}), \mathcal{T}) \|_2^2 \right]$ is well-defined and bounded by $\mathbb{D}_{bound}^2$. Therefore, $\mathbb{D}^2$ is well-defined and bounded by $\mathbb{D}^2_{bound}$.

Next, we show (\ref{eq:varub}). From (\ref{eq:egradub}) it follows that
\begin{align}
    &\| \nabla_\theta \mathcal{L}^{out} (\theta, U^{(r)} (\theta, \mathcal{T}), \mathcal{T}) \|_2 = \| \frac{\partial}{\partial \theta} \mathcal{L}^{out} (\theta, \phi_r, \mathcal{T}) + \biggl( \frac{\partial}{\partial \theta} V(\theta, \mathcal{T}) \biggr)^\top \nabla_{\phi_0} \mathcal{L}^{out} (\theta, \phi_r, \mathcal{T}) \nonumber \\
    &- \sum_{j = 1}^r \alpha_j \biggl( \frac{\partial^2}{\partial \theta \partial \phi} \mathcal{L}^{in} (\theta, \phi_{j - 1}, \mathcal{T}) \biggr)^\top \nabla_{\phi_j} \mathcal{L}^{out} (\theta, \phi_r, \mathcal{T}) \|_2 \nonumber \\
    &\leq \| \frac{\partial}{\partial \theta} \mathcal{L}^{out} (\theta, \phi_r, \mathcal{T}) \|_2 + \| \frac{\partial}{\partial \theta} V(\theta, \mathcal{T}) \|_2 \| \nabla_{\phi_0} \mathcal{L}^{out} (\theta, \phi_r, \mathcal{T}) \|_2 + \sum_{j = 1}^r \alpha_j \| \frac{\partial^2}{\partial \theta \partial \phi} \mathcal{L}^{in} (\theta, \phi_{j - 1}, \mathcal{T}) \|_2 \nonumber \\
    &\cdot \| \nabla_{\phi_j} \mathcal{L}^{out} (\theta, \phi_r, \mathcal{T}) \|_2 \nonumber \\
    &\leq L_1 + M_1 L_1 \prod_{j = 1}^r (1 + \alpha_j L_2) + \sum_{j = 1}^r \alpha_j L_2 L_1 \prod_{j' = j}^r (1 + \alpha_{j'} L_2) = \mathbb{V}_{bound}. \nonumber 
\end{align}
Hence, for each $\theta \in \mathbb{R}^s$ $\mathbb{E}_{p (\mathcal{T})} \left[ \| \nabla_\theta \mathcal{L}^{out} (\theta, U^{(r)} (\theta, \mathcal{T}), \mathcal{T}) \|_2^2 \right]$ is well-defined and bounded by $\mathbb{V}^2_{bound}$ for all $\theta \in \mathbb{R}^s$. Consequently, $\mathbb{V}^2$ is well-defined and is also bounded by $\mathbb{V}^2_{bound}$.

(\ref{eq:unbgrad}) is satisfied by observing that
\begin{align*}
    \mathbb{E}_{\xi, p(\mathcal{T})} \left[\mathcal{G} (\theta, \mathcal{T}, \xi) \right] &= \mathbb{E}_{p(\mathcal{T})} \biggl[ \mathbb{E}_\xi \left[\mathcal{G}(\theta, \mathcal{T}, \xi) \right] \biggr] \\
    &= \mathbb{E}_{p(\mathcal{T})} \biggl[ \mathcal{G}_{FO} (\theta, \mathcal{T}) + \frac{q}{q} (\nabla_\theta \mathcal{L}^{out} (\theta, \phi_r, \mathcal{T}) - \mathcal{G}_{FO} (\theta, \mathcal{T}) ) \biggr] \\
    &= \mathbb{E}_{p(\mathcal{T})} \biggl[ \nabla_\theta \mathcal{L}^{out} (\theta, \phi_r, \mathcal{T}) \biggr] = \mathbb{E}_{p(\mathcal{T})} \biggl[ \nabla_\theta \mathcal{L}^{out} (\theta, U^{(r)} (\theta, \mathcal{T}), \mathcal{T}) \biggr] \\
    &= \nabla_\theta \mathbb{E}_{p(\mathcal{T})} \biggl[ \mathcal{L}^{out} (\theta, U^{(r)} (\theta, \mathcal{T}), \mathcal{T}) \biggr] = \frac{\partial}{\partial \theta} \mathcal{M}^{(r)} (\theta).
\end{align*}

To show (\ref{eq:bndvar}), we fix $\mathcal{T} \in \Omega_\mathcal{T}$. Let $\phi_0, \dots, \phi_r$ be inner-GD rollouts (\ref{eq:maml1}-\ref{eq:maml}) corresponding to $\theta$ and $\mathcal{T}$.

Next:
\begin{align*}
    \| \mathcal{G} (\theta, \mathcal{T}, \xi) \|_2 &\leq \| \mathcal{G} (\theta, \mathcal{T}, \xi) - \nabla_\theta \mathcal{L}^{out} (\theta, U^{(r)} (\theta, \mathcal{T}), \mathcal{T}) \|_2 + \| \nabla_\theta \mathcal{L}^{out} (\theta, U^{(r)} (\theta, \mathcal{T}), \mathcal{T}) \|_2 \\
    &= (1 - \frac{x}{q}) \| \mathcal{G}_{FO} (\theta, \mathcal{T}) - \nabla_\theta \mathcal{L}^{out} (\theta, U^{(r)} (\theta, \mathcal{T}), \mathcal{T}) \|_2 + \| \nabla_\theta \mathcal{L}^{out} (\theta, U^{(r)} (\theta, \mathcal{T}), \mathcal{T}) \|_2 .
\end{align*}

Take square and then expectation:
\begin{align*}
    &\mathbb{E}_{\xi, p(\mathcal{T})} \left[ \| \mathcal{G} (\theta, \mathcal{T}, \xi) \|_2^2 \right] = \mathbb{E}_{p(\mathcal{T})} \mathbb{E}_\xi \left[ \| \mathcal{G} (\theta, \mathcal{T}, \xi) \|_2^2 \right] \\
    &\leq \mathbb{E}_{p(\mathcal{T})} \mathbb{E}_\xi \biggl[ (1 - 2 \frac{\xi}{q} + \frac{\xi^2}{q^2}) \| \mathcal{G}_{FO} (\theta, \mathcal{T}) - \nabla_\theta \mathcal{L}^{out} (\theta, U^{(r)} (\theta, \mathcal{T}), \mathcal{T}) \|_2^2 \\
    &+ 2 (1 - \frac{\xi}{q}) \| \mathcal{G}_{FO} (\theta, \mathcal{T}) - \nabla_\theta \mathcal{L}^{out} (\theta, U^{(r)} (\theta, \mathcal{T}), \mathcal{T}) \|_2 \| \nabla_\theta \mathcal{L}^{out} (\theta, U^{(r)} (\theta, \mathcal{T}), \mathcal{T}) \|_2 \\
    &+ \| \nabla_\theta \mathcal{L}^{out} (\theta, U^{(r)} (\theta, \mathcal{T}), \mathcal{T}) \|_2^2 \biggr] \\
    &= \mathbb{E}_{p(\mathcal{T})} \biggl[ (\frac{1}{q} - 1) \| \mathcal{G}_{FO} (\theta, \mathcal{T}) - \nabla_\theta \mathcal{L}^{out} (\theta, U^{(r)} (\theta, \mathcal{T}), \mathcal{T}) \|_2^2 \\
    &+ 2 \cdot 0 \cdot \| \mathcal{G}_{FO} (\theta, \mathcal{T}) - \nabla_\theta \mathcal{L}^{out} (\theta, U^{(r)} (\theta, \mathcal{T}), \mathcal{T}) \|_2 \| \nabla_\theta \mathcal{L}^{out} (\theta, U^{(r)} (\theta, \mathcal{T}), \mathcal{T}) \|_2 \\
    &+ \| \nabla_\theta \mathcal{L}^{out} (\theta, U^{(r)} (\theta, \mathcal{T}), \mathcal{T}) \|_2^2 \biggr] \\
    &\leq (\frac{1}{q} - 1) \mathbb{D}^2 + \mathbb{V}^2 .
\end{align*}
\end{proof}

\begin{lemma}
\label{lemma:mlipsch}
Let $p, r, s \in \mathbb{N}$, $\{ \alpha_j > 0 \}_{j = 1}^\infty$ be any sequence, $p(\mathcal{T})$ be a distribution on a nonempty set $\Omega_\mathcal{T}$, $V: \mathbb{R}^s \times \Omega_\mathcal{T} \to \mathbb{R}^p, \mathcal{L}^{in}, \mathcal{L}^{out} : \mathbb{R}^s \times \mathbb{R}^p \times \Omega_\mathcal{T} \to \mathbb{R}$ be functions satisfying Assumption \ref{as:liphes}, and let $U^{(r)}: \mathbb{R}^s \times \Omega_\mathcal{T} \to \mathbb{R}^p$ be defined according to (\ref{eq:maml1}-\ref{eq:maml}), $\mathcal{M}^{(r)} : \mathbb{R}^s \to \mathbb{R}$ be defined according to (\ref{eq:opt}) and satisfy Assumption \ref{as:mreg}. Then for all $\theta', \theta'' \in \mathbb{R}^s$ it holds that
\begin{equation*}
    \| \frac{\partial}{\partial \theta} \mathcal{M} (\theta') - \frac{\partial}{\partial \theta} \mathcal{M} (\theta'') \|_2 \leq \mathcal{C} \| \theta' - \theta'' \|_2 ,
\end{equation*}
where
\begin{gather}
    \mathcal{C} = L_2 + L_2 \mathcal{A}_r + \sum_{j = 1}^r \alpha_j \biggl( L_2 \mathcal{B}_j + L_3 ( 1 + \mathcal{A}_{j - 1} ) L_1 \prod_{j' = j + 1}^r (1 + \alpha_{j'} L_2) \biggr) + M_1 \mathcal{B}_0 + M_2 L_1 \prod_{j = 1}^r (1 + \alpha_j L_2), \label{eq:cdef1} \\
    \mathcal{B}_j = \biggl( L_2 (1 + \mathcal{A}_r) (1 + \alpha_j L_2) + L_1 L_3 \sum_{j' = j + 1}^r \alpha_{j'} (1 + \mathcal{A}_{j' - 1}) \biggr) \prod_{j' = j + 1}^r (1 + \alpha_{j'} L_2), \label{eq:cdef2} \\
    \mathcal{A}_j = \biggl( M_1 \prod_{j' = 1}^j (1 + \alpha_{j'} L_2) + L_2 \sum_{j' = 1}^j \alpha_{j'} \prod_{j'' = j' + 1}^j (1 + \alpha_{j''} L_2) \biggr) \label{eq:cdef3} .
\end{gather}
\end{lemma}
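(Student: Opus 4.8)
The plan is to reduce everything to a per-task estimate: fix $\mathcal{T}\in\Omega_\mathcal{T}$ and bound the Lipschitz modulus of $F_\mathcal{T}(\theta):=\nabla_\theta\mathcal{L}^{out}(\theta,U^{(r)}(\theta,\mathcal{T}),\mathcal{T})$ by the constant $\mathcal{C}$ of (\ref{eq:cdef1}), uniformly in $\mathcal{T}$; then, since Assumption \ref{as:mreg} gives $\frac{\partial}{\partial\theta}\mathcal{M}^{(r)}(\theta)=\mathbb{E}_{p(\mathcal{T})}[F_\mathcal{T}(\theta)]$, the triangle inequality for the norm of an expectation yields $\|\frac{\partial}{\partial\theta}\mathcal{M}^{(r)}(\theta')-\frac{\partial}{\partial\theta}\mathcal{M}^{(r)}(\theta'')\|_2\le\mathbb{E}_{p(\mathcal{T})}\|F_\mathcal{T}(\theta')-F_\mathcal{T}(\theta'')\|_2\le\mathcal{C}\|\theta'-\theta''\|_2$. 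For fixed $\mathcal{T}$ I would work from the explicit formula (\ref{eq:gradfirst}-\ref{eq:gradpremid}), which we may write as
\[ F_\mathcal{T}(\theta)=\underbrace{\tfrac{\partial}{\partial\theta}\mathcal{L}^{out}(\theta,\phi_r,\mathcal{T})}_{(\mathrm{I})}-\sum_{j=1}^r\alpha_j\underbrace{\bigl(\tfrac{\partial^2}{\partial\theta\partial\phi}\mathcal{L}^{in}(\theta,\phi_{j-1},\mathcal{T})\bigr)^{\!\top} g_j}_{(\mathrm{II}_j)}+\underbrace{\bigl(\tfrac{\partial}{\partial\theta}V(\theta,\mathcal{T})\bigr)^{\!\top} g_0}_{(\mathrm{III})}, \]
where $\phi_0,\dots,\phi_r$ are the inner-GD iterates (\ref{eq:maml1}-\ref{eq:maml}) and $g_j:=\nabla_{\phi_j}\mathcal{L}^{out}(\theta,\phi_r,\mathcal{T})$ the adjoint states (\ref{eq:gradmid}-\ref{eq:gradlast}); all of $\phi_j(\theta)$, $g_j(\theta)$ depend on $\theta$ and are differentiable by Assumption \ref{as:liphes}.

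First I would control the \emph{forward states}: using the update (\ref{eq:maml}), the $M_1$-Lipschitzness of $V(\cdot,\mathcal{T})$, the joint $L_2$-Lipschitzness of $\frac{\partial}{\partial\phi}\mathcal{L}^{in}$, and the bound $\|\frac{\partial^2}{\partial\phi^2}\mathcal{L}^{in}\|_2\le L_2$ recorded at the start of this section, an induction on $j$ gives $\|\phi_j(\theta')-\phi_j(\theta'')\|_2\le\mathcal{A}_j\|\theta'-\theta''\|_2$, where $\mathcal{A}_0=M_1$ and $\mathcal{A}_j=(1+\alpha_jL_2)\mathcal{A}_{j-1}+\alpha_jL_2$; unrolling this linear recursion yields exactly (\ref{eq:cdef3}). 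Next I would control the \emph{adjoint states}: from (\ref{eq:egradub}) we already have $\|g_j(\theta)\|_2\le L_1\prod_{j'=j+1}^r(1+\alpha_{j'}L_2)$, and to bound the Lipschitz modulus of $\theta\mapsto g_j(\theta)$ I would start from $g_r=\frac{\partial}{\partial\phi}\mathcal{L}^{out}(\theta,\phi_r,\mathcal{T})$ (modulus $\le L_2(1+\mathcal{A}_r)$ by the previous step) and run the backward recursion $g_{j-1}=(I-\alpha_jH_j)^{\!\top}g_j$ of (\ref{eq:gradlast}), with $H_j:=\frac{\partial^2}{\partial\phi^2}\mathcal{L}^{in}(\theta,\phi_{j-1},\mathcal{T})$. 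Splitting $g_{j-1}(\theta')-g_{j-1}(\theta'')$ into a ``propagated'' part, bounded by $\|(I-\alpha_jH_j)^{\!\top}\|_2\le1+\alpha_jL_2$ times the modulus of $g_j$, and a ``Hessian-perturbation'' part, bounded using the $L_3$-Lipschitzness of $\frac{\partial^2}{\partial\phi^2}\mathcal{L}^{in}$ together with the forward-state bound (a factor $\alpha_jL_3(1+\mathcal{A}_{j-1})$ times $\|g_j\|_2$), produces a backward recursion whose solution — after generously rounding up mismatched $(1+\alpha_{j'}L_2)$ factors to obtain a closed form — is $\|g_j(\theta')-g_j(\theta'')\|_2\le\mathcal{B}_j\|\theta'-\theta''\|_2$ with $\mathcal{B}_j$ as in (\ref{eq:cdef2}).

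Finally I would assemble $\|F_\mathcal{T}(\theta')-F_\mathcal{T}(\theta'')\|_2$ term by term, in each term adding and subtracting a cross term and using $\|AB'-A''B''\|\le\|A\|\,\|B'-B''\|+\|A'-A''\|\,\|B''\|$. Term $(\mathrm{I})$ contributes $L_2+L_2\mathcal{A}_r$, via the joint $L_2$-Lipschitzness of $\frac{\partial}{\partial\theta}\mathcal{L}^{out}$ and the forward-state bound. Term $(\mathrm{II}_j)$ contributes $\alpha_j\bigl(L_2\mathcal{B}_j+L_3(1+\mathcal{A}_{j-1})L_1\prod_{j'=j+1}^r(1+\alpha_{j'}L_2)\bigr)$, using $\|\frac{\partial^2}{\partial\theta\partial\phi}\mathcal{L}^{in}\|_2\le L_2$, its $L_3$-Lipschitzness, the forward- and adjoint-state bounds, and the norm bound on $g_j$. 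Term $(\mathrm{III})$ contributes $M_1\mathcal{B}_0+M_2L_1\prod_{j=1}^r(1+\alpha_jL_2)$, using the $M_1$-bound and $M_2$-Lipschitzness of $\frac{\partial}{\partial\theta}V$, the adjoint bound applied to $g_0$, and $\|g_0(\theta)\|_2\le L_1\prod_{j=1}^r(1+\alpha_jL_2)$. Summing the three contributions reproduces precisely $\mathcal{C}$ from (\ref{eq:cdef1}), and averaging over $\mathcal{T}\sim p(\mathcal{T})$ finishes the argument.

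The hard part will be the backward recursion for the adjoint Lipschitz moduli $\mathcal{B}_j$: it is coupled, since the perturbation of $g_{j-1}$ is driven both by the perturbation of $g_j$ and by the perturbation of the forward state $\phi_{j-1}$ entering through the Lipschitz constant $L_3$ of the Hessian $\frac{\partial^2}{\partial\phi^2}\mathcal{L}^{in}$, and turning its exact solution into the tidy closed form (\ref{eq:cdef2}) requires careful — though routine — bookkeeping of telescoping products of $(1+\alpha_{j'}L_2)$ factors, rounding up wherever the indices fail to line up. Everything else is a direct application of the triangle inequality and the boundedness/Lipschitz estimates of Assumption \ref{as:liphes}.
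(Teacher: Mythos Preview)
Your proposal is correct and follows essentially the same route as the paper's proof: fix $\mathcal{T}$, bound the forward iterates $\|\phi_j'-\phi_j''\|_2\le\mathcal{A}_j\|\theta'-\theta''\|_2$ by the recursion you write, bound the adjoint states $g_j$ in norm (via (\ref{eq:egradub})) and in Lipschitz modulus (via the backward recursion you describe, with the same ``rounding up'' of $(1+\alpha_{j'}L_2)$ factors that the paper uses to reach the closed form $\mathcal{B}_j$), assemble (\ref{eq:gradfirst}-\ref{eq:gradpremid}) term by term exactly as you outline, and then pass to the expectation. The only cosmetic difference is that the paper closes with Jensen's inequality on the squared norm rather than your direct triangle inequality for $\mathbb{E}_{p(\mathcal{T})}$, which is of course equivalent here.
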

\begin{proof}
Fix $\mathcal{T} \in \Omega_\mathcal{T}$. Let $\phi_0' = V(\theta', \mathcal{T}), \dots, \phi_r'$ and $\phi_0'' = V(\theta'', \mathcal{T}), \dots, \phi_r''$ be inner-GD rollouts (\ref{eq:maml1}-\ref{eq:maml}) for $\theta'$ and $\theta''$ respectively. For each $1 \leq j \leq r$  inequalities applies:
\begin{align*}
    \| \phi_j' - \phi_j'' \|_2 &= \| \phi_{j - 1}' - \phi_{j - 1}'' - \alpha_j (\frac{\partial}{\partial \phi} \mathcal{L}^{in} (\theta', \phi_{j - 1}', \mathcal{T}) - \frac{\partial}{\partial \phi} \mathcal{L}^{in} (\theta'', \phi_{j - 1}'', \mathcal{T})) \|_2 \\
    &\leq \| \phi_{j - 1}' - \phi_{j - 1}'' \|_2 + \alpha_j \| \frac{\partial}{\partial \phi} \mathcal{L}^{in} (\theta', \phi_{j - 1}', \mathcal{T}) - \frac{\partial}{\partial \phi} \mathcal{L}^{in} (\theta'', \phi_{j - 1}'', \mathcal{T}) \|_2 \\
    &\leq \| \phi_{j - 1}' - \phi_{j - 1}'' \|_2 + \alpha_j L_2 \| \phi_{j - 1}' - \phi_{j - 1}'' \|_2 + \alpha_j L_2 \| \theta' - \theta'' \|_2 \\
    &= (1 + \alpha_j L_2) \| \phi_{j - 1}' - \phi_{j - 1}'' \|_2 + \alpha_j L_2 \| \theta' - \theta'' \|_2 .
\end{align*}
Therefore, for each $0 \leq j \leq r$
\begin{align}
    \| \phi_j' - \phi_j'' \|_2 &\leq \| \phi_0' - \phi_0'' \|_2 \prod_{j' = 1}^j (1 + \alpha_{j'} L_2) + L_2 \| \theta' - \theta'' \|_2 \sum_{j' = 1}^j \alpha_{j'} \prod_{j'' = j' + 1}^j (1 + \alpha_{j''} L_2) \nonumber \\
    &= \mathcal{A}_j \cdot \| \theta' - \theta'' \|_2. \label{eq:phibound}
\end{align}
Therefore,
\begin{align*}
    &\| \nabla_{\phi_r'} \mathcal{L}^{out} (\theta', \phi_r', \mathcal{T})) - \nabla_{\phi_r''} \mathcal{L}^{out} (\theta'', \phi_r'', \mathcal{T}) \|_2 = \| \frac{\partial}{\partial \phi} \mathcal{L}^{out} (\theta', \phi_r', \mathcal{T})) - \frac{\partial}{\partial \phi} \mathcal{L}^{out} (\theta'', \phi_r'', \mathcal{T}) \|_2 \\
    &\leq L_2 \| \phi_r' - \phi_r'' \|_2 + L_2 \| \theta' - \theta'' \|_2 \leq L_2 (1 + \mathcal{A}_r) \| \theta' - \theta'' \|_2.
\end{align*}

For each $1 \leq j \leq r$ the following chain of inequalities applies as a result of (\ref{eq:gradmid}-\ref{eq:gradlast}):
\begin{align*}
    &\| \nabla_{\phi_{j - 1}'} \mathcal{L}^{out} (\theta', \phi_r', \mathcal{T}) - \nabla_{\phi_{j - 1}''} \mathcal{L}^{out} (\theta'', \phi_r'', \mathcal{T}) \|_2 = \| \nabla_{\phi_j'} \mathcal{L}^{out} (\theta', \phi_r', \mathcal{T}) - \nabla_{\phi_j''} \mathcal{L}^{out} (\theta'', \phi_r'', \mathcal{T}) \\
    &- \alpha_j ( \biggl( \frac{\partial^2}{\partial \phi^2} \mathcal{L}^{in} (\theta', \phi_{j - 1}', \mathcal{T}) \biggr)^\top \nabla_{\phi_j'} \mathcal{L}^{out} (\theta', \phi_r', \mathcal{T}) - \biggl( \frac{\partial^2}{\partial \phi^2} \mathcal{L}^{in} (\phi_{j - 1}'', \mathcal{T}) \biggr)^\top \nabla_{\phi_j''} \mathcal{L}^{out} (\phi_r'', \mathcal{T}) ) \|_2 \\
    &= \| \nabla_{\phi_j'} \mathcal{L}^{out} (\theta', \phi_r', \mathcal{T}) - \nabla_{\phi_j''} \mathcal{L}^{out} (\theta'', \phi_r'', \mathcal{T}) - \alpha_j \biggl( \frac{\partial^2}{\partial \phi^2} \mathcal{L}^{in} (\theta', \phi_{j - 1}', \mathcal{T}) \biggr)^\top ( \nabla_{\phi_j'} \mathcal{L}^{out} (\theta', \phi_r', \mathcal{T}) \\
    &- \nabla_{\phi_j''} \mathcal{L}^{out} (\theta'', \phi_r'', \mathcal{T}) ) - \alpha_j \biggl( \frac{\partial^2}{\partial \phi^2} \mathcal{L}^{in} (\theta', \phi_{j - 1}', \mathcal{T}) - \frac{\partial^2}{\partial \phi^2} \mathcal{L}^{in} (\theta'', \phi_{j - 1}'', \mathcal{T}) \biggr)^\top \nabla_{\phi_j''} \mathcal{L}^{out} (\theta'', \phi_r'', \mathcal{T}) \|_2 \\
    &\leq \| \nabla_{\phi_j'} \mathcal{L}^{out} (\theta', \phi_r', \mathcal{T}) - \nabla_{\phi_j''} \mathcal{L}^{out} (\theta'', \phi_r'', \mathcal{T}) \|_2 + \alpha_j \| \frac{\partial^2}{\partial \phi^2} \mathcal{L}^{in} (\theta', \phi_{j - 1}', \mathcal{T}) \|_2 \| \nabla_{\phi_j'} \mathcal{L}^{out} (\theta', \phi_r', \mathcal{T}) \\
    &- \nabla_{\phi_j''} \mathcal{L}^{out} (\theta'', \phi_r'', \mathcal{T}) \|_2 + \alpha_j \| \frac{\partial^2}{\partial \phi^2} \mathcal{L}^{in} (\theta', \phi_{j - 1}', \mathcal{T}) - \frac{\partial^2}{\partial \phi^2} \mathcal{L}^{in} (\theta'', \phi_{j - 1}'', \mathcal{T}) \|_2  \cdot \| \nabla_{\phi_j''} \mathcal{L}^{out} (\theta'', \phi_r'', \mathcal{T}) \|_2 \\
    &\leq (1 + \alpha_j L_2) \| \nabla_{\phi_j'} \mathcal{L}^{out} (\theta', \phi_r', \mathcal{T}) - \nabla_{\phi_j''} \mathcal{L}^{out} (\theta'', \phi_r'', \mathcal{T}) \|_2 + \alpha_j L_3 L_1 ( \| \phi_{j - 1}' - \phi_{j - 1}'' \|_2 \\
    &+ \| \theta' - \theta'' \|_2 ) \prod_{j' = j + 1}^r (1 + \alpha_{j'} L_2) \\
    &\leq (1 + \alpha_j L_2) \| \nabla_{\phi_j'} \mathcal{L}^{out} (\theta', \phi_r', \mathcal{T}) - \nabla_{\phi_j''} \mathcal{L}^{out} (\theta'', \phi_r'', \mathcal{T}) \|_2 + \alpha_j L_1 L_3 (1 + \mathcal{A}_{j - 1}) \| \theta' - \theta'' \|_2 \prod_{j' = j + 1}^r (1 + \alpha_{j'} L_2) \\
    &\leq \dots \\
    &\leq \| \nabla_{\phi_r'} \mathcal{L}^{out} (\theta', \phi_r', \mathcal{T}) - \nabla_{\phi_r''} \mathcal{L}^{out} (\theta'', \phi_r'', \mathcal{T}) \|_2 \prod_{j' = j}^r (1 + \alpha_{j'} L_2) \\
    &+ L_1 L_3 \| \theta' - \theta'' \|_2 \sum_{j' = j}^r \alpha_{j'} (1 + \mathcal{A}_{j' - 1}) \prod_{j'' = j'}^r (1 + \alpha_{j''} L_2) \prod_{j'' = j + 1}^{j' - 1} (1 + \alpha_{j''} L_2) \leq \mathcal{B}_{j - 1} \| \theta' - \theta'' \|_2,
\end{align*}
where we use (\ref{eq:egradub}).
Using (\ref{eq:gradfirst}-\ref{eq:gradpremid}), we deduce that
\begin{align*}
    &\| \nabla_{\theta'} \mathcal{L}^{out} (\theta', U^{(r)} (\theta', \mathcal{T}), \mathcal{T}) - \nabla_{\theta''} \mathcal{L}^{out} (\theta'', U^{(r)} (\theta'', \mathcal{T}), \mathcal{T}) \|_2 = \| \frac{\partial}{\partial \theta} \mathcal{L}^{out} (\theta', \phi_r', \mathcal{T}) - \frac{\partial}{\partial \theta} \mathcal{L}^{out} (\theta'', \phi_r'', \mathcal{T}) \\
    &- \sum_{j = 1}^r \alpha_j \biggl( \biggl( \frac{\partial^2}{\partial \theta \partial \phi} \mathcal{L}^{in} (\theta', \phi_{j - 1}', \mathcal{T}) \biggr)^\top \nabla_{\phi_j'} \mathcal{L}^{out} (\theta', \phi_r', \mathcal{T}) - \biggl( \frac{\partial^2}{\partial \theta \partial \phi} \mathcal{L}^{in} (\theta'', \phi_{j - 1}'', \mathcal{T}) \biggr)^\top \nabla_{\phi_j''} \mathcal{L}^{out} (\theta'', \phi_r'', \mathcal{T}) \biggr) \\
    &+ \biggl( \frac{\partial}{\partial \theta} V(\theta', \mathcal{T}) \biggr)^\top \nabla_{\phi_0'} \mathcal{L}^{out} (\theta', \phi_r', \mathcal{T}) - \biggl( \frac{\partial}{\partial \theta} V(\theta'', \mathcal{T}) \biggr)^\top \nabla_{\phi_0''} \mathcal{L}^{out} (\theta'', \phi_r'', \mathcal{T}) \|_2 \\
    &\leq \| \frac{\partial}{\partial \theta} \mathcal{L}^{out} (\theta', \phi_r', \mathcal{T}) - \frac{\partial}{\partial \theta} \mathcal{L}^{out} (\theta'', \phi_r'', \mathcal{T}) \|_2 \\
    &+ \sum_{j = 1}^r \alpha_j \| \biggl( \frac{\partial^2}{\partial \theta \partial \phi} \mathcal{L}^{in} (\theta', \phi_{j - 1}', \mathcal{T}) \biggr)^\top \nabla_{\phi_j'} \mathcal{L}^{out} (\theta', \phi_r', \mathcal{T}) - \biggl( \frac{\partial^2}{\partial \theta \partial \phi} \mathcal{L}^{in} (\theta'', \phi_{j - 1}'', \mathcal{T}) \biggr)^\top \nabla_{\phi_j''} \mathcal{L}^{out} (\theta'', \phi_r'', \mathcal{T}) \|_2 \\
    &+ \| \biggl( \frac{\partial}{\partial \theta} V(\theta', \mathcal{T}) \biggr)^\top \nabla_{\phi_0'} \mathcal{L}^{out} (\theta', \phi_r', \mathcal{T}) - \biggl( \frac{\partial}{\partial \theta} V(\theta'', \mathcal{T}) \biggr)^\top \nabla_{\phi_0''} \mathcal{L}^{out} (\theta'', \phi_r'', \mathcal{T}) \|_2 \\
    &\leq \| \frac{\partial}{\partial \theta} \mathcal{L}^{out} (\theta', \phi_r', \mathcal{T}) - \frac{\partial}{\partial \theta} \mathcal{L}^{out} (\theta'', \phi_r'', \mathcal{T}) \|_2 \\
    &+ \sum_{j = 1}^r \alpha_j \| \biggl( \frac{\partial^2}{\partial \theta \partial \phi} \mathcal{L}^{in} (\theta', \phi_{j - 1}', \mathcal{T}) \biggr)^\top \nabla_{\phi_j'} \mathcal{L}^{out} (\theta', \phi_r', \mathcal{T}) - \biggl( \frac{\partial^2}{\partial \theta \partial \phi} \mathcal{L}^{in} (\theta', \phi_{j - 1}', \mathcal{T}) \biggr)^\top \nabla_{\phi_j''} \mathcal{L}^{out} (\theta'', \phi_r'', \mathcal{T}) \\
    &+ \biggl( \frac{\partial^2}{\partial \theta \partial \phi} \mathcal{L}^{in} (\theta', \phi_{j - 1}', \mathcal{T}) \biggr)^\top \nabla_{\phi_j''} \mathcal{L}^{out} (\theta'', \phi_r'', \mathcal{T}) - \biggl( \frac{\partial^2}{\partial \theta \partial \phi} \mathcal{L}^{in} (\theta'', \phi_{j - 1}'', \mathcal{T}) \biggr)^\top \nabla_{\phi_j''} \mathcal{L}^{out} (\theta'', \phi_r'', \mathcal{T}) \|_2 \\
    &+ \| \biggl( \frac{\partial}{\partial \theta} V(\theta', \mathcal{T}) \biggr)^\top \nabla_{\phi_0'} \mathcal{L}^{out} (\theta', \phi_r', \mathcal{T}) - \biggl( \frac{\partial}{\partial \theta} V(\theta', \mathcal{T}) \biggr)^\top \nabla_{\phi_0''} \mathcal{L}^{out} (\theta'', \phi_r'', \mathcal{T}) \\
    &+ \biggl( \frac{\partial}{\partial \theta} V(\theta', \mathcal{T}) \biggr)^\top \nabla_{\phi_0''} \mathcal{L}^{out} (\theta'', \phi_r'', \mathcal{T}) - \biggl( \frac{\partial}{\partial \theta} V(\theta'', \mathcal{T}) \biggr)^\top \nabla_{\phi_0''} \mathcal{L}^{out} (\theta'', \phi_r'', \mathcal{T}) \|_2 \\
    &\leq \| \frac{\partial}{\partial \theta} \mathcal{L}^{out} (\theta', \phi_r', \mathcal{T}) - \frac{\partial}{\partial \theta} \mathcal{L}^{out} (\theta'', \phi_r'', \mathcal{T}) \|_2 \\
    &+ \sum_{j = 1}^r \alpha_j \| \frac{\partial^2}{\partial \theta \partial \phi} \mathcal{L}^{in} (\theta', \phi_{j - 1}', \mathcal{T}) \|_2 \| \nabla_{\phi_j'} \mathcal{L}^{out} (\theta', \phi_r', \mathcal{T}) - \nabla_{\phi_j''} \mathcal{L}^{out} (\theta'', \phi_r'', \mathcal{T}) \|_2 \\
    &+ \sum_{j = 1}^r \alpha_j \| \frac{\partial^2}{\partial \theta \partial \phi} \mathcal{L}^{in} (\theta', \phi_{j - 1}', \mathcal{T})  - \frac{\partial^2}{\partial \theta \partial \phi} \mathcal{L}^{in} (\theta'', \phi_{j - 1}'', \mathcal{T}) \|_2 \| \nabla_{\phi_j'} \mathcal{L}^{out} (\theta'', \phi_r'', \mathcal{T}) \|_2 \\
    &+ \| \frac{\partial}{\partial \theta} V(\theta', \mathcal{T}) \|_2 \| \nabla_{\phi_0'} \mathcal{L}^{out} (\theta', \phi_r', \mathcal{T}) - \nabla_{\phi_0''} \mathcal{L}^{out} (\theta'', \phi_r'', \mathcal{T}) \|_2 \\
    &+ \| \frac{\partial}{\partial \theta} V(\theta', \mathcal{T}) - \frac{\partial}{\partial \theta} V(\theta'', \mathcal{T}) \|_2 \| \nabla_{\phi_0'} \mathcal{L}^{out} (\theta'', \phi_r'', \mathcal{T}) \|_2 \\
    &\leq L_2 \| \theta' - \theta'' \|_2 + L_2 \| \phi_r' - \phi_r'' \|_2 + \sum_{j = 1}^r \alpha_j \biggl( L_2 \mathcal{B}_j \| \theta' - \theta'' \|_2 + L_3 ( \| \theta' - \theta'' \|_2 \\
    &+ \| \phi_{j - 1}' - \phi_{j - 1}'' \|_2 ) L_1 \prod_{j' = j + 1}^r (1 + \alpha_{j'} L_2) \|_2 \biggr) + M_1 \mathcal{B}_0 \| \theta' - \theta'' \|_2 + M_2 \| \theta' - \theta'' \|_2 L_1 \prod_{j = 1}^r (1 + \alpha_j L_2) \\
    &\leq L_2 \| \theta' - \theta'' \|_2 + L_2 \mathcal{A}_r \| \theta' - \theta'' \|_2 + \sum_{j = 1}^r \alpha_j \biggl( L_2 \mathcal{B}_j \| \theta' - \theta'' \|_2 + L_3 ( \| \theta' - \theta'' \|_2 \\
    &+ \mathcal{A}_{j - 1} \| \theta' - \theta'' \|_2 ) L_1 \prod_{j' = j + 1}^r (1 + \alpha_{j'} L_2) \|_2 \biggr) + M_1 \mathcal{B}_0 \| \theta' - \theta'' \|_2 + M_2 \| \theta' - \theta'' \|_2 L_1 \prod_{j = 1}^r (1 + \alpha_j L_2) \leq \mathcal{C} \| \theta' - \theta'' \|_2,
\end{align*}
where we use (\ref{eq:egradub}) and (\ref{eq:phibound}).
Finally, by taking expectation with respect to $\mathcal{T} \sim p(\mathcal{T})$ and applying Jensen inequality we get
\begin{align*}
    \| \frac{\partial}{\partial \theta} \mathcal{M}^{(r)} (\theta') - \frac{\partial}{\partial \theta} \mathcal{M}^{(r)} (\theta'') \|_2^2 &\leq \mathbb{E}_{p(\mathcal{T})} \left[\| \nabla_{\theta'} \mathcal{L}^{out} (\theta', U^{(r)}(\theta', \mathcal{T}), \mathcal{T}) - \nabla_{\theta''} \mathcal{L}^{out} (\theta'', U^{(r)}(\theta'', \mathcal{T}), \mathcal{T}) \|_2^2  \right] \\
    &\leq \mathcal{C}^2 \| \theta' - \theta'' \|_2^2,
\end{align*}
which is equivalent to the statement of Lemma.
\end{proof}

\begin{lemma} \label{th:convexp}
Let $p, r, s \in \mathbb{N}$, $\{ \alpha_j > 0 \}_{j = 1}^\infty$ and $\{ \gamma_k > 0 \}_{k = 1}^\infty$ be any sequences, $q \in (0, 1], \theta_0 \in \mathbb{R}^p$, $p(\mathcal{T})$ be a distribution on a nonempty set $\Omega_\mathcal{T}$, $V: \mathbb{R}^s \times \Omega_\mathcal{T} \to \mathbb{R}^p, \mathcal{L}^{in}, \mathcal{L}^{out} : \mathbb{R}^s \times \mathbb{R}^p \times \Omega_\mathcal{T} \to \mathbb{R}$ be functions satisfying Assumption \ref{as:liphes}, and let $U^{(r)}: \mathbb{R}^s \times \Omega_\mathcal{T} \to \mathbb{R}^p$ be defined according to (\ref{eq:maml1}-\ref{eq:maml}), $\mathcal{M}^{(r)} : \mathbb{R}^p \to \mathbb{R}$ be defined according to (\ref{eq:opt}) and satisfy Assumption \ref{as:mreg}. Define $\mathcal{G}_{FO}: \mathbb{R}^s \times \Omega_\mathcal{T} \to \mathbb{R}^s$ and $\mathcal{G}: \mathbb{R}^s \times \Omega_\mathcal{T} \times \{ 0, 1 \} \to \mathbb{R}^s$ as
\begin{gather*}
    \mathcal{G}_{FO} (\theta, \mathcal{T}) = \frac{\partial}{\partial \theta} \mathcal{L}^{out} (\theta, \phi_r, \mathcal{T}) + (\frac{\partial}{\partial \theta} V (\theta, \mathcal{T}))^\top \frac{\partial}{\partial \phi} \mathcal{L}^{out} (\theta, \phi_r, \mathcal{T}), \quad \phi_r = U^{(r)} (\theta, \mathcal{T}), \\
    \mathcal{G} (\theta, \mathcal{T}, x) = \mathcal{G}_{FO} (\theta, \mathcal{T}) + \frac{x}{q} (\nabla_\theta \mathcal{L}^{out} (\theta, \phi_r, \mathcal{T}) - \mathcal{G}_{FO} (\theta, \mathcal{T})) .
\end{gather*}
Let $\{ \mathcal{T}_k \}_{k = 1}^\infty, \{ \xi_k \}_{k = 1}^\infty$ be sequences of i.i.d. samples from $p(\mathcal{T})$ and $\mathrm{Bernoulli} (q)$ respectively, such that $\sigma$-algebras populated by both sequences are independent.
Let $\{ \theta_k \in \mathbb{R}^s \}_{k = 0}^\infty$ be a sequence where for all $k \in \mathbb{N}$ $\theta_k = \theta_{k - 1} - \gamma_k \mathcal{G} (\theta_{k - 1}, \mathcal{T}_k, \xi_k)$.
Then for each $k \in \mathbb{N}$
\begin{align}
    \sum_{u = 1}^k \gamma_u \mathbb{E} \left[ \| \frac{\partial}{\partial \theta} \mathcal{M}^{(r)} (\theta_{u - 1}) \|_2^2 \right] &\leq \mathcal{M}^{(r)} (\theta_0) - \mathcal{M}^{(r)}_* + \mathcal{C} \biggl( (\frac{1}{q} - 1) \mathbb{D}^2 + \mathbb{V}^2 \biggr) \sum_{u = 1}^k\gamma_u^2 \label{eq:convres}
\end{align}
where $\mathcal{C}$ is defined in (\ref{eq:cdef1}-\ref{eq:cdef3}), $\mathbb{D}, \mathbb{V}$ are defined in (\ref{eq:dtruedef}), (\ref{eq:vtruedef}) respectively.
\end{lemma}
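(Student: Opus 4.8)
The plan is to follow the classical descent argument for SGD on smooth nonconvex objectives, using only the two preceding lemmas as nontrivial inputs. First I would invoke Lemma \ref{lemma:mlipsch}, which gives that $\frac{\partial}{\partial \theta} \mathcal{M}^{(r)}$ is $\mathcal{C}$-Lipschitz; any differentiable function with $\mathcal{C}$-Lipschitz gradient satisfies the quadratic upper bound
\begin{equation*}
\mathcal{M}^{(r)}(\theta_u) \le \mathcal{M}^{(r)}(\theta_{u-1}) + \langle \tfrac{\partial}{\partial\theta}\mathcal{M}^{(r)}(\theta_{u-1}), \theta_u - \theta_{u-1}\rangle + \tfrac{\mathcal{C}}{2}\|\theta_u - \theta_{u-1}\|_2^2 ,
\end{equation*}
obtained by integrating the gradient along the segment $[\theta_{u-1},\theta_u]$. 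Substituting the update rule $\theta_u - \theta_{u-1} = -\gamma_u \mathcal{G}(\theta_{u-1},\mathcal{T}_u,\xi_u)$ turns this into a one-step recursion whose right-hand side involves the terms $-\gamma_u \langle \frac{\partial}{\partial\theta}\mathcal{M}^{(r)}(\theta_{u-1}), \mathcal{G}(\theta_{u-1},\mathcal{T}_u,\xi_u)\rangle$ and $\frac{\mathcal{C}\gamma_u^2}{2}\|\mathcal{G}(\theta_{u-1},\mathcal{T}_u,\xi_u)\|_2^2$.

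Next I would introduce the filtration $\mathcal{F}_{u-1} = \sigma(\mathcal{T}_1,\xi_1,\dots,\mathcal{T}_{u-1},\xi_{u-1})$, observe that $\theta_{u-1}$ is $\mathcal{F}_{u-1}$-measurable whereas $(\mathcal{T}_u,\xi_u)$ is independent of $\mathcal{F}_{u-1}$, and take conditional expectation of the recursion given $\mathcal{F}_{u-1}$. Freezing $\theta_{u-1}$ and applying (\ref{eq:unbgrad}) of Lemma \ref{lemma:unbbnd} gives $\mathbb{E}[\mathcal{G}(\theta_{u-1},\mathcal{T}_u,\xi_u)\mid\mathcal{F}_{u-1}] = \frac{\partial}{\partial\theta}\mathcal{M}^{(r)}(\theta_{u-1})$, so the inner-product term contributes $-\gamma_u \|\frac{\partial}{\partial\theta}\mathcal{M}^{(r)}(\theta_{u-1})\|_2^2$, while (\ref{eq:bndvar}) bounds the squared-norm term by $\frac{\mathcal{C}\gamma_u^2}{2}\big((1/q-1)\mathbb{D}^2 + \mathbb{V}^2\big)$. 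Taking total expectation, rearranging to isolate $\gamma_u\mathbb{E}[\|\frac{\partial}{\partial\theta}\mathcal{M}^{(r)}(\theta_{u-1})\|_2^2]$, and summing over $u=1,\dots,k$, the $\mathbb{E}[\mathcal{M}^{(r)}(\theta_u)]$ terms telescope, leaving $\mathcal{M}^{(r)}(\theta_0) - \mathbb{E}[\mathcal{M}^{(r)}(\theta_k)]$, which is at most $\mathcal{M}^{(r)}(\theta_0) - \mathcal{M}^{(r)}_*$ by Assumption \ref{as:mreg}. Since $\mathcal{C}/2 \le \mathcal{C}$, the resulting inequality is exactly (\ref{eq:convres}).

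Every step here is routine once the two lemmas are available; the only point requiring genuine care is the stochastic bookkeeping — verifying that $\theta_{u-1}$ is $\mathcal{F}_{u-1}$-measurable, that $(\mathcal{T}_u,\xi_u)$ is independent of $\mathcal{F}_{u-1}$ (this is where the hypothesis that the two $\sigma$-algebras are independent is used), and that all conditional expectations are well-defined so the tower property can be applied. The finite bounds $\mathbb{D}^2\le\mathbb{D}^2_{bound}$ and $\mathbb{V}^2\le\mathbb{V}^2_{bound}$ from Lemma \ref{lemma:unbbnd}, together with the well-definedness assertions of Assumption \ref{as:mreg}, supply the integrability needed throughout.
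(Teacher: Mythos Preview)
Your proposal is correct and follows essentially the same approach as the paper: apply the descent lemma obtained from Lemma~\ref{lemma:mlipsch}, take conditional expectation with respect to the filtration generated by past $(\mathcal{T}_\kappa,\xi_\kappa)$, use Lemma~\ref{lemma:unbbnd} for unbiasedness and the second-moment bound, take full expectation, telescope, and bound $\mathbb{E}[\mathcal{M}^{(r)}(\theta_k)]$ below by $\mathcal{M}^{(r)}_*$. Your explicit remark that $\mathcal{C}/2\le\mathcal{C}$ accounts for the factor $1/2$ that the paper silently drops in its final summation step.
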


\begin{proof}
Let $\mathcal{F}_u$ denote a $\sigma$-algebra populated by $\{ \mathcal{T}_\kappa,  \xi_\kappa \}_{\kappa < u}$. Using Lemma \ref{lemma:mlipsch}, we apply Inequality 4.3 from \citep{bottou} to obtain that for all $\theta', \theta'' \in \mathbb{R}^s$
\begin{equation*}
    \mathcal{M}^{(r)} (\theta') \leq \mathcal{M}^{(r)} (\theta'') + \biggl( \frac{\partial}{\partial \theta} \mathcal{M}^{(r)} (\theta'') \biggr)^\top (\theta' - \theta'') + \frac{1}{2} \mathcal{C} \| \theta' - \theta'' \|_2^2.
\end{equation*}
For any $u \in \mathbb{N}$, by setting $\theta' = \theta_u$, $\theta'' = \theta_{u - 1}$ we deduce that
\begin{equation*}
    \mathcal{M}^{(r)} (\theta_u) \leq \mathcal{M}^{(r)} (\theta_{u - 1}) - \gamma_u \biggl( \frac{\partial}{\partial \theta} \mathcal{M}^{(r)} (\theta_{u - 1}) \biggr)^\top \mathcal{G} (\theta_{u - 1}, \mathcal{T}_u, \xi_u) + \frac{1}{2} \gamma_u^2 \mathcal{C} \| \mathcal{G} (\theta_{u - 1}, \mathcal{T}_u, \xi_u) \|_2^2.
\end{equation*}
Take expectation with respect to $\mathcal{F}_u$:
\begin{align}
    \E { \mathcal{M}^{(r)} (\theta_u) | \mathcal{F}_u } &\leq \mathcal{M}^{(r)} (\theta_{u - 1}) - \gamma_u \biggl( \frac{\partial}{\partial \theta} \mathcal{M}^{(r)} (\theta_{u - 1}) \biggr)^\top \E { \mathcal{G} (\theta_{u - 1}, \mathcal{T}_u, \xi_u) | \mathcal{F}_u } \nonumber \\
    &+ \frac{1}{2} \gamma_u^2 \mathcal{C} \E { \| \mathcal{G} (\theta_{u - 1}, \mathcal{T}_u, \xi_u) \|_2^2 | \mathcal{F}_u } \nonumber \\
    &\leq \mathcal{M}^{(r)} (\theta_{u - 1}) - \gamma_u \| \frac{\partial}{\partial \theta} \mathcal{M}^{(r)} (\theta_{u - 1}) \|_2^2 + \frac{1}{2} \gamma_u^2 \mathcal{C} \biggl( (\frac{1}{q} - 1) \mathbb{D}^2 + \mathbb{V}^2 \biggr), \nonumber
\end{align}
where we use Lemma \ref{lemma:unbbnd}'s result.
Take the full expectation:
\begin{equation*}
    \E{\mathcal{M}^{(r)} (\theta_u)} \leq \E{\mathcal{M}^{(r)} (\theta_{u - 1})} - \gamma_u \E{\| \frac{\partial}{\partial \theta} \mathcal{M}^{(r)} (\theta_{u - 1}) \|_2^2} + \frac{1}{2} \gamma_u^2 \mathcal{C} \biggl( (\frac{1}{q} - 1) \mathbb{D}^2 + \mathbb{V}^2 \biggr)
\end{equation*}
which is equivalent to
\begin{equation} \label{eq:sgdnonlinexp}
   \gamma_u \E{\| \frac{\partial}{\partial \theta} \mathcal{M}^{(r)} (\theta_{u - 1}) \|_2^2} \leq \E{\mathcal{M}^{(r)} (\theta_{u - 1})} - \E{\mathcal{M}^{(r)} (\theta_u)} + \frac{1}{2} \gamma_u^2 \mathcal{C} \biggl( (\frac{1}{q} - 1) \mathbb{D}^2 + \mathbb{V}^2 \biggr).
\end{equation}
Sum inequalities (\ref{eq:sgdnonlinexp}) for all $1 \leq u \leq k$:
\begin{align*}
    \sum_{u = 1}^k \gamma_u \E{\| \frac{\partial}{\partial \theta} \mathcal{M}^{(r)} (\theta_{u - 1}) \|_2^2} &\leq \E{\mathcal{M}^{(r)} (\theta_0)} - \E{\mathcal{M}^{(r)} (\theta_k)} + \mathcal{C} \biggl( (\frac{1}{q} - 1) \mathbb{D}^2 + \mathbb{V}^2 \biggr) \sum_{u = 1}^k\gamma_u^2 \\
    &\leq \mathcal{M}^{(r)} (\theta_0) - \mathcal{M}^{(r)}_* + \mathcal{C} \biggl( (\frac{1}{q} - 1) \mathbb{D}^2 + \mathbb{V}^2 \biggr) \sum_{u = 1}^k\gamma_u^2 .
\end{align*}
\end{proof}

\begin{proof}[Theorem \ref{th:conv} proof]

Under conditions of the theorem results of Lemma \ref{th:convexp} are true.

First, we prove 1. If $\sum_{k = 1}^\infty \gamma_k^2 < \infty$, then the right-hand side of (\ref{eq:convres}) converges to a finite value when $k \to \infty$. Therefore, the left-hand side also converges to a finite value. Suppose the statement of 1 is false. Then there exists $k_0 \in \mathbb{N}, A > 0$ such that $\forall u \geq k_0 : \E{ \| \frac{\partial}{\partial \theta} \mathcal{M}^{(r)} (\theta_{u - 1}) \|_2^2 } > A$. But then for all $k \geq k_0$
\begin{equation*}
    \sum_{u = 1}^k \gamma_u \E{ \| \frac{\partial}{\partial \theta} \mathcal{M}^{(r)} (\theta_{u - 1}) \|_2^2 } \geq A \sum_{u = k_0}^k \gamma_u \to \infty
\end{equation*}
when $k \to \infty$, which is a contradiction. Therefore, 1 is true.

Next, we prove 2. Observe that
\begin{align*}
    \min_{0 \leq u < k} \E{\| \frac{\partial}{\partial \theta} \mathcal{M}^{(r)} (\theta_u) \|_2^2} \sum_{u = 1}^k \gamma_u &\leq \sum_{u = 1}^k \gamma_u \E{\| \frac{\partial}{\partial \theta} \mathcal{M}^{(r)} (\theta_{u - 1}) \|_2^2} \\
    &\leq \mathcal{M}^{(r)} (\theta_0) - \mathcal{M}^{(r)}_* + \mathcal{C} \biggl( (\frac{1}{q} - 1) \mathbb{D}^2 + \mathbb{V}^2 \biggr) \sum_{u = 1}^k \gamma_u^2.
\end{align*}
Divide by $\sum_{u = 1}^k \gamma_u$:
\begin{equation*}
    \min_{0 \leq u < k} \E{ \| \frac{\partial}{\partial \theta} \mathcal{M}^{(r)} (\theta_u) \|_2^2} \leq \frac{1}{\sum_{u = 1}^k \gamma_u} (\mathcal{M}^{(r)} (\theta_0) - \mathcal{M}^{(r)}_*) + \mathcal{C} \biggl( (\frac{1}{q} - 1) \mathbb{D}^2 + \mathbb{V}^2 \biggr) \cdot \frac{1}{\sum_{u = 1}^k \gamma_u} \cdot \sum_{u = 1}^k \gamma_u^2 .
\end{equation*}
2 is satisfied by observing that
\begin{equation*}
    \sum_{u = 1}^k \gamma_u = \sum_{u = 1}^k u^{-0.5} = \Omega (k^{0.5}), \quad \sum_{u = 1}^k \gamma_u^2 = \sum_{u = 1}^k u^{-1} = O (\log k) = O (k^\epsilon)
\end{equation*}
for any $\epsilon > 0$.
\end{proof}

\subsection{Theorem \ref{th:counter}}

\begin{proof}
Consider a set $\Omega_\mathcal{T}$ consisting of two elements: $\Omega_\mathcal{T} = \{ \mathcal{T}^{(1)}, \mathcal{T}^{(2)} \}$. Define $p(\mathcal{T})$ so that
\begin{equation*}
    \mathbb{P}_{p(\mathcal{T})}(\mathcal{T} = \mathcal{T}^{(1)}) = \mathbb{P}_{p(\mathcal{T})}(\mathcal{T} = \mathcal{T}^{(2)}) = \frac{1}{2}.
\end{equation*}
Choose arbitrary numbers $0 < a_1, a_2 < \frac{1}{\alpha}$, $a_1 \neq a_2$ and set $b_1 = 0$. Since $a_1 \neq a_2$, $(1 - \alpha a_1)/(1 - \alpha a_2) \neq 1$ and, consequently,
\begin{equation*}
    \biggl( \frac{1 - \alpha a_1}{1 - \alpha a_2} \biggr)^r \neq \biggl( \frac{1 - \alpha a_1}{1 - \alpha a_2} \biggr)^{2 r}.
\end{equation*}
Multiply by $\frac{a_1}{a_2} \neq 0$:
\begin{equation} \label{eq:ineq1}
    \frac{a_1}{a_2} \cdot \biggl( \frac{1 - \alpha a_1}{1 - \alpha a_2} \biggr)^r \neq \frac{a_1}{a_2} \cdot \biggl( \frac{1 - \alpha a_1}{1 - \alpha a_2} \biggr)^{2 r}.
\end{equation}
From (\ref{eq:ineq1}) and since $\frac{a_1}{a_2} (\frac{1 - \alpha a_1}{1 - \alpha a_2})^r, \frac{a_1}{a_2} (\frac{1 - \alpha a_1}{1 - \alpha a_2})^{2 r} > 0$ it follows that
\begin{equation*}
    \frac{\frac{a_1}{a_2} (\frac{1 - \alpha a_1}{1 - \alpha a_2})^{2 r} + 1}{\frac{a_1}{a_2} (\frac{1 - \alpha a_1}{1 - \alpha a_2})^r + 1} - 1 \neq 0.
\end{equation*}
Multiply inequality by $(1 - \alpha a_2)^{2 r} \neq 0$ and numerator/denominator by $a_2 (1 - \alpha a_2)^r \neq 0$:
\begin{equation*}
    \frac{a_1 (1 - \alpha a_1)^{2 r} + a_2 (1 - \alpha a_2)^{2 r}}{a_1 (1 - \alpha a_1)^r + a_2 (1 - \alpha a_2)^r} (1 - \alpha a_2)^r - (1 - \alpha a_2)^{2 r} \neq 0.
\end{equation*}
Because of the inequality above, we can define a number $b_2$ as
\begin{equation} \label{eq:b2def}
    b_2 = 2 \sqrt{2 D} \biggl| \frac{a_1 (1 - \alpha a_1)^{2 r} + a_2 (1 - \alpha a_2)^{2 r}}{a_1 (1 - \alpha a_1)^r + a_2 (1 - \alpha a_2)^r} (1 - \alpha a_1)^r - (1 - \alpha a_2)^{2 r} \biggr|^{-1} > 0
\end{equation}
and select arbitrary number $A$ so that
\begin{equation} \label{eq:adef}
    A > \left| \frac{b_1}{a_1} - \frac{b_2}{a_2} \right| .
\end{equation}
Consider two functions $f_i (x)$, $f_i: \mathbb{R} \to \mathbb{R}$, $i \in \{ 1, 2 \}$ defined as follows (denote $z_i = z_i (x) = | x - \frac{b_i}{a_i} |$)
\begin{equation} \label{eq:fdef}
    f_i (x) = 
    \begin{cases}
        \frac{1}{2} a_i z_i^2 & \text{if } z_i \leq A \\
        - \frac{1}{6} a_i (z_i - A)^3 + \frac{1}{2} a_i (z_i - A)^2 + a_i A z_i - \frac{1}{2} a_i A^2 & \text{if } A < z_i \leq A + 1 \\
        (\frac{1}{2} a_i + a_i A) z_i - \frac{1}{6} a_i - \frac{1}{2} a_i A^2 - \frac{1}{2} a_i A & \text{if } A + 1 < z_i
    \end{cases}.
\end{equation}
It is easy to check that for $i \in \{ 1, 2 \}$ $f_i (x)$ is twice differentiable with a global minimum at $\frac{b_i}{a_i}$. The following expressions apply for the first and second derivative:
\begin{align}
    f_i' (x) &= 
    \begin{cases}
        a_i x - b_i & \text{if } z_i \leq A \\
        \biggl( - \frac{1}{2} a_i (z_i - A)^2 + a_i z_i \biggr) \textrm{sign} (x - \frac{b_i}{a_i}) & \text{if } A < z_i \leq A + 1 \\
        (\frac{1}{2} a_i + a_i A) \textrm{sign} (x - \frac{b_i}{a_i}) & \text{if } A + 1 < z_i
    \end{cases}, \label{eq:fdif1} \\
    f_i'' (x) &= 
    \begin{cases}
        a_i & \text{if } z_i \leq A \\
        - a_i z_i + a_i + a_i A & \text{if } A < z_i \leq A + 1 \\
        0 & \text{if } A + 1 < z_i
    \end{cases}. \label{eq:fdif2}
\end{align}
From (\ref{eq:fdif1}-\ref{eq:fdif2}) it follows that each $f_i$ has bounded, Lipschitz-continuous gradients and Hessians. Define $V (\theta, \mathcal{T}) \equiv \theta$,  $\mathcal{L}^{in} (\theta, \phi, \mathcal{T}_i) \equiv \mathcal{L}^{out} (\theta, \phi, \mathcal{T}_i) \equiv f_i (\phi^{(1)})$ for $i \in \{ 1, 2 \}$, where $\phi^{(1)}$ denotes a first element of $\phi$, then Assumption \ref{as:liphes} is satisfied. Since $\Omega_\mathcal{T}$ is finite, Assumption \ref{as:mreg} is also satisfied.

Let $I = [\frac{b_2}{a_2} - A, \frac{b_1}{a_1} + A]$. Observe that from (\ref{eq:adef}) it follows that $\frac{b_1}{a_1}, \frac{b_2}{a_2} \in I$ and $I \subseteq [\frac{b_i}{a_i} - A, \frac{b_i}{a_i} + A]$ for $i \in \{ 1, 2 \}$, i.e. $I$ corresponds to a quadratic part of both $f_1 (x)$ and $f_2 (x)$. If $x \in I$, then for $i \in \{ 1, 2 \}$
\begin{align}
    x - \alpha f'_i (x) &= x - \alpha (a_i x - b_i) = (1 - \alpha a_i) x + \alpha b_i \nonumber \\
    &= (1 - \alpha a_i) \cdot x + \alpha a_i \cdot \frac{b_i}{a_i} \in [\min(x, \frac{b_i}{a_i}), \max(x, \frac{b_i}{a_i})] \subseteq I \label{eq:iprop}
\end{align}
since $x - \alpha f'_i (x)$ is a convex combination of $x$ and $\frac{b_i}{a_i}$ ($0 < \alpha a_i, 1 - \alpha a_i < 1$). From (\ref{eq:iprop}) and the definition of $\mathcal{L}^{in} (\theta, \phi, \mathcal{T}), \mathcal{L}^{out} (\theta, \phi, \mathcal{T})$ it follows that if $\phi_0, \dots, \phi_r$ is a rollout of inner GD (\ref{eq:maml}) for task $\mathcal{T}^{(i)}$ and $\theta^{(1)} = \phi_0^{(1)} \in I$, then $\phi_1^{(1)}, \dots, \phi_r^{(1)} \in I$ and, hence,
\begin{gather}
    \nabla_{\phi_r} \mathcal{L}^{out} (\theta, \phi_r, \mathcal{T}^{(i)})^{(1)} = f_i' (\phi_r^{(1)}) = a_i \phi_r^{(1)} - b_i, \nonumber \\
    \forall j \in \{ 1, \dots, r \}: \phi_j^{(1)} = (1 - \alpha a_i) \phi_{j - 1}^{(1)} + \alpha b_i. \label{eq:mamlcnt}
\end{gather}
From (\ref{eq:mamlcnt}) we derive that
\begin{gather}
    \phi_j^{(1)} - \frac{b_i}{a_i} = (1 - \alpha a_i) (\phi_{j - 1}^{(1)} - \frac{b_i}{a_i}), \quad \phi_r^{(1)} - \frac{b_i}{a_i} = (1 - \alpha a_i)^r (\phi_0^{(1)} - \frac{b_i}{a_i}), \nonumber \\
    \phi_r^{(1)} = (1 - \alpha a_i)^r (\phi_0^{(1)} - \frac{b_i}{a_i}) + \frac{b_i}{a_i}, \nonumber \\
    \nabla_{\phi_r} \mathcal{L}^{out} (\theta, \phi_r, \mathcal{T}^{(i)})^{(1)} = a_i \biggl( (1 - \alpha a_i)^r (\phi_0^{(1)} - \frac{b_i}{a_i}) + \frac{b_i}{a_i} \biggr) - b_i = a_i (1 - \alpha a_i)^r ( \phi_0^{(1)} - \frac{b_i}{a_i} ) . \label{eq:fomamlcnt}
\end{gather}

From (\ref{eq:step}) it follows that there exists a deterministic number $k_0 \in \mathbb{N}$ such that for all $k \geq k_0$
\begin{equation} \label{eq:k0def1}
    \gamma_k < \frac{1}{2} \min_{i \in \{ 1, 2 \} } \frac{1}{a_i (1 + \alpha a_i)^r} .
\end{equation}
If (\ref{eq:k0def1}) holds, then it also holds that
\begin{equation} \label{eq:k0def2}
    \gamma_k < \min_{i \in \{ 1, 2 \} } \frac{1}{a_i (1 + \alpha a_i)^r}, \quad \gamma_k < \min_{i \in \{ 1, 2 \} } \frac{1}{a_i (1 - \alpha a_i)^r}.
\end{equation}
For any $k \geq k_0$ the following cases are possible:
\begin{enumerate}
    \item Case 1: $\theta_{k - 1}^{(1)} \in I$. An identity (\ref{eq:fomamlcnt}) allows to write that for $i \in \{ 1, 2 \}$
\begin{equation} \label{eq:fomamloutcnt}
    \mathcal{G}_{FO} (\theta_{k - 1}, \mathcal{T}^{(i)})^{(1)} = a_i (1 - \alpha a_i)^r ( \theta_{k - 1}^{(1)} - \frac{b_i}{a_i} ).
\end{equation}
For $i \in \{ 1, 2 \}$ let random number $v_i \in {0, 1}$ denote an indicator that $\mathcal{T}_k = \mathcal{T}^{(i)}$ ($v_1 + v_2 = 1$). Then from (\ref{eq:fomamloutcnt}) we deduce that
\begin{align*}
    \theta_k^{(1)} &= \theta_{k - 1}^{(1)} - \gamma_k \sum_{i = 1}^2 v_i a_i (1 - \alpha a_i)^r ( \theta_{k - 1}^{(1)} - \frac{b_i}{a_i} ) \\
    &= (1 - \gamma_k \sum_{i = 1}^2 v_i a_i (1 - \alpha a_i)^r ) \cdot \theta_{k - 1}^{(1)} + \gamma_k v_1 a_1 (1 - \alpha a_1)^r \cdot \frac{b_1}{a_1} \\
    &+ \gamma_k v_2 a_2 (1 - \alpha a_2)^r \cdot \frac{b_2}{a_2} \\
    &\in [\min(\theta_{k - 1}^{(1)}, \frac{b_1}{a_1}, \frac{b_2}{a_2}), \max(\theta_{k - 1}^{(1)}, \frac{b_1}{a_1}, \frac{b_2}{a_2})] \subseteq I
\end{align*}
since $\theta_k^{(1)}$ is a convex combination of $\theta_{k - 1}^{(1)}, \frac{b_1}{a_1}, \frac{b_2}{a_2}$. Indeed, due to (\ref{eq:k0def2})
\begin{equation*}
    0 \leq (1 - \gamma_k \sum_{i = 1}^2 v_i a_i (1 - \alpha a_i)^r ), \gamma_k v_1 a_1 (1 - \alpha a_1)^r, \gamma_k v_2 a_2 (1 - \alpha a_2)^r \leq 1
\end{equation*}
and
\begin{equation*}
    (1 - \gamma_k \sum_{i = 1}^2 v_i a_i (1 - \alpha a_i)^r ) + \gamma_k v_1 a_1 (1 - \alpha a_1)^r + \gamma_k v_2 a_2 (1 - \alpha a_2)^r = 1.
\end{equation*}
As a result of this Case we conclude that if $k \geq k_0$ and $\theta_{k - 1}^{(1)} \in I$, then for all $k' \geq k$ it also holds that $\theta_{k'}^{(1)} \in I$.
\item Case 2: $\theta_{k - 1}^{(1)} > \frac{b_1}{a_1} + A$. From (\ref{eq:fdif2}) observe that for $i \in \{ 1, 2 \}$ and any $x \in \mathbb{R}$ $f''_i (x) \leq a_i$. Hence, $f'_i$'s Lipschitz constant is $a_i$. Let $\phi_0, \dots, \phi_r$ and $\overline{\phi}_0, \dots, \overline{\phi}_r$ be two inner-GD (\ref{eq:maml}) rollouts for task $\mathcal{T}^{(i)}$ and $\phi_0^{(1)} > \overline{\phi}_0^{(1)}$. For $j \in \{ 1, \dots, r \}$ suppose that $\phi_{j - 1}^{(1)} > \overline{\phi}_{j - 1}^{(1)}$. Then
\begin{align*}
    \phi_j^{(1)} - \overline{\phi}_j^{(1)} &= \phi_{j - 1}^{(1)} - \overline{\phi}_{j - 1}^{(1)} - \alpha (f_i' (\phi_{j - 1}^{(1)}) - f_i'(\overline{\phi}_{j - 1}^{(1)})) \\
    &\geq \phi_{j - 1}^{(1)} - \overline{\phi}_{j - 1}^{(1)} - \alpha | f_i' (\phi_{j - 1}^{(1)}) - f_i'(\overline{\phi}_{j - 1}^{(1)}) | \\
    &\geq \phi_{j - 1}^{(1)} - \overline{\phi}_{j - 1}^{(1)} - \alpha a_i | \phi_{j - 1}^{(1)} - \overline{\phi}_{j - 1}^{(1)} | \\
    &> \phi_{j - 1}^{(1)} - \overline{\phi}_{j - 1}^{(1)} - | \phi_{j - 1}^{(1)} - \overline{\phi}_{j - 1}^{(1)} | \\
    &= 0
\end{align*}
or $\phi_j^{(1)} > \overline{\phi}_j^{(1)}$ where we use Lipschitz continuity of $f'_i$ and that $\alpha a_i < 1$ by the choice of $a_1, a_2$. Therefore, since $\phi_0^{(1)} > \overline{\phi}_0^{(1)}$, $\phi_1^{(1)} > \overline{\phi}_1^{(1)}$ and so on, eventually $\phi_r^{(1)} > \overline{\phi}_r^{(1)}$. Observe that $f_i' (x)$ is a strictly monotonously increasing function, therefore $f_i' (\phi_r^{(1)}) > f_i' (\overline{\phi}_r^{(1)})$. To sum up:
\begin{equation} \label{eq:mon}
    f_i' (\phi_r^{(1)}) > f_i' (\overline{\phi}_r^{(1)}) \quad \text{when } \phi_0^{(1)} > \overline{\phi}_0^{(1)}.
\end{equation}

Set $\overline{\phi}_0^{(1)} = \frac{b_i}{a_i}$, then $f'(\overline{\phi}_{j - 1}^{(1)}) = 0$ and $\overline{\phi}_1^{(1)} = \overline{\phi}_0^{(1)} - \alpha \cdot 0 = \overline{\phi}_0^{(1)}$ and so on, eventually $\overline{\phi}_r^{(1)} = \frac{b_i}{a_i}$ and $f'(\overline{\phi}_r^{(1)}) = 0$. Therefore, if $\phi_0^{(1)} = \frac{b_1}{a_1} + A > \max ( \frac{b_1}{a_1}, \frac{b_2}{a_2} )$ then $f_i' (\phi_r^{(1)}) > f_i' (\overline{\phi}_r^{(1)}) = 0$. For $i \in \{ 1, 2 \}$ denote a deterministic value of $f_i' (\phi_r^{(1)})$ by $B_i  > 0$. By setting $\phi_0^{(1)} = \theta_{k - 1}^{(1)}, \overline{\phi}_0^{(1)} = \frac{b_1}{a_1} + A$ and using (\ref{eq:mon}) we obtain:
\begin{equation} \label{eq:glb}
    \mathcal{G}_{FO} (\theta_{k - 1}, \mathcal{T}_i)^{(1)} = f'_i (\phi_r^{(1)}) > f'_i (\overline{\phi}_r^{(1)}) = B_i \geq B > 0 .
\end{equation}
where we denote $B = \min (B_1, B_2)$. 

In addition, set $\phi_0^{(1)} = \theta_{k - 1}^{(1)}, \overline{\phi}_0^{(1)} = \frac{b_i}{a_i}$. Then
\begin{align*}
    \mathcal{G}_{FO} (\theta_{k - 1}, \mathcal{T}^{(i)})^{(1)} &= | \mathcal{G}_{FO} (\theta_{k - 1}, \mathcal{T}^{(i)})^{(1)} | = | f'_i (\phi_r^{(1)}) - 0 | = | f'_i (\phi_r^{(1)}) - f'_i (\overline{\phi}_r^{(1)}) | \\
    &\leq a_i | \phi_r^{(1)} - \overline{\phi}_r^{(1)} | = a_i | \phi_{r - 1}^{(1)} - \overline{\phi}_{r - 1}^{(1)} - \alpha ( f'_i (\phi_{r - 1}^{(1)}) - f'_i (\overline{\phi}_{r - 1}^{(1)}) ) | \\
    &\leq a_i | \phi_{r - 1}^{(1)} - \overline{\phi}_{r - 1}^{(1)} | + \alpha a_i | f'_i (\phi_{r - 1}^{(1)}) - f'_i (\overline{\phi}_{r - 1}^{(1)}) | \\
    &\leq a_i (1 + \alpha a_i) | \phi_{r - 1}^{(1)} - \overline{\phi}_{r - 1}^{(1)} | \\
    &\dots \\
    &\leq a_i (1 + \alpha a_i)^r | \phi_0^{(1)} - \overline{\phi}_0^{(1)} | \\
    &= a_i (1 + \alpha a_i)^r | \theta_{k - 1}^{(1)} - \frac{b_i}{a_i} |.
\end{align*}
Since $\theta_{k - 1}^{(1)} > \frac{b_1}{a_1} + A > \max (\frac{b_1}{a_1}, \frac{b_2}{a_2})$, we derive that
\begin{align*}
    \mathcal{G}_{FO} (\theta_{k - 1}, \mathcal{T}^{(i)})^{(1)} &\leq a_i (1 + \alpha a_i)^r (\theta_{k - 1}^{(1)} - \frac{b_i}{a_i}) \leq \frac{1}{\gamma_k} (\theta_{k - 1}^{(1)} - \frac{b_i}{a_i}) \\
    &\leq \max_{i' \in \{ 1, 2 \}} \frac{1}{\gamma_k} (\theta_{k - 1}^{(1)} - \frac{b_{i'}}{a_{i'}}) = \frac{1}{\gamma_k} (\theta_{k - 1}^{(1)} - \min_{i' \in \{ 1, 2 \}} \frac{b_{i'}}{a_{i'}}) \\
    &= \frac{1}{\gamma_k} (\theta_{k - 1}^{(1)} - \frac{b_1}{a_1})
\end{align*}
where we use (\ref{eq:k0def2}) and the fact that $\frac{b_1}{a_1} = 0, \frac{b_2}{a_2} > 0$. Next, we deduce that
\begin{equation} \label{eq:b1}
    \theta_k^{(1)} = \theta_{k - 1}^{(1)} - \gamma_k \mathcal{G}_{FO} (\theta_{k - 1}, \mathcal{T}_k)^{(1)} \geq \theta_{k - 1}^{(1)} - \frac{\gamma_k}{\gamma_k} (\theta_{k - 1}^{(1)} - \frac{b_1}{a_1}) = \frac{b_1}{a_1}.
\end{equation}
On the other hand, from (\ref{eq:glb})
\begin{equation*}
    \mathcal{G}_{FO} (\theta_{k - 1}, \mathcal{T}_k)^{(1)} > B
\end{equation*}
and
\begin{equation} \label{eq:b2}
    \theta_k^{(1)} = \theta_{k - 1}^{(1)} - \gamma_k \mathcal{G}_{FO} (\theta_{k - 1}, \mathcal{T}_k)^{(1)} < \theta_{k - 1}^{(1)} - \gamma_k B .
\end{equation}
According to (\ref{eq:step}) there exists a number $k_1 > k$ such that
\begin{equation} \label{eq:k1def}
    \sum_{k' = k}^{k_1 - 1} \gamma_{k'} > \frac{1}{B} (\theta^{(1)}_{k - 1} - \frac{b_1}{a_1}).
\end{equation}
In addition, let $k_1$ be a minimal such number. Suppose that for all $k \leq k' \leq k_1$ $\theta_{k' - 1}^{(1)} > \frac{b_1}{a_1} + A$. Then by applying bound (\ref{eq:b2}) for all $k = k'$ we obtain that
\begin{equation*}
    \theta_{k_1}^{(1)} < \theta_{k_1 - 1}^{(1)} - \gamma_{k_1} B < \dots < \theta_{k - 1}^{(1)} - B \sum_{k' = k}^{k_1} \gamma_{k'} < \frac{b_1}{a_1}
\end{equation*}
which is a contradiction with the bound (\ref{eq:b1}) applied to $k = k_1$. Therefore, there exists $k \leq k' < k_1$ such that $\theta_{k'}^{(1)} \leq \frac{b_1}{a_1} + A$. Then there exists a number
\begin{equation} \label{eq:k2def}
    k_2 = \min_{k \leq k' < k_1, \theta_{k'}^{(1)} \leq \frac{b_1}{a_1} + A} k' .
\end{equation}
Hence, $\theta_{k_2 - 1}^{(1)} > \frac{b_1}{a_1} + A$ and by applying bound (\ref{eq:b1}) to $k = k_2$ we conclude that $\theta_{k_2}^{(1)} \geq \frac{b_1}{a_1}$. Averall:
\begin{equation*}
    \theta_{k_2}^{(1)} \in [\frac{b_1}{a_1}, \frac{b_1}{a_1} + A] \subseteq I.
\end{equation*}
As shown in Case 1, for all $k' > k_2$ (including $k_1$) it also holds that $\theta_{k'}^{(1)} \in I$. To summarize, we have proven that there exists a deterministic number $B > 0$ such that for $k_1$ defined by (\ref{eq:k1def}) $\theta_{k'}^{(1)} \in I$ for all $k' \geq k_1$.

\item Case 3: $\theta_{k - 1}^{(1)} < \frac{b_2}{a_2} - A$. Using a symmetric argument as in Case 2 it can be shown that there exists a deterministic number $C > 0$ so that the following holds. According to (\ref{eq:step}) there exists $k_3 \geq k$ such that
\begin{equation} \label{eq:k3def}
    \sum_{k' = k}^{k_3 - 1} \gamma_{k'} > \frac{1}{C} (\frac{b_2}{a_2} - \theta^{(1)}_{k - 1}).
\end{equation}
In addition, let $k_3$ be a minimal such number. Then $\theta_{k'}^{(1)} \in I$ for all $k' \geq k_3$.
\end{enumerate}

Since $p(\mathcal{T})$ is a discrete distribution, there only exists a finite number of outcomes for a set of random variables $\{ \mathcal{T}_k \}_{k < k_0}$. Therefore, there is only a finite set of possible outcomes of the $\theta_{k_0 - 1}^{(1)}$ random variable. Consequently, there exists a deterministic number $E > 0$ such that $| \theta_{k_0 - 1}^{(1)} | < E$. According to (\ref{eq:step}) there exist deterministic numbers $k_4, k_5 \geq k_0$ such that
\begin{equation} \label{eq:k45def}
    \sum_{k' = k_0}^{k_4 - 1} \gamma_{k'} > \frac{1}{B} (E - \frac{b_1}{a_1}), \quad \sum_{k' = k_0}^{k_5 - 1} \gamma_{k'} > \frac{1}{C} (\frac{b_2}{a_2} + E).
\end{equation}
and let $k_6 = \max (k_4, k_5)$, which is also a deterministic number. Let $k_1, k_3$ be random numbers from Cases 2, 3 applied to $k = k_0$. Then from (\ref{eq:k45def}) and $E$'s definition, it follows that $k_1, k_3 \leq k_6$. In addition, $k_0 \leq k_6$ from $k_6$'s definition. As a result of all cases, we conclude that for any $k \geq k_6$ $\phi_k^{(1)} \in I$.

Denote
\begin{gather*}
    a^* = \frac{1}{2} ( a_1 (1 - \alpha a_1)^r + a_2 (1 - \alpha a_2)^r ), \quad b^* = \frac{1}{2} (b_1 (1 - \alpha a_1)^r + b_2 (1 - \alpha a_2)^r ),  \quad x^* = \frac{b^*}{a^*}
\end{gather*}
and consider arbitrary $k > k_6$. Denote $\overline{\mathcal{G}} = \mathcal{G}_{FO} (\theta_{k - 1}, \mathcal{T}_k)$ and let $\mathcal{F}_k$ be a $\sigma$-algebra populated by $\{ \mathcal{T}_\kappa \}, \kappa < k$. From Equation (\ref{eq:fomamloutcnt}) we conclude that
\begin{equation*}
    \E{ \overline{\mathcal{G}}^{(1)} | \mathcal{F}_k } = a^* \theta_{k - 1}^{(1)} - b^*
\end{equation*}
Outer-loop update leads to an expression:
\begin{equation*}
    \theta_k^{(1)} = \theta_{k - 1}^{(1)} - \gamma_k \overline{\mathcal{G}}^{(1)}.
\end{equation*}
Subtract $x^*$:
\begin{equation*}
    \theta_k^{(1)} - x^* = \theta_{k - 1}^{(1)} - x^* - \gamma_k \overline{\mathcal{G}}^{(1)}.
\end{equation*}
Take a square:
\begin{align*}
    ( \theta_k^{(1)} - x^* )^2 &= ( \theta_{k - 1}^{(1)} - x^* - \gamma_k \overline{\mathcal{G}}^{(1)} )^2
    &= (\theta_{k - 1}^{(1)} - x^*)^2 - 2 \gamma_k (\theta_{k - 1}^{(1)} - x^*) \overline{\mathcal{G}}^{(1)} + \gamma_k^2 \overline{\mathcal{G}}^{(1) 2} .
\end{align*}
Take expectation conditioned on $\mathcal{F}_k$:
\begin{align*}
    \E{ ( \theta_k^{(1)} - x^* )^2 | \mathcal{F}_k} &= ( \theta_{k - 1}^{(1)} - x^* )^2 - 2 \gamma_k (\theta_{k - 1}^{(1)} - x^*) \E{ \overline{\mathcal{G}}^{(1)} | \mathcal{F}_k} + \gamma_k^2 \E {  \left(\overline{\mathcal{G}}^{(1)}\right)^{2} | \mathcal{F}_k} \\
    &= ( \theta_{k - 1}^{(1)} - x^* )^2 - 2 \gamma_k (\theta_{k - 1}^{(1)} - x^*) (a^* \theta_{k - 1}^{(1)} - b^*) + \gamma_k^2 \E { \left(\overline{\mathcal{G}}^{(1)} \right)^{2} | \mathcal{F}_k} \\
    &= ( \theta_{k - 1}^{(1)} - x^* )^2 - 2 \gamma_k a^* (\theta_{k - 1}^{(1)} - x^*) ( \theta_{k - 1}^{(1)} - \frac{b^*}{a^*}) + \gamma_k^2 \E { \left(\overline{\mathcal{G}}^{(1)}\right)^{2} | \mathcal{F}_k} \\
    &= (1 - 2 \gamma_k a^*) ( \theta_{k - 1}^{(1)} - x^* )^2 + \gamma_k^2 \E{ \left(\overline{\mathcal{G}}^{(1)}\right)^{2} | \mathcal{F}_k}.
\end{align*}
For $i \in \{ 1, 2 \}$, $\mathcal{G}_{FO} (\theta_{k - 1}, \mathcal{T}^{(i)})^{(1)}$ depends linearly on $\theta_{k - 1}^{(1)}$ (\ref{eq:fomamloutcnt}) and, therefore, is bounded on $\theta_{k - 1}^{(1)} \in I$. Hence, $\overline{\mathcal{G}}^{(1) 2}$ is also bounded by a deterministic number $F > 0$: $\overline{\mathcal{G}}^{(1) 2} < F$. Then:
\begin{equation*}
    \E{ ( \theta_k^{(1)} - x^* )^2 | \mathcal{F}_k} \leq (1 - 2 \gamma_k a^*) ( \theta_{k - 1}^{(1)} - x^* )^2 + \gamma_k^2 F.
\end{equation*}
Take a full expectation:
\begin{equation*}
    \E{( \theta_k^{(1)} - x^* )^2} \leq (1 - 2 \gamma_k a^*) \mathbb{E} ( \theta_{k - 1}^{(1)} - x^* )^2 + \gamma_k^2 F,
\end{equation*}
and denote $y_k = \E{ ( \theta_k^{(1)} - x^* )^2}$:
\begin{equation} \label{eq:yineq}
    y_k \leq (1 - 2 \gamma_k a^*) y_{k - 1} + \gamma_k^2 F.
\end{equation}
Now, we prove that $\lim_{k \to \infty} y_k = 0$. Indeed, consider arbitrary $\epsilon > 0$. According to (\ref{eq:step}) there exists $k_\epsilon > k_6$ such that $\forall k' \geq k_\epsilon : \gamma_{k'} \leq \frac{a^* \epsilon}{F}$. As a result of (\ref{eq:yineq}) for every $k \geq k_\epsilon$ it holds
\begin{equation*}
    y_k \leq (1 - 2 \gamma_k a^*) y_{k - 1} + \gamma_k^2 F \leq (1 - 2 \gamma_k a^*) y_{k - 1} + \gamma_k a^* \epsilon.
\end{equation*}
Subtract $\frac{\epsilon}{2}$:
\begin{equation} \label{eq:yineq2}
    y_k - \frac{\epsilon}{2} \leq (1 - 2 \gamma_k a^*) ( y_{k - 1} - \frac{\epsilon}{2} ) .
\end{equation}
Observe that by (\ref{eq:k0def1}), $a^*$'s definition and since $k \geq k_0$ it holds that $1 - 2 \gamma_k a^* > 0$. Therefore and since (\ref{eq:yineq2}) holds for all $k \geq k_\epsilon$, it can be written that for all $k \geq k_\epsilon$
\begin{equation*}
    y_k - \frac{\epsilon}{2} \leq \biggl( \prod_{k' = k_\epsilon}^k (1 - 2 \gamma_{k'} a^*) \biggr) ( y_{k_\epsilon - 1} - \frac{\epsilon}{2} ) \leq \biggl( \prod_{k' = k_\epsilon}^k (1 - 2 \gamma_{k'} a^*) \biggr) | y_{k_\epsilon - 1} - \frac{\epsilon}{2} | .
\end{equation*}
We use inequality $1 - x \leq \exp(- x)$ to deduce that
\begin{align}
    y_k - \frac{\epsilon}{2} &\leq \biggl( \prod_{k' = k_\epsilon}^k (1 - 2 \gamma_{k'} a^*) \biggr) | y_{k_\epsilon - 1} - \frac{\epsilon}{2} | \nonumber \\
    &\leq \exp \biggl( - 2 a^* \sum_{k' = k_\epsilon}^k \gamma_{k'} \biggr) | y_{k_\epsilon - 1} - \frac{\epsilon}{2} | . \label{eq:yineq3}
\end{align}
If $ | y_{k_\epsilon - 1} - \frac{\epsilon}{2} | = 0$, then from (\ref{eq:yineq3}) it follows that $y_k \leq 0 + \frac{\epsilon}{2} < \epsilon$ for all $k \geq k_\epsilon$. Otherwise, from (\ref{eq:step}) there exists $k_\epsilon'$ such that $\sum_{k' = k_\epsilon}^{k_\epsilon'} \gamma_{k'} > \frac{\log | y_{k_\epsilon - 1} - \frac{\epsilon}{2} | - \log \frac{\epsilon}{2} }{2 a^*}$. Then from (\ref{eq:yineq3}) it follows that for all $k \geq k_\epsilon'$ $y_k - \frac{\epsilon}{2} < \frac{\epsilon}{2}$ or $y_k < \epsilon$. Since $y_k \geq 0$ by definition, we have proven that $\lim_{k \to \infty} y_k = 0$, or
\begin{equation} \label{eq:lim}
    \lim_{k \to \infty} \E {(\theta_k^{(1)} - x^* )^2} = 0 .
\end{equation}
Again, let $k > k_6$. Let $\phi_0 = \theta_k, \dots, \phi_r$ be a rollout (\ref{eq:maml}) of inner GD for task $\mathcal{T}^{(i)}$. Then according to (\ref{eq:gradfirst}-\ref{eq:gradlast})
\begin{equation*}
    \nabla_{\theta_k} \mathcal{L}^{out} (\theta_k, U^{(r)} (\theta_k, \mathcal{T}^{(i)}), \mathcal{T}^{(i)})^{(1)} = \mathcal{G}_{FO} (\theta_k, \mathcal{T}^{(i)})^{(1)} \prod_{j = 0}^{r - 1} (1 - \alpha f_i'' (\phi_j^{(1)})) .
\end{equation*}
From (\ref{eq:iprop}) it follows that $f_i'' (\phi_j^{(1)}) = a_i$ for $j \in \{ 0, \dots, r - 1 \}$. Moreover, we use (\ref{eq:fomamloutcnt}) to obtain that
\begin{equation*}
    \nabla_{\theta_k} \mathcal{L}^{out} (\theta_k, U^{(r)} (\theta_k, \mathcal{T}^{(i)}), \mathcal{T}^{(i)})^{(1)} = a_i (1 - \alpha a_i)^{2 r} ( \theta_k^{(1)} - \frac{b_i}{a_i} )
\end{equation*}
and
\begin{align*}
    \frac{\partial}{\partial \theta} \mathcal{M}^{(r)} (\theta_k)^{(1)} &= \mathbb{E}_{p(\mathcal{T})} \left[\nabla_{\theta_k} \mathcal{L}^{out} (\theta_k, U^{(r)} (\theta_k, \mathcal{T}^{(i)}), \mathcal{T}^{(i)})^{(1)} \right] \\
    &= \frac{1}{2} \biggl( a_1 (1 - \alpha a_1)^{2 r} ( \theta_k^{(1)} - \frac{b_1}{a_1} ) + a_2 (1 - \alpha a_2)^{2 r} ( \theta_k^{(1)} - \frac{b_2}{a_2} ) \biggr) \\
    &= \widehat{a} \theta_k^{(1)} - \widehat{b}
\end{align*}
where
\begin{equation*}
    \widehat{a} = \frac{1}{2} (a_1 (1 - \alpha a_1)^{2 r} + a_2 (1 - \alpha a_2)^{2 r}), \quad \widehat{b} = \frac{1}{2}(b_1 (1 - \alpha a_1)^{2 r} + b_2 (1 - \alpha a_2)^{2 r}).
\end{equation*}
Notice that since $b_1 = 0$ and $b_2$ is defined by (\ref{eq:b2def}), it appears that $| \widehat{a} x^* - \widehat{b} | = \sqrt{2 D}$, or $(\widehat{a} x^* - \widehat{b})^2 = 2 D$. Multiply (\ref{eq:lim}) by $\widehat{a}$ to obtain that
\begin{align}
    \lim_{k \to \infty} \mathbb{E} \Big[ (\widehat{a} \theta_k^{(1)} &- \widehat{a} x^* )^2 \Big] = 0, \\
    \lim_{k \to \infty} \mathbb{E} \Big[( \widehat{a} \theta_k^{(1)} - \widehat{b} &- (\widehat{a} x^* - \widehat{b}))^2 \Big] = 0, \nonumber \\
    \lim_{k \to \infty} \mathbb{E} \Big[ ( \frac{\partial}{\partial \theta} \mathcal{M}^{(r)} (\theta_k)^{(1)} &- (\widehat{a} x^* - \widehat{b}))^2 \Big]  = 0 . \label{eq:lim2}
\end{align}
For each $k \geq 1$ by Jensen's inequality :
\begin{equation*}
     \left( \E {\frac{\partial}{\partial \theta} \mathcal{M}^{(r)} (\theta_k)^{(1)}} - (\widehat{a} x^* - \widehat{b}) \right)^2 \leq  \E{ (\frac{\partial}{\partial \theta} \mathcal{M}^{(r)} (\theta_k)^{(1)} - (\widehat{a} x^* - \widehat{b}))^2}.
\end{equation*}
Hence,
\begin{equation*}
    \lim_{k \to \infty} \left( \E {\frac{\partial}{\partial \theta} \mathcal{M}^{(r)} (\theta_k)^{(1)} } - (\widehat{a} x^* - \widehat{b})) \right)^2 = 0, \quad \lim_{k \to \infty} \E{\frac{\partial}{\partial \theta} \mathcal{M}^{(r)} (\theta_k)^{(1)}} = (\widehat{a} x^* - \widehat{b})
\end{equation*}
and by expanding (\ref{eq:lim2}) we derive that
\begin{equation*}
    \lim_{k \to \infty} \E{ \left(\frac{\partial}{\partial \theta} \mathcal{M}^{(r)} (\theta_k)^{(1)}\right)^{2} } = 2 (\widehat{a} x^* - \widehat{b}) \lim_{k \to \infty} \E{\frac{\partial}{\partial \theta} \mathcal{M}^{(r)} (\theta_k)^{(1)}} - (\widehat{a} x^* - \widehat{b})^2 = (\widehat{a} x^* - \widehat{b})^2 = 2 D.
\end{equation*}
We conclude the proof by observing that
\begin{equation*}
    \liminf_{k \to \infty}\E{ \| \frac{\partial}{\partial \theta} \mathcal{M}^{(r)} (\theta_k)\|^2_2 } =  \lim_{k \to \infty} \E{\| \frac{\partial}{\partial \theta}\mathcal{M}^{(r)} (\theta_k) \|^2_2} = \lim_{k \to \infty} \E{ \left(\frac{\partial}{\partial \theta} \mathcal{M}^{(r)} (\theta_k)^{(1)} \right)^{2} } = 2 D > D.
\end{equation*}
\end{proof}

\section{Optimal Choice of $q$} \label{sec:qchoice}

The derivative of (\ref{eq:time}) has the form
\begin{align}
    g (q) &= \frac{2}{2 \epsilon - 1} ((1 / q - 1) \mathbb{D}^2_{true} + \mathbb{V}^2_{true})^{\frac{1 + 2 \epsilon}{1 - 2 \epsilon}} q^{-2} \mathbb{D}^2_{true} (C_{det} + C_{rnd} q) +  \left((1/q - 1) \mathbb{D}^2_{true} + \mathbb{V}^2_{true} \right)^{\frac{2}{1 - 2 \epsilon}} C_{rnd} \nonumber \\
    &= q^{-2} ((1 / q - 1) \mathbb{D}^2_{true} + \mathbb{V}^2_{true})^{\frac{1 + 2 \epsilon}{1 - 2 \epsilon}} \biggl( \frac{2}{2 \epsilon - 1} \mathbb{D}^2_{true} (C_{det} + C_{rnd} q) + C_{rnd} (\mathbb{D}^2_{true} + q (\mathbb{V}^2_{true} - \mathbb{D}^2_{true})) q \biggr) \nonumber \\
    &= q^{-2} ((1 / q - 1) \mathbb{D}^2_{true} + \mathbb{V}^2_{true})^{\frac{1 + 2 \epsilon}{1 - 2 \epsilon}} \biggl( C_{rnd} (\mathbb{V}^2_{true} - \mathbb{D}^2_{true}) q^2 + \frac{2 \epsilon + 1}{2 \epsilon - 1} \mathbb{D}^2_{true} C_{rnd} q + \frac{2}{2 \epsilon - 1} \mathbb{D}^2_{true} C_{det} \biggr) . \label{eq:quadreq}
\end{align}

We further deduce that
\begin{equation*}
    g (1) = ( \mathbb{V}^2_{true} )^{\frac{1 + 2 \epsilon}{1 - 2 \epsilon}} \biggl( C_{rnd} \mathbb{V}^2_{true} + \frac{2}{2 \epsilon - 1} (C_{det} + C_{rnd}) \mathbb{D}^2_{true} \biggr) .
\end{equation*}
Assuming that $\mathbb{V}^2_{true} > 0$, we conclude that $g(1) > 0$ iff
\begin{equation}
    \mathbb{D}_{true}^2 < \frac{C_{rnd}}{ \frac{2}{1 - 2 \epsilon} (C_{det} + C_{rnd})} \mathbb{V}^2_{true} . \label{eq:cond22}
\end{equation}
$g(q)$ is differentiable on $(0, + \infty)$. Let $q^*$ be the value corresponding to the minimum of (\ref{eq:time}) on $(0, 1]$. $q^*$ exists, since (\ref{eq:time}) approaches $+ \infty$ when $q^* \to 0$. $g(1) > 0$ indicates that $q^*$ is not equal to $1$, i.e. we obtain a tigher upper bound using UFOM rather than exact gradients. Further, solving $g (q) = 0$ reduces to solving a quadratic equation induced by the polynomial inside big brackets of (\ref{eq:quadreq}):
\begin{equation*}
    \mathrm{poly} (q) = C_{rnd} (\mathbb{V}^2_{true} - \mathbb{D}^2_{true}) q^2 + \frac{2 \epsilon + 1}{2 \epsilon - 1} \mathbb{D}^2_{true} C_{rnd} q + \frac{2}{2 \epsilon - 1} \mathbb{D}^2_{true} C_{det} = 0 \label{eq:quadr}
\end{equation*}

Notice, that if $\epsilon < \frac12$ and (\ref{eq:cond22}) is satisfied, then $\mathrm{poly} (1) > 0$ and $\mathrm{poly} (0) < 0$. Hence, from the continuity of $\mathrm{poly} (q)$, it follows that there is an odd number of roots of $\mathrm{poly} (q)$ on $(0, 1)$. Since the quadratic equation has at most 2 roots, we conclude that there is a single root on $(0, 1)$. Since (\ref{eq:time}) is differentiable on $(0, 1]$, $q = 1$ is not a local minimum of (\ref{eq:time}) on $(0, 1]$ and $\lim_{q \to +0} g(q) = + \infty$, we conclude that this single root is a minimum of (\ref{eq:time}) on $(0, 1]$.

\end{document}